\def\showauthornotes{0}
\newcommand{\Authornote}[2]{{\sf\small\color{red}{[#1: #2]}}}
\newcommand{\Authornote}[2]{}
\newcommand{\eps}{\varepsilon}
\renewcommand{\epsilon}{\varepsilon}
\newcommand{\calL}{\mathcal{L}}
\newcommand{\ai}{i}
\newcommand{\aj}{j}
\def \be{\begin{equs}}
\def \ee{\end{equs}}
\newtheorem{theorem}{Theorem}[section]
\newtheorem{lemma}[theorem]{Lemma}
\newtheorem{definition}[theorem]{Definition}
\newtheorem{assumption}[theorem]{Assumption}
\newtheorem{corollary}[theorem]{Corollary}
\newtheorem{proposition}[theorem]{Proposition}
\newtheorem*{theorem*}{Theorem}
\begin{document}

\title{\bf A Convergent and Dimension-Independent Min-Max Optimization Algorithm}

\author{
  Vijay Keswani \\
  Yale University
  \and
  Oren Mangoubi \\
  WPI
  \and
  Sushant Sachdeva \\
  University of Toronto
  \and
  Nisheeth K. Vishnoi \\
  Yale University
}

\maketitle

\begin{abstract}
We study a variant of a recently introduced min-max optimization framework 
where the max-player is constrained to update its parameters in a greedy manner until it reaches a first-order stationary point. 
Our equilibrium definition for this framework depends on a proposal distribution which the min-player uses to choose directions in which to update its parameters.
We show that, given a smooth and bounded nonconvex-nonconcave objective function, access to any proposal distribution for the min-player’s updates, and stochastic gradient oracle for the max-player, our algorithm converges to the aforementioned approximate local equilibrium in a number of iterations that does not depend on the dimension. 
The equilibrium point found by our algorithm depends on the proposal distribution, and when 
applying our algorithm to train GANs we choose the proposal distribution to be a distribution of stochastic gradients.
We empirically evaluate our algorithm on challenging nonconvex-nonconcave test-functions and loss functions arising in GAN training. 
Our algorithm converges on these test functions and, when used to train GANs, trains stably on synthetic and real-world datasets and avoids mode collapse.

\end{abstract}

\clearpage
\tableofcontents
\clearpage

\newcommand{\gen}{G}
\newcommand{\disc}{D} \newcommand{\loss}{\ensuremath{\calL}}

\section{Introduction} \label{sec_intro}

For a loss function $f: \mathcal{X} \times \mathcal{Y} \rightarrow \mathbb{R}$ on some (convex) domain $\mathcal{X} \times \mathcal{Y} \subseteq \mathbb{R}^d \times \mathbb{R}^d$, we consider:
 \begin{equation} 
 \label{eq_global_minmax_intro}
 { \min_{x \in \mathcal{X}} \max_{y \in \mathcal{Y}} f(x,y)}. 
 \end{equation}
This min-max optimization problem has several applications to machine learning, including GANs \citep{Goodfellow2014generative} and adversarial training \citep{madry2017towards}.
In many of these applications, only first-order access to $f$ is available efficiently, and gradient-based algorithms are widely used.

Unfortunately, there is a lack of gradient-based algorithms with convergence guarantees for this min-max framework if one allows for loss functions $f(x,y)$ which are nonconvex and nonconcave in $x$ and $y$ respectively.
This lack of convergence guarantees can be a serious problem in practice, since popular algorithms such as gradient descent-ascent (GDA) oftentimes fail to converge, and GANs trained with these algorithms can suffer from issues such as cycling~\citep{Arjovsky2017principled} and ``mode collapse''~\citep{Dumoulin2017inference, Che2017mode, santurkar2018classification}.

Since min-max optimization includes minimization (and maximization) as special cases, it is intractable for general nonconvex-nonconcave functions.
Motivated by the success of a long line of results which show efficient convergence of minimization algorithms to various (approximate) local minimum notions (e.g., \cite{nesterov2006cubic, ge2015escaping, agarwal2017finding}), 
 previous works have sought to extend these ideas of local minima to various (approximate) notions of local min-max point--that is, a point $(x^\star, y^\star)$ where $x^\star$ is a local minimum of $f(\cdot, y^\star)$ and $y^\star$ is a local maximum of $f(x^\star, \cdot)$-- in the hope that this will allow for algorithms with convergence guarantees to such points.
 Unfortunately, to prove convergence, these works (e.g., \citet{nemirovski2004prox, lu2019hybrid}) make strong assumptions on $f$, e.g. assume $f(x,y)$ is concave in $y$, or that their algorithm is given a starting point such that its underlying dynamical system converges (\citet{heusel2017gans, mescheder2017numerics}). 
 It is a challenge to develop gradient-based algorithms which converge efficiently to an equilibrium for even a local variant of the min-max framework under less restrictive  assumptions comparable to those required for
 convergence of  algorithms to local minima.

\paragraph{Our contributions.}
We study a variant of the min-max framework which allows the max-player to update $y$ in a ``greedy" manner (Section \ref{sec_framework}).
This greedy restriction
models first-order maximization algorithms such as gradient ascent, popular in machine learning applications, which can make updates far from the current value of $y$ when run for multiple steps.
Roughly, from the current point $(x,y)$, our framework allows the max-player to update $y$ along {\em any} continuous path, along which the loss $f(x,\cdot)$ is nondecreasing.

         Our main contribution is a new gradient-based algorithm (Algorithm \ref{alg:Informal}) that provably converges from any initial point to an approximate local equilibrium for this framework (Definition \ref{def_greedy_minmax}).
         Our approximate equilibrium definition depends on the choice of proposal distribution $Q_{x,y}$, parametrized by $(x,y)$, which the min-player uses to update its parameters at any given point $(x,y)$.
         In particular, for a $b$-bounded function $f$ with ${L}$-Lipschitz gradient,
         and $\epsilon \geq  0$, our algorithm requires $\mathrm{poly}(b,L, 1/\epsilon)$ gradient and function oracle calls to converge to an $(\epsilon, Q)$-approximate local equilibrium (Theorem \ref{thm:GreedyMinimax-main}).
         The number of oracle calls required by our algorithm is independent of the dimension $d$.
         Gradient-based algorithms that converge in (almost) dimension independent number of iterations are required for machine-learning applications where the dimension $d,$ equal to the number of trainable parameters, can be very large.

   The equilibrium point our algorithm converges to depends on the choice of proposal distribution $Q$.
   In the special case when $f(x,y)$ is strongly convex in $x$ and strongly concave in $y$, there is a choice of proposal distribution $Q$-- the (deterministic or stochastic) gradients with mean $-\nabla_x f$--for which our $(\eps, Q)$-equilibrium corresponds to  an (approximate) global min-max point with duality gap roughly $O(\eps)$ (Theorem \ref{thm_strongly_convex} in the appendix).
   Our algorithm can find such a point in time that is polynomial in $\frac{1}{\eps}$ and independent of dimension (Corollary \ref{cor_runtime_strongly_convex}).
   This motivates using (stochastic) gradients for proposal distributions in more general settings as well, and, when training GANs we choose the proposal distribution to be the distribution of stochastic gradients.

           Empirically, we show that our algorithm converges on   test functions \citep{wang2019solving} on which other popular gradient-based min-max optimization algorithms such as  GDA and optimistic mirror descent (OMD) \citep{Daskalakis2018optimism} are known to either diverge or cycle (Figure \ref{fig_intro}, see also Figure \ref{figure_toy_function} in Section \ref{sec_test_functions}).
         We also show that a practical variant of our algorithm can be employed 
          for training GANs, with a per-step complexity and memory requirement similar to GDA.
        We observe that our algorithm consistently learns a greater number of modes than GDA, OMD, and unrolled GANs (Table \ref{tbl:gaussian_results}),
        when applied to training GANs on a  Gaussian mixture dataset.
        While not the focus of this paper, we also provide results for our algorithm on the real-world datasets in the Supplementary Material.

         \begin{figure}
         \centering
        \includegraphics[width=0.5\linewidth]{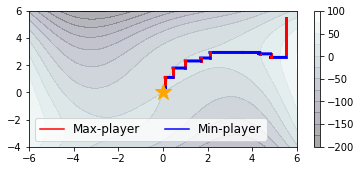}
         \caption{\small Our algorithm applied to the function $f(x,y) = (4x^2 -(y-3x + 0.05x^3)^2-0.1y^4)e^{-0.01(x^2+y^2)}$ with global min-max point $(x,y) = (0,0)$ (yellow star). 
         Our algorithm's max-player (red segments) first finds a point where $f(x,\cdot)$ is maximized. 
         The min-player then proposes random updates to $x$, and only accepts those updates which lead to a decrease in the value of $f(x,y)$ after the max-player's response is taken into account (blue segments). 
         This allows our algorithm to converge to $(0,0)$.
         This function is considered as a challenging test function in \cite{wang2019solving}, who show several first-order algorithms, namely GDA, OMD, and extra-gradient method \citep{extra_gradient}, fail to converge on this function and instead cycle forever (see Figure 2).
         }
         \label{fig_intro}
        \end{figure}

\paragraph{Discussion of equilibrium.}
The equilibrium points $(x^\star, y^\star)$  our algorithm converges to can be viewed as local equilibria for a game where the maximizing player is restricted to making greedy updates to the value of $y$.
    Namely, the point $x^\star$ is an approximate local minimum of an alternative to the function $\max_y f(\cdot, y)$ where, rather than maximizing over all $y \in \mathbb{R}^d$,  the maximum is instead taken over all ``greedy" paths--i.e., paths along which $f(x^\star, \cdot)$ is increasing--initialized at the value $y^\star$.
    Additionally, $y^\star$ is an approximate local maximum of $f(x^\star, \cdot)$.
In particular, we show that any point $(x^\star, y^\star)$ which is a local min-max point 
is also an equilibrium point for our algorithm (see Section \ref{sec_framework}).

\paragraph{Discussion of assumptions.}
For our main result, 
we assume $f$ is bounded above and below. 
The assumption that the loss function is bounded below is standard in the minimization literature (see e.g., \cite{nesterov2006cubic}), as an unbounded function need not achieve its minimum and a minimization algorithm could diverge in a manner such that the loss function value tends to $-\infty$.
Thus, in min-max optimization, both the upper and lower bound assumptions are necessary to ensure the existence of even an 
(approximate) global min-max point.
 If we drop the lower bound assumption, and only assume $f(x,\cdot)$ is bounded above for $x \in \mathbb{R}^d,$ 
our algorithm still does not cycle: instead it either converges to a local equilibrium $(x^\star,y^\star)$, or the value of $f$  diverges monotonically to $-\infty$.
 Such functions include  popular losses, e.g. cross entropy \citep{Goodfellow2014generative} which is bounded above by zero, making our algorithm applicable to training GANs.

  \section{Related Work}
    \paragraph{Local frameworks.} 
   In addition to the local min-max framework (e.g., \cite{nemirovski2004prox, lu2019hybrid, heusel2017gans, mescheder2017numerics}), previous works propose local frameworks where the max-player is able to choose its move after the min-player. 
   These include the local stackleberg equilibrium
   \citep{fiez2020implicit} and the closely related local minimax point 
   \citep{Jin2019minmax}.
     In the local min-max, local stackleberg and local minimax frameworks, the max-player is restricted to move in a small ball around the current point $y$. 
     In contrast, in our framework, the max-player can move  much farther, as long as it follows a  path along which $f$ is continuously increasing.
  
  \paragraph{Convergence guarantees.}
 Several works have studied the convergence properties of GDA dynamics~\citep{nagarajan2017gradient, mescheder2017numerics,  Balduzzi2018mechanics, Daskalakis2018limit, Jin2019minmax, li2018limitations}, 
 and established that GDA suffers from severe limitations: GDA can exhibit rotation around some points, or otherwise fail to converge.
To address convergence issues for GDA, multiple works analyze algorithms based on Optimistic Mirror Descent (OMD), Extra-gradient (EG) methods, or similar
approaches \citep{Gidel2019momentum, Daskalakis2018limit, liang2018interaction, Daskalakis2019ITCS,  mokhtari2019proximal}. 
 For instance, \cite{Daskalakis2018optimism} guarantee convergence of OMD to a global min-max point on bilinear losses $f(x,y) = x^\top A y,$ and 
\cite{Mokhtari2019extra} also show convergence of OMD and EG methods on strongly convex-strongly concave $f$.
However, 
as observed in \cite{wang2019solving}, GDA, OMD, and EG fail to converge on some simple nonconvex-nonconcave test functions; in comparison,
we observe that 
our algorithm converges for these functions (Figure~\ref{figure_toy_function}).
Many works make additional assumptions to prove convergence--\cite{mertikopoulos2019optimistic} show asymptotic convergence of OMD under a ``coherence''  assumption,
and  \cite{Balduzzi2018mechanics} show convergence of a second-order algorithm if $f$ corresponds to a Hamiltonian game.
 Some works assume there exists a ``variational inequality" solution $(x^\star, y^\star)$ such that, roughly, the component of the gradient field $(-\nabla_x f, \nabla_y f)$ in the direction away from $(x^\star,y^\star)$ is very small \cite{dang2015convergence, liu2019towards, song2020optimistic, diakonikolas2021efficient, liu2021first}.
Other works \cite{yang2020global} assume a ``PL" condition which 
says that magnitude of $\nabla_x f(x,y)$ is at least $f(x,y) - \min_x f(x,y)$.
 However, many simple functions, e.g. $f(x,y) = \sin(x)\sin(y)$, do not satisfy Hamiltonian game, coherence, variational, and PL
 assumptions.

For this reason, multiple works show convergence to an (approximate) local min-max point.
For instance \cite{heusel2017gans} prove convergence of finite-step GDA to a local min-max point, under the assumption that their algorithm is initialized such that the underlying continuous dynamics converge to a local min-max point.
And \cite{mescheder2017numerics} show convergence if their algorithm is initialized in a small neighborhood of a local min-max point.
In addition, many works provide convergence guarantees to a local {\em stackleberg} or local {\em minimax} point, if their algorithm is provided with a starting point in the region of attraction \citep{fiez2019convergence, fiez2020implicit}, or a small enough neighborhood \citep{wang2019solving}, of such an equilibrium.
And other works 
\citep{nemirovski1978cesari, kinderlehrer1980introduction, nemirovski2004prox,rafique2018non, lu2019hybrid, lin2019gradient, nouiehed2019nonconvex,thekumparampil2019efficient,kong2019accelerated}
 show convergence to an approximate local min-max point when $f$ may be nonconvex in $x$, but is concave in $y$.
In contrast to the above works, our algorithm is guaranteed to converge for any nonconvex-nonconcave $f$, from any starting point, in a number of gradient evaluations that is independent of the dimension $d$ and polynomial in $L$ and $b$ if $f$ is $b$-bounded with ${L}$-Lipschitz gradient.
 Such smoothness/Lipschitz bounds are standard in convergence guarantees for optimization algorithms~\citep{bubeck2014convex,ge2015escaping,vishnoi2021convex}.
Similar to our approach, some algorithms make multiple $y$-player updates each iteration (e.g., \cite{nouiehed2019nonconvex}). However, 
these algorithms are not guaranteed to converge from any initial point on any nonconvex-noncocave smooth bounded $f$;
to overcome this, our algorithm introduces a randomized accept-reject procedure.

\paragraph{Greedy paths.}
\cite{our_theory_paper} also consider a framework where the max-player makes updates in a greedy manner.
The ``greedy paths" considered in their work are defined such that at every point along these paths, $f$ is non-decreasing, and the first derivative of $f$ is at least $\eps$ or the 2nd derivative is at least  $\sqrt{\eps}.$ 
In contrast, we just require a condition on the first derivative of $f$ along the path.
This distinction gives rise to a different framework and equilibrium than the one presented in their work. 
Secondly, 
\cite{our_theory_paper} is a second-order method 
that converges to an $\epsilon$-approximate local equilibrium in $\mathrm{poly}(d, b,L, 1 / \epsilon)$ Hessian evaluations. On the other hand, the convergence of our algorithm is independent of $d$;
it requires $\mathrm{poly}(b,L, 1 / \epsilon)$ gradient evaluations for convergence.

 \paragraph{Training GANs.}
An important line of work focuses on designing min-max optimization algorithms that mitigate non-convergence behavior such as cycling when training GANs using GDA \citep{Goodfellow2014generative}.
\cite{Daskalakis2018optimism} show OMD can mitigate cycling when training GANs with Wasserstein loss. 
In contrast to both GDA and OMD, where at each iteration the min- and max-players are allowed only to make small updates roughly proportional to their respective gradients, our algorithm empowers the max-player to make large updates at each iteration.
\cite{Metz2017unrolled} introduced Unrolled GANs, where the min-player optimizes an ``unrolled'' loss that allows the min-player to simulate a fixed number of max-player updates.
While this has some similarity to our algorithm
the main distinction is that the min-player in Unrolled GANs may not reach an (approximate) local minimum, and hence their algorithm does not
have any convergence guarantees.
We observe that our algorithm, applied to training GANs, trains stably and avoids mode collapse.

\section{Theoretical Results} \label{sec_theoretical_results}

As a first step towards obtaining a computationally tractable variant of the min-max framework, we consider the local min-max point studied in prior work— that is, any point $(x,y)$ such that $x$ is local minimum of $f(\cdot, y)$ and $y$ is a local maximum of $f(x,\cdot)$.
 Unfortunately, local min-max points may not exist even on smooth and bounded functions.
For instance, the function $f(x,y) = \sin(x+y)$ has no local min-max points.
This is because, while $f(x,y) = \sin(x+y)$ has local maximum in $y$ at all points along the collection of lines $S = \{(x,y): x+y = \frac{\pi}{2} + 2\pi k, k \in \mathbb{N}\}$, $\sin(x+y)$ does not have  a local minimum in $x$ at any of these points.
However, the points $S$ are all global min-max points of $f(x,y) = \sin(x + y)$, since for every $(x,y) \in S$, $x$ is a global minimum of $\max_{y\in \mathbb{R}^d} f(\cdot,y)$, and $y$ is a global maximum of $f(x,\cdot)$.
This is true even though $x$ is neither a global nor a local minimum of $f(\cdot, y)$.

On the other hand, an (approximate) {\em global} min-max point is always guaranteed to exist for smooth and bounded $f$.
This is because, in the global min-max framework, before the min-player considers whether to choose a value $x$, it is able to ``look ahead” and anticipate the response $\arg\max_{y\in \mathbb{R}^d} f(x,y)$ of the max-player.
Thus, for any smooth and bounded function $f$, one can always find an (approximate) global min-max point by first finding an (approximate) global minimum $x$ of the function $\max_{y\in \mathbb{R}^d} f(\cdot,y)$, and then finding a value of $y$ which maximizes $f(x,\cdot)$.
In order to guarantee existence for our framework, we would therefore ideally like 
to allow the min-player to anticipate the max-player’s response $\max_{y\in \mathbb{R}^d} f(\cdot,y)$ to any value of $x$ proposed by the min-player.
Unfortunately, computing the global maximum $\max_{y\in \mathbb{R}^d} f(\cdot,y)$ is intractable.

\subsection{Framework and Equilibrium} \label{sec_framework}
To get around this problem, we consider a variant of the min-max framework, which empowers the max-player to update $y$ in a ``greedy" manner.
More specifically, we restrict the max-player to update the current point $(x,y)$ to any point in a set $P(x,y)$ consisting of the endpoints of  paths in $\mathcal{Y}$ initialized at $y$ along which $f(x, \cdot)$ is nondecreasing.
These paths model the paths taken by a class of first-order algorithms, which includes popular algorithms such as gradient ascent.
 Our framework therefore allows the min-player to learn from max-players which are computationally tractable and yet (in contrast to the local min-max framework) are still empowered to make updates to the value of 
 $y$ which may lead to large increases in $f(x,y)$.
Given a bounded loss $f: \mathcal{X} \times \mathcal{Y} \rightarrow \mathbb{R}$, where $\mathcal{X}, \mathcal{Y} \subseteq \mathbb{R}^d$ are convex, an equilibrium for our framework is a point $(x^\star, y^\star) \in \mathcal{X} \times \mathcal{Y}$ such that
        \begin{align}
         x^\star &\in \mathrm{argmin}_{x \in \mathcal{X}} (\mathrm{max}_{y \in P(x,y^\ast)} f(x,y) ), \label{eq_our_frameowrk_x}\\
         y^\star &\in \mathrm{argmax}_{y \in P(x^\star, y^\star)} f(x^\star,y). \label{eq_our_frameowrk_y}
        \end{align}
 This is in contrast to the (global) min-max framework of \eqref{eq_global_minmax_intro} where the maximum is taken over all $y \in \mathcal{Y}$.
However, solutions to  \eqref{eq_our_frameowrk_x} and  \eqref{eq_our_frameowrk_y} may not exist, and even when they do exist, finding such a solution is  intractable since  \eqref{eq_our_frameowrk_x} generalizes nonconvex minimization. 

\noindent {\em Local equilibrium.}
Replacing the global minimum in \eqref{eq_our_frameowrk_x} with a local minimum leads to the following local version of our framework's equilibrium.
A point $(x^\star, y^\star) \in \mathcal{X} \times \mathcal{Y}$ is a local equilibrium if, for some $\nu > 0$ (and denoting the ball of radius $\nu$ at $x^\star$ by  $B(x^\star,\nu)$),
        \begin{align}
         x^\star &\in \mathrm{argmin}_{x\in B(x^\star, \nu) \cap \mathcal{X}}  (\mathrm{max}_{y \in P(x,y^\ast)} f(x,y) ), \label{eq_our_local_frameowrk_x}\\
        y^\star &\in \mathrm{argmax}_{y \in P(x^\star, y^\star)} f(x^\star,y) \label{eq_our_local_frameowrk_y},
                  \end{align}

\noindent {\em Approximate local equilibrium.}
Similar to previous work on local minimization for smooth  non-convex objectives, we would like to solve \eqref{eq_our_local_frameowrk_x} and \eqref{eq_our_local_frameowrk_y} to converge to approximate 
 stationary points \citep{nesterov2006cubic}. 
      Towards this end, 
      we can replace $P(x, y^\star)$ in \eqref{eq_our_local_frameowrk_x} and  \eqref{eq_our_local_frameowrk_y} with the set $P_\epsilon(x, y^\star)$ of endpoints of paths starting at $y^\star$ along which $f(x, \cdot)$ increases at some ``rate"  $\epsilon > 0$.

\begin{definition} \label{def_path}
  For any $x \in \mathcal{X}$, $y \in \mathcal{Y},$ and $\epsilon \geq  0,$ define
  $P_{\epsilon}(x, y){\subseteq}\mathcal{Y}$ to be points $w \in \mathcal{Y}$
  s.t. there is a continuous and (except at finitely many points)
  differentiable path $\gamma(t)$ starting at $y,$ ending at $w,$ and unit-speed, i.e.,
  $ \left\| \frac{\mathrm{d}}{\mathrm{d}t} \gamma(t)\right\| \leq  1$, s.t. at any point on $\gamma$,   $$\frac{\mathrm{d}}{\mathrm{d}t} f(x, \gamma(t))\geq\epsilon.$$
\end{definition}
\noindent
The above definition 
restricts the max-player to updating $y$ via any ``greedy" algorithm, e.g. gradient ascent.
Note that, compared to Definition \ref{def_path}, the notion of greedy paths in  \cite{our_theory_paper} additionally requires $$\frac{\mathrm{d}^2}{\mathrm{d}t^2} f(x, \gamma(t)) \geq \sqrt{\epsilon}$$ so as to achieve the goal of converging to an \textit{approximate second-order local equilibrium}.
Our goal, on the other hand, is for the max-player's updates to approximate paths taken by first-order greedy algorithms, hence, the condition on first derivative in Definition~\ref{def_path} suffices.

While we would also like to replace the local minimum in \eqref{eq_our_local_frameowrk_x} with an approximate stationary point, the {\em min-player's objective}, $\mathcal{L}_{\eps}(x,y) := \max_{z \in P_\eps(x,y)} f(x,z)$, may not be
continuous\footnote{Consider the example $f(x,y)=\min(x^2y^2,1).$ The
  min-player's objective for $\eps>0$ is
  $\mathcal{L}_\eps(x,y) = f(x,y)$ if $2x^2y < \eps,$ and 1
  otherwise. Thus $\mathcal{L}_{1/2}$ is discontinuous at
  $(1/2,1).$} in $x,$ and thus, gradient-based notions of
approximate local minimum do not apply.
To bypass this difficulty and to define a notion of approximate local minimum which applies to discontinuous functions, we sample updates to $x,$ and test whether $\mathcal{L}_{\eps}(\cdot, y)$ has decreased. 
Formally, given a choice of sampling distribution ${Q}_x$ (which may depend on $x$), and $\delta, \omega >0$,  $x^\star$ is said to be an approximate local minimum of a (possibly discontinuous) function $g: \mathcal{X} \rightarrow \mathbb{R} \,\,$ if 
        $\,\, \Pr_{\Delta \sim {Q}_{x^\star}} \left[ g(x^\star +   \Delta) < g(x^\star) - \delta  \right] < \omega$.

Thus, replacing the set $P$ with $P_\eps$  in \eqref{eq_our_local_frameowrk_x} and \eqref{eq_our_local_frameowrk_y}, and the ``exact" local minimum in \eqref{eq_our_local_frameowrk_x} with an approximate local minimum,  we arrive at our equilibrium definition:
\begin{definition} \label{def_greedy_minmax}
 Given $\eps, \delta, \omega>0$ and a distribution ${Q}_{x,y}$, we say that a point $(x^\star, y^\star)\in \mathcal{X} \times \mathcal{Y}$ is an $(\eps, \delta, \omega, Q)$-approximate local equilibrium for our framework if
  \begin{gather} \label{eq_approx_local_equilibrium_x}
{     
\begin{split}
\Pr\limits_{\substack{\Delta \sim {Q}_{x^\star, y^\star}}} 
    \bigg[\max_{y \in P_\eps(x^\star + \Delta,y^\star)} f(x^\star + \Delta,y) \quad < \max\limits_{\substack{y \in P_{\eps}(x^\star,y^\star)}} f(x^\star,y) - \delta \bigg] \leq \omega,
    \end{split}    
    } 
    \\
   \label{eq_approx_local_equilibrium_y}
      {   y^\star \in \mathrm{argmax}_{y \in P_{\eps}(x^\star, y^\star)} f(x^\star,y).
        }
                    \end{gather} 
\end{definition}

\noindent

\paragraph{Proposal distribution.}  
In the special case when $f$ is, e.g., $O(1)$-strongly convex in $x$ and $O(1)$-strongly concave in $y$ with  $O(1)$-Lipschitz gradients, if one chooses the updates $Q$ to be the (deterministic or stochastic) gradients $-\nabla_x f$, then the $(\epsilon, \delta, \omega, Q)$-equilibrium corresponds to an ``approximate" global min-max point for $f$ with duality gap $O(\epsilon + \delta)$ (Theorem \ref{thm_strongly_convex}).
This duality gap does not depend on the dimension.
 This motivates choosing the proposal distribution to be the (stochastic) gradients $-\nabla_x f$ in the more general setting when $f$ is nonconvex-nonconcave. 
Another motivation for this choice of $Q$ is that adding stochastic gradient noise to steps taken by deep learning algorithms is known empirically to lead to better 
 outcomes than, e.g., standard Gaussian noise (see \cite{zhu2019anisotropic}).
Empirically, 
this choice of  ${Q}$ leads to GANs that are able to successfully learn the dataset's distribution (Section \ref{sec:experiments}).

\setlength{\intextsep}{7pt}%
\begin{algorithm}
\small
\caption{Our algorithm for min-max optimization}
\textbf{input:} Stochastic zeroth-order oracle $F$ for bounded  
loss function $f: \mathbb{R}^d \times \mathbb{R}^d \rightarrow \mathbb{R}$ with $L$-Lipschitz gradient, 
stochastic gradient oracle $G_{y}$ with mean
$\nabla_{y} f$,  Initial point $(x_0, y_0)$

\textbf{input:} A distribution ${Q}_{x,y}$, and an oracle for sampling from this distribution. Error parameters $\epsilon, \delta >0$
\textbf{hyperparameters:} $\eta>0$ (learning rate), $r_{\mathrm{max}}$ (maximum number of rejections); $\tau_1$ (for annealing);
 
 \begin{algorithmic}[1]
 \STATE  Set $i \leftarrow 0$, $r \leftarrow 0$, $\epsilon_0 = \frac{\epsilon}{2}$, $f_{\mathrm{old}} \leftarrow \infty$
\WHILE{$r \leq r_{\mathrm{max}}$} \label{InnerWhileStart}
\STATE Sample $\Delta_{\ai}$ from the distribution $Q_{x_i, y_i}$ 
\STATE Set $X_{\ai +1} \leftarrow x_{\ai} + \Delta_i$ \COMMENT{{\itshape \footnotesize min-player's proposed update}}
\STATE Run Algorithm \ref{alg:InnerMaxLoop} with inputs $\mathsf{x} \leftarrow X_{\ai+1}$,  $\mathsf{y}_0 \leftarrow y_{\ai}$, and $\epsilon' \leftarrow \epsilon_{\ai}\times (1- 2\eta L)^{-1}$ \COMMENT{\emph{\footnotesize max-player's update}} 
\STATE Set $\mathcal{Y}_{\ai+1} \leftarrow \mathsf{y}_{\mathrm{stationary}}$ to be the output of Algo \ref{alg:InnerMaxLoop}.           
\STATE Set $f_{\mathrm{new}} \leftarrow F(X_{\ai+1}, \mathcal{Y}_{\ai+1})$ \COMMENT{\emph{\footnotesize Compute the new loss}}
\STATE Set $\mathsf{Accept}_i \leftarrow \mathsf{True}.$ 
\IF{$f_{\mathrm{new}} > f_{\mathrm{old}} - \frac{\delta}{4}$,}
\STATE Set $\mathsf{Accept}_i \leftarrow \mathsf{False}$ with probability $\max(0,1-e^{-\frac{i}{\tau_1}})$ \label{accept_reject_step}
\COMMENT{\emph{\footnotesize Decide to accept or reject}} \label{AcceptRejectStep}
\ENDIF
\IF{$\mathsf{Accept}_{\ai} = \mathsf{True}$}
\STATE Set $x_{\ai+1} \leftarrow X_{\ai+1}$, $y_{\ai+1} \leftarrow \mathcal{Y}_{\ai+1}$  \COMMENT{\emph{\footnotesize accept the proposed $x$ and $y$ updates}}
\STATE Set $f_{\mathrm{old}} \leftarrow f_{\mathrm{new}}$, $r \leftarrow 0$, \,\,\, $\epsilon_{\ai+1} \leftarrow \epsilon_{\ai}\times (1- 2\eta L)^{-2}$
\ELSE
\STATE Set $x_{\ai+1} \leftarrow x_{\ai}$, $y_{\ai+1} \leftarrow y_{\ai}$, \, \, $r \leftarrow r + 1$,\,
  $\epsilon_{\ai+1} \leftarrow \epsilon_{\ai}$  \COMMENT{\emph{\footnotesize Reject  the proposed updates}}%
\ENDIF
\STATE Set $\ai \leftarrow \ai+1$
\ENDWHILE
\STATE {\bf return}  $(x^\star, y^\star) \leftarrow (x_{\ai}, y_{\ai})$
\end{algorithmic}
\label{alg:Informal}
\end{algorithm}

\paragraph{Comparison to local min-max points. } 
Note that any local min-max point, if it exists,
satisfies Definition~\ref{def_greedy_minmax} for a proposal distribution $Q$ with small enough mean and variance.
This is because if $(x^\star, y^\star)$ is a local min-max point of $f$, $x^\star$ is a local minimum of  $f(\cdot, y^\star)$ and hence there is a ball $B$ containing  $x^\star$   on which $x^\star$ minimizes $f(\cdot, y^\star)$.
Moreover, $y^\star$ is a first-order stationary point of $f(x^\star, \cdot)$, which means that  $P_{\eps}(x^\star,y^\star) = \{y^\star\}$ and hence $$\max_{y \in P_{\eps}(x^\star,y^\star)} f(x^\star,y) = f(x^\star, y^\star),$$ satisfying \eqref{eq_approx_local_equilibrium_y}.
Therefore, if $Q$ has mean and variance small enough that the min-player's proposed updates $\Delta \sim Q_{x^\star, y^\star}$ fall inside $B$ 
w.h.p.,
we will have that  $$\max_{y \in P_\eps(x^\star + \Delta,y^\star)} f(x^\star + \Delta,y)>f(x^\star, y^\star)$$
since $y^\star \in P_\eps(x^\star + \Delta,y^\star)$, implying that $(x^\star, y^\star)$ satisfies \eqref{eq_approx_local_equilibrium_x}
(proof provided in Appendix~\ref{sec_local}). 

However, the converse is not true.
This is a necessary feature of our definition 
as there are simple smooth bounded functions which do not have any local min-max points and yet an equilibrium from Definition~\ref{def_greedy_minmax} is guaranteed to exist.
For instance, as mentioned earlier, $\sin(x+y)$ does not have any local min-max points; however, the 
global min-max points 
 $S = \{(x,y): x + y = \frac{\pi}{2} + 2\pi k, k{\in}\mathbb{N}\}$ 
 of $\sin(x+y)$ 
satisfy Definition~\ref{def_greedy_minmax} for any $\epsilon > 0$, $\delta = \Omega(\sqrt{\epsilon})$, $\omega = 0$, and, e.g., any proposal distribution $Q$ with support on a ball of radius $\frac{1}{2}$ centered at $0$.
 (See Appendix \ref{sec:examples} for examples)

\subsection{Algorithm}
We present an algorithm for our framework (Algorithm \ref{alg:Informal}), along with the gradient ascent subroutine it uses to compute max-player updates (Algorithm \ref{alg:InnerMaxLoop}).
In 
Theorem~\ref{thm:GreedyMinimax-main},
we show it 
efficiently finds an approximate local equilibrium (Definition~\ref{def_greedy_minmax}).
We consider bounded loss functions
$f: \mathbb{R}^d \times \mathbb{R}^d \rightarrow \mathbb{R}$, where $f$ is an empirical risk loss over $m$ training
examples, i.e., $$f := \frac{1}{m} \sum_{i \in [m]} f_i.$$
We assume we are given access to $f$ via a randomized oracle $F$ where
$\mathbb{E}[F] = f.$
We call such an oracle a stochastic zeroth-order oracle for $f$.
We are also given randomized oracles $G_{x}, G_{y}$ for $\nabla_x f,$
$\nabla_y f,$ where $\mathbb{E}[G_{x}] = \nabla_x f,$ and $\mathbb{E}[G_{y}] =\nabla_{y} f,$ and call such oracles stochastic
gradient oracles for $f$. 
These oracles are computed by
randomly sampling ``batches'' $B, B_x, B_y \subseteq [m]$ (iid, with replacement) and returning
$$F = \frac{1}{|B|} \sum_{i \in B} f_i, G_{x} = \frac{1}{|B_x|} \sum_{i
  \in B_x} \nabla_{x} f_i, \text{ and }
G_{y} = \frac{1}{|B_y|} \sum_{i \in B_y} \nabla_{y} f_i.$$ 
For our convergence guarantees, we require the following bounds on
standard
smoothness parameters for each $f_i:$
$b, L>0$  such that  $|f_i(x,y)| \le b$
and
$$\|\nabla f_i(x,y) - \nabla f_i(x',y')\| \le {L}\|x-x'\| +
{L}\|y-y'\|$$ for all $x,y$ and all $i$.
These bounds imply $f$ is also $b$-bounded,
and ${L}$-gradient-Lipschitz.

\begin{algorithm}[H]
\small
\caption{ Stochastic gradient ascent (for max-player updates) \label{alg:InnerMaxLoop}}
\textbf{input:}   Stochastic gradient oracle $G_{y}$ for $\nabla_{y} f$; initial points $\mathsf{x}, \mathsf{y}_0$; error parameter $\epsilon'$\\
  \textbf{hyperparameters:} $\eta>0$ 

 \begin{algorithmic}[1]
 \STATE  Set $\aj \leftarrow 0$ \, \,\,, $\mathsf{Stop} \leftarrow \textrm{False}$
\WHILE{$\mathsf{Stop} = \mathsf{False}$}%
\STATE Set $g_{\mathrm{y}, \aj} \leftarrow G_{y}(\mathsf{x}, \mathsf{y}_{\aj})$ 
\IF{$\| g_{\mathrm{y}, \aj}\| > \epsilon'$} \label{DiscADAMStart}
\STATE Set $\mathsf{y}_{\aj+1} \leftarrow \mathsf{y}_{\aj} + \eta g_{\mathrm{y}, \aj}$  , \label{step_SGD_update} 
 \STATE  Set $\aj \leftarrow \aj + 1$
\ELSE
\STATE Set $\mathsf{Stop} \leftarrow\textrm{True}$
\ENDIF
\ENDWHILE 
\STATE  {\bf return}  $\mathsf{y}_{\mathrm{stationary}} \leftarrow \mathsf{y}_{\aj}$
\end{algorithmic}
\end{algorithm}

\paragraph{Overview and intuition of our algorithm.} %
From the current point $(x,y)$, Algorithm~\ref{alg:Informal} first proposes a random update $\Delta$ from the given distribution $Q_{x,y}$ to update the min-player's parameters to $x + \Delta$.
In practice, we oftentimes choose ${Q}_{x,y}$ to be the distribution of the (scaled) stochastic gradient $-G_x$, although one may implement Algorithm~\ref{alg:Informal} with any choice of distribution ${Q}_{x,y}$.
Then, it  updates the max-player's parameters greedily by running gradient ascent using the stochastic gradients $G_y$ until it reaches a first-order $\epsilon$-stationary point $y'$, that is, a point where  $$\|\nabla_y f(x+\Delta,y')\| \leq \epsilon.$$
Thus, the point $y'$ satisfies \eqref{eq_approx_local_equilibrium_y}.
However, Algorithm~\ref{alg:Informal} still needs to eventually find a pair of points $(x^\star,y^\star)$ where $x^\star$ is an approximate local minimum of the min-player's objective $\mathcal{L}_\eps(\cdot, y^\star)$ in order to satisfy \eqref{eq_approx_local_equilibrium_x}.
Moreover, it must also ensure that 
$y^\star$ satisfies \eqref{eq_approx_local_equilibrium_y}.
Towards this, Algorithm~\ref{alg:Informal} does the following:

1) The algorithm re-uses this same point $y'$ to compute an approximation  $f(x+\Delta, y')$ for $\mathcal{L}_{\eps}(x+\Delta,y)$
in order to have access to the value of the min-player's objective $\mathcal{L}_{\eps}$ to be able to minimize it.

2) If $f(x+\Delta,y')$ is less than $f(x,y)$ the algorithm concludes that $\mathcal{L}_{\eps}(x + \Delta,y)$ has decreased and, consequently, accepts the updates $x+\Delta$ and $y'$; otherwise it rejects both updates.
We show that after accepting both $x + \Delta$ and $y'$, $\mathcal{L}_\eps(x + \Delta, y') < \mathcal{L}_\eps(x, y)$, implying that the algorithm does not cycle.

3) It then starts the next iteration proposing a random update which again depends on its current position.

4) While Algorithm~\ref{alg:Informal} does not cycle, to avoid getting stuck, if it is unable to decrease $\mathcal{L}_\eps$ after roughly $1/\omega$ attempts,  it concludes w.h.p. that the current $x$ is an approximate local minimum for $\mathcal{L}_\eps(\cdot, y)$ with respect to the given distribution.
This is because, by definition, at an approximate local minimum, a random update from the given distribution has probability at most $\omega$ of decreasing 
$\mathcal{L}_\eps$.
We also show that the current $y$ is an $\eps$-stationary point for $f(x,\cdot)$.%

We conclude this section with a few  remarks:
1) In practice our algorithm can be implemented just as easily with ADAM instead of SGD, as in some of our  experiments
{(alternately, one may also be able to substitute other optimization algorithms such as Momentum SGD \cite{polyak1964some}, ADAGrad \cite{duchi2011adaptive}, or Adabelief \cite{zhuang2020adabelief} for gradient updates).}
 2)  Algorithm \ref{alg:Informal} uses a randomized accept-reject rule (similar to simulated annealing)-- if the resulting loss has decreased, the updates for $x$ and $y$ are accepted; otherwise they are only accepted with a small probability $e^{-i/\tau_1}$ at each iteration $i$, where $\tau_1$ is a ``temperature'' parameter.
3) While our main result still holds if one replaces simulated annealing with a deterministic acceptance rule, the  annealing step seems to be beneficial in practice in the early period of training when our algorithm is implemented with ADAM gradients.
4) Finally, in simulations,  we find that Algorithm \ref{alg:Informal}'s implementation can be simplified by taking a small fixed number of max-player updates at each iteration.

\subsection{Convergence Guarantee} 

\begin{theorem}[{\bf Main result}]\label{thm:GreedyMinimax-main}
%
  Algorithm \ref{alg:Informal}, with
   hyperparameters $\eta>0$, $\tau_1 >0$, given access to stochastic zeroth-order and gradient oracles for a
  function $f = \sum_{i \in [m]} f_i$
  where each $f_i$ is $b$-bounded with ${L}$-Lipschitz gradient for some $b,{L} > 0$, and
  $\epsilon, \delta, \omega > 0$, and an oracle for sampling from a distribution ${Q}_{x,y}$, with probability at least $9/10$ returns
  $(x^\star,y^\star) \in \mathbb{R}^{d} \times \mathbb{R}^{d}$
  such that, for some
  $\epsilon^\star \in\left[ 0.5\epsilon, \epsilon\right]$,  $(x^\star, y^\star)$ is an $(\epsilon^\star, \delta, \omega,Q)$-approximate local equilibrium. 
  The number of stochastic gradient, function, and sampling oracle calls required by the algorithm is $\mathrm{poly}\left(b,L, 1/\eps, 1/\delta, 1/\omega \right)$ and does not depend on the dimension $d$.
\end{theorem}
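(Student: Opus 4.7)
The plan is to combine three ingredients: a standard smooth-optimization analysis of the inner gradient-ascent subroutine, a potential-function argument to cap the number of accepted outer iterations, and a concentration argument that turns a long rejection streak into a certificate of approximate local minimality for the min-player.

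First I would analyze Algorithm~\ref{alg:InnerMaxLoop}. By the descent lemma for $L$-smooth $f$, each gradient-ascent step with $\|g_{y,j}\| > \epsilon'$ increases $f(\mathsf{x}, \cdot)$ by at least $\eta\|g_{y,j}\|^2(1-\tfrac{\eta L}{2})$, so $f$ rises by a fixed positive amount until the stopping criterion is met; since $f$ is $b$-bounded, the number of inner steps is $O(b/(\eta\epsilon'^2))$, independent of $d$. Concatenating the line segments between successive iterates and reparameterizing to unit speed yields a continuous, a.e.-differentiable path $\gamma$ from $\mathsf{y}_0$ to $\mathsf{y}_{\text{stationary}}$. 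Along this path, using Lipschitzness of $\nabla f$ over a segment of length $\eta\|g_{y,j}\|$, one obtains $\frac{d}{dt} f(\mathsf{x},\gamma(t)) \geq \|g_{y,j}\| - \eta L\|g_{y,j}\| \geq \epsilon'(1-2\eta L) = \epsilon_i$, so $\mathsf{y}_{\text{stationary}} \in P_{\epsilon_i}(\mathsf{x},\mathsf{y}_0)$. Batch sizes for $G_y$ are chosen (via Hoeffding and the gradient variance of $f_i$) so that the test $\|g_{y,j}\| \le \epsilon'$ is faithful to $\|\nabla_y f(\mathsf{x},\mathsf{y}_j)\|$ up to, say, $\epsilon/4$, with high probability across all inner calls.

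Second, I would bound the total number of outer iterations. Let $\epsilon^\star$ denote the final value of $\epsilon_i$; since $\epsilon_i$ is only rescaled on acceptance, the constraint $\epsilon^\star \in [\epsilon/2,\epsilon]$ caps accepted steps at $\lceil \tfrac{\log 2}{2\log(1/(1-2\eta L))} \rceil$. More importantly, one shows each acceptance truly decreases the min-player's objective: by the path property from step~1, $\mathcal{L}_{\epsilon_i}(X_{i+1},y_i) \geq f(X_{i+1},\mathcal{Y}_{i+1})$, while after the previous accepted step, $y_i$ is an $\epsilon_{i-1}$-stationary point of $f(x_i,\cdot)$, hence $P_{\epsilon_i}(x_i,y_i)=\{y_i\}$ and $\mathcal{L}_{\epsilon_i}(x_i,y_i)=f(x_i,y_i)$. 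A deterministic acceptance (when $F_{\text{new}} \leq F_{\text{old}} - \delta/4$) therefore implies $\mathcal{L}_{\epsilon_i}$ drops by at least $\delta/8$ once the oracle noise $|F-f|\le\delta/16$ is ensured by a $\mathrm{poly}(b/\delta)$-sized batch. The number of such acceptances is $O(b/\delta)$. For the simulated-annealing acceptances (accepting a \emph{bad} move with probability $e^{-i/\tau_1}$), a union bound over iterations shows that the total expected number is $O(\tau_1)$, so the algorithm cannot accept bad moves forever; tuning $\tau_1$ polynomially in the other parameters keeps this bounded and lets one isolate a terminal phase in which acceptances are deterministic.

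Third, I would verify the equilibrium conditions at termination. Condition \eqref{eq_approx_local_equilibrium_y} follows directly: the returned $y^\star$ satisfies $\|\nabla_y f(x^\star,y^\star)\| \leq \epsilon^\star$ (inheriting $\epsilon^\star$ from the last accepted inner call), so any unit-speed path leaving $y^\star$ has initial derivative $< \epsilon^\star$, forcing $P_{\epsilon^\star}(x^\star,y^\star)=\{y^\star\}$. For \eqref{eq_approx_local_equilibrium_x}, termination means $r_{\max}$ consecutive proposals $\Delta_1,\dots,\Delta_{r_{\max}} \sim Q_{x^\star,y^\star}$ were rejected in the deterministic phase, i.e., each gave $F_{\text{new}} > F_{\text{old}} - \delta/4$, hence (up to $\delta/16$ noise) $f(x^\star+\Delta_k,\mathcal{Y}') > f(x^\star,y^\star) - \delta/2$. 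By the same path argument as above, $\mathcal{L}_{\epsilon^\star}(x^\star+\Delta_k,y^\star) \geq f(x^\star+\Delta_k,\mathcal{Y}')$, so any $\Delta$ witnessing a $\delta$-decrease of $\mathcal{L}_{\epsilon^\star}$ would have been accepted. If the true probability of such a witness exceeded $\omega$, the probability of $r_{\max}$ straight failures would be $\leq (1-\omega)^{r_{\max}}$, which is driven below any desired failure bound by $r_{\max} = \Theta(\omega^{-1}\log(1/\text{failure prob}))$.

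The main obstacle is the interaction between the stochastic oracles and the path certificate: the proof must simultaneously guarantee that (i) the accept/reject test on $F$ matches the decision on the true $f$, (ii) the inner-loop stopping criterion on $G_y$ matches the true $\nabla_y f$, and (iii) the continuous interpolation of the discrete ascent iterates satisfies the rate-$\epsilon^\star$ condition despite stochastic noise. Each of these reduces to a Hoeffding/Bernstein bound with batch size polynomial in $b,L,1/\epsilon,1/\delta,1/\omega$ and independent of $d$; combining with the $O(b/\delta)\cdot O(r_{\max})$ outer iteration bound and the $O(b/(\eta\epsilon^2))$ inner iteration bound yields the claimed dimension-independent oracle complexity.
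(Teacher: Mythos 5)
Your proposal follows the same two-step structure as the paper's proof: first an oracle-complexity bound combining Azuma/Hoeffding concentration for the stochastic oracles, a descent-lemma analysis of the inner ascent loop, and a potential-function argument on the outer loop (Propositions~\ref{Prop_Azuma}--\ref{thm_OuterRuntime}, Lemma~\ref{lemma_polytime}); then the argument that a terminal rejection streak, together with the $\epsilon$-increasing-path construction from the gradient-ascent iterates, certifies the equilibrium conditions (Proposition~\ref{Lemma_Greedypath}, Lemma~\ref{lemma_greedyMinimax}). The only small differences are that the paper tracks $f$ itself (not $\mathcal{L}_{\epsilon_i}$) as the decreasing potential, which is simpler since $\epsilon_i$ changes across accepted iterations, and that the bound $\epsilon^\star\le\epsilon$ is \emph{enforced} by choosing $\eta\le 1/(8L\mathcal{I})$ after the iteration bound $\mathcal{I}$ is derived from the potential argument, rather than being an independent cap on the number of accepted steps.
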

\noindent
Theorem \ref{thm:GreedyMinimax-main} says that our algorithm is guaranteed to converge to an approximate local equilibrium for our framework from any starting point, for any $f$ which is bounded with Lipschitz gradients including nonconvex-nonconcave $f$.
As discussed in related work this is in contrast to  prior works which assume e.g., that $f(x,y)$ is concave in $y$ or that the algorithm is provided with an initial point such that the underlying continuous dynamics converge to a local min-max point.
 The exact number of stochastic gradient, function, and sampling oracle calls required by the algorithm is $\tilde{O}(b^3 L^3/ (\delta^3 \omega^3 \varepsilon^4))$.
We present a proof overview for Theorem \ref{thm:GreedyMinimax-main} next,
and the full proof in Section \ref{sec:proofs}.

In the setting where $f(x,y)$ is $\alpha$-strongly convex in $x$ and $\alpha$-strongly concave in $y$ and has $L$-Lipschitz gradients, Algorithm \ref{thm:GreedyMinimax-main} with $x$-player updates $Q_{x,y}$ chosen to be the $x$-gradients $-\nabla_x f(x,y)$, outputs a point $(x^\star,y^\star)$ which is an $(\eps, \eps, 0.25, Q)$-equilibrium point in  $\mathrm{poly}(1/\epsilon, L, 1/\alpha, D)$ gradient evaluations, where $D:= \|(x_0, y_0) - (x^\dagger, y^\dagger)\|$ is the distance from the initial point to the global min-max point $(x^\dagger, y^\dagger)$ .
 As mentioned in Section \ref{sec_intro}, since $f$ is strongly convex-strongly concave with Lipschitz gradients, this point is also an approximate global min-max point with duality gap $O(\eps)$, and the number of gradient evaluations for our algorithm to achieve a duality gap $O(\eps)$ is independent of the dimension (Corollary \ref{cor_runtime_strongly_convex}).

\color{black}

\begin{figure*}[t]
    \centering
       \includegraphics[height=3.3cm]{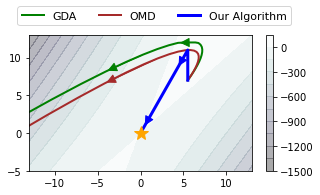}
         \includegraphics[height=3.3cm]{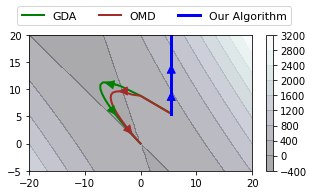}
         \includegraphics[height=3.3cm]{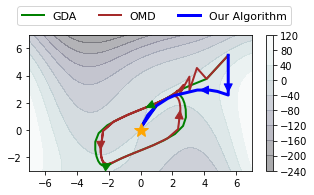}
         \caption{\small Our algorithm (blue), GDA (green), and OMD (red) on test functions $F_1$ (left), $F_2$ (center), and $F_3$ (right).
         $F_1$ and $F_3$ have global min-max points at (0,0) (yellow star), and $F_2$ has no min-max points since $\min_{x \in \mathbb{R}} \max_{y \in \mathbb{R}} F_2(x,y) = + \infty$. 
         }
         \label{figure_toy_function}
\end{figure*}

\section{Proof Overview for Theorem \ref{thm:GreedyMinimax-main}}
For simplicity, assume $b = L = \tau_1 = 1$ and $\eps = \delta = \omega$.  
There are two key pieces to proving Theorem~\ref{thm:GreedyMinimax-main}.
The first is to show that our algorithm converges to some point $(x^\star, y^\star)$ in a number of gradient, function, and sampling oracle calls that is $\mathrm{poly}(1/\epsilon)$ and independent of the dimension $d$ (Lemma \ref{lemma_polytime_informal}).
 Secondly, we show that, $y^\star$ is a first-order $\epsilon$-stationary point for $f(x^\star, \cdot)$, and $x^\star$ is an approximate local minimum of $\mathcal{L}_\epsilon(\cdot, y^\star)$ (Lemma \ref{lemma_greedyMinimax_informal}).

\paragraph{Step 1: Bounding the number of oracle evaluations.}
\begin{lemma}[\textbf{Informal, see Lemma \ref{lemma_polytime}}]\label{lemma_polytime_informal}
Algorithm \ref{alg:Informal} terminates after at most \\$\mathrm{poly}(b,L,1/\eps, 1/\delta, 1/\omega)$ gradient, function, and sampling oracle evaluations.
\end{lemma}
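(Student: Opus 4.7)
My plan is to bound the total oracle count by separately bounding three quantities: the number of outer iterations $T$ of Algorithm~\ref{alg:Informal}, the number of inner iterations per call to Algorithm~\ref{alg:InnerMaxLoop}, and the batch size needed so that the stochastic gradient/function oracles concentrate on their expectations. Multiplying these three quantities then yields the desired polynomial bound.

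\textbf{Bounding the outer iterations.} Since Algorithm~\ref{alg:Informal} terminates once $r_{\mathrm{max}}$ consecutive rejections occur, I have $T \leq (a+1)(r_{\mathrm{max}}+1)$ where $a$ is the total number of accepted iterations. I would split $a = a_g + a_s$ into ``good'' acceptances (those with $f_{\mathrm{new}} \leq f_{\mathrm{old}} - \delta/4$) and ``annealing'' acceptances (those accepted by the randomized rule on line~\ref{accept_reject_step} despite $f_{\mathrm{new}} > f_{\mathrm{old}} - \delta/4$). For each $i$, the probability that iteration $i$ is an annealing acceptance is at most $e^{-i/\tau_1}$ (this holds unconditionally on the rest of the trajectory), so by linearity of expectation $\mathbb{E}[a_s] \leq \sum_{i \geq 0} e^{-i/\tau_1} = O(\tau_1)$, and Markov's inequality gives $a_s = O(\tau_1)$ with probability at least $19/20$. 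Telescoping the loss values along the accepted iterates and using $|f| \leq b$,
\[ -2b \ \leq\ f(x_T, y_T) - f(x_0,y_0) \ \leq\ -a_g \cdot \tfrac{\delta}{4} + a_s \cdot 2b, \]
where any discrepancy between $f_{\mathrm{old}}$ and $f(x_i,y_i)$ arising from stochasticity of the zeroth-order oracle $F$ is absorbed using a Hoeffding-style concentration bound (this fixes the batch size as $\mathrm{poly}(b/\delta)$). Rearranging gives $a_g = O(b(1+a_s)/\delta) = O(b\tau_1/\delta)$ w.h.p. Choosing $r_{\mathrm{max}} = \Theta((1/\omega)\log(a/\mathrm{err}))$ ensures that whenever $r_{\mathrm{max}}$ consecutive rejections occur, the probability that a fresh draw $\Delta \sim Q_{x^\star,y^\star}$ would produce a decrease is at most $\omega$ (this will be used in Lemma~\ref{lemma_greedyMinimax_informal}); the resulting $T = \mathrm{poly}(b, 1/\delta, 1/\omega, \tau_1)$.

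\textbf{Bounding the inner iterations and combining.} Each call to Algorithm~\ref{alg:InnerMaxLoop} performs stochastic gradient ascent on $f(\mathsf{x},\cdot)$ until $\|g_{y,j}\| \leq \epsilon'$. With $\eta \leq 1/(2{L})$ and $f$ being ${L}$-gradient-Lipschitz, the standard smooth-ascent inequality yields
\[ f(\mathsf{x}, \mathsf{y}_{j+1}) - f(\mathsf{x}, \mathsf{y}_j) \ \geq\ \tfrac{\eta}{2}\|\nabla_y f(\mathsf{x}, \mathsf{y}_j)\|^2 \ =\ \Omega(\eta (\epsilon')^2), \]
valid provided the batch for $G_y$ has size $\mathrm{poly}({L}, 1/\epsilon)$ so that $g_{y,j}$ concentrates around $\nabla_y f(\mathsf{x}, \mathsf{y}_j)$ (Chernoff/Hoeffding). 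Since $f \in [-b,b]$ bounds the total ascent by $2b$, each call uses at most $O(b/(\eta (\epsilon')^2)) = O(b{L}/\epsilon^2)$ inner steps. Multiplying the outer-iteration bound $T$, the per-call inner-iteration bound, and the batch sizes, and taking a union bound over all iterations so the concentration events and the Markov bound on $a_s$ simultaneously hold, gives the claimed $\mathrm{poly}(b,L,1/\epsilon,1/\delta,1/\omega)$ oracle complexity.

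\textbf{Main obstacle.} The delicate point is controlling the simulated annealing step. Naively, annealing acceptances can \emph{increase} $f$, so one might fear a cascade in which many annealing acceptances allow many more good acceptances, which could blow up $T$. The key observation is that the decaying schedule $e^{-i/\tau_1}$ forces $\mathbb{E}[a_s]$ to be bounded by a convergent geometric series independently of $T$ itself, closing the apparent circularity: one first bounds $a_s$ via the schedule alone, then $a_g$ via the telescoping potential argument, and only then combines them to bound $T$. A secondary technicality is propagating stochastic-gradient and zeroth-order concentration guarantees uniformly across the (random but polynomially many) iterations, which requires choosing the batch sizes as $\mathrm{poly}(b,L,1/\epsilon,1/\delta,1/\omega)$ at the outset and applying a single union bound.
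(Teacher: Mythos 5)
Your plan is correct and follows the same high-level structure as the paper's proof (Propositions~\ref{Prop_Azuma}, \ref{thm_InnerRuntime}, \ref{thm_OuterRuntime} and Lemma~\ref{lemma_polytime}): bound the outer iterations via the annealing schedule plus boundedness of $f$, bound the inner gradient-ascent iterations via the smooth-ascent potential $f(\mathsf{x},\mathsf{y}_{j+1})-f(\mathsf{x},\mathsf{y}_j)\gtrsim\eta\epsilon^2$, and multiply by Azuma--Hoeffding batch sizes, with a final union bound. The one place you take a genuinely different route is the outer-iteration count. The paper's Proposition~\ref{thm_OuterRuntime} waits a burn-in of $\hat{\mathcal{I}}=\tau_1\log(r_{\mathrm{max}}/\nu)$ steps, after which the probability of the bad event ``accept despite $f$ not decreasing'' drops below $2\nu$; a union bound over a window of $h=r_{\mathrm{max}}\cdot\frac{8b}{\delta}+1$ iterations then ensures that with high probability \emph{no} such event occurs in the window, boundedness of $f$ caps the number of decreasing acceptances in that window at $\frac{8b}{\delta}$, and pigeonhole forces a run of $r_{\mathrm{max}}$ consecutive rejections. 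You instead split all acceptances into $a_g+a_s$, bound the expected total number of annealing acceptances over the entire run by the unconditional geometric series $\sum_i e^{-i/\tau_1}=O(\tau_1)$, apply Markov, and fold the $a_s\cdot 2b$ potential increases into the telescoping bound for $a_g$. Both arguments are valid; your framing is cleaner in that it treats all iterations uniformly and makes explicit why the ``cascade'' circularity (annealing acceptances allowing more good acceptances allowing more iterations) does not arise, while the paper's burn-in-and-window argument avoids the Markov confidence loss and yields the near-deterministic iteration cap $\mathcal{I}$ that is reused downstream in Lemma~\ref{lemma_greedyMinimax}.
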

\begin{proof}[Proof outline of Lemma \ref{lemma_polytime_informal}]
After  $\Theta(\log(1/\epsilon))$ iterations of Algorithm \ref{alg:Informal}, the decaying acceptance rate (Line \ref{accept_reject_step} of Algorithm \ref{alg:Informal}) ensures that, with probability at least $1 - O(\eps)$, at any iteration $i$ for which Algorithm \ref{alg:Informal} accepts a proposed update to $(x_i, y_i)$, we have that 
\be \label{eq_proof_overview1}
f(x_{i+1}, y_{i+1}) \leq f(x_i, y_i) - \eps.
\ee
Next, we note that the stopping condition in Line \ref{InnerWhileStart} of Algorithm \ref{alg:Informal} implies 
 our algorithm stops whenever $r_{\mathrm{max}} = \Theta(1/\epsilon)$ proposed steps are rejected in a row.
Thus, \eqref{eq_proof_overview1} implies that for every $ \Theta(r_\mathrm{max})$ iterations where the algorithm does not terminate, with probability at least $1 - O(\epsilon)$ the value of the loss decreases by at least $\Omega(\epsilon)$.
 Since $f$ is 1-bounded, this implies our algorithm terminates after roughly  $O(r_\mathrm{max}/\epsilon)$ iterations of the minimization routine w.h.p. (Prop.~\ref{thm_OuterRuntime}).

Next, we use the fact that $G_{y}(x,y)$ is a batch gradient, $$G_{y}(x,y) = \frac{1}{|B_y|} \sum_{i \in B_y} \nabla_{y} f_i(x,y),$$  of batch size $|B_y| = O(1/\eps^{2}
\log(1/\eps))$,
together with the Azuma–Hoeffding concentration inequality, 
 to show w.h.p. that
 \begin{equation} \label{eq_proof_overview2}
 \|G_y(x,y) - \nabla_y f(x, y)\| \leq O(\eps),
 \end{equation}
(Proposition \ref{Prop_Azuma}).  We then use \eqref{eq_proof_overview2} together with the fact $f$ is 1-bounded with 1-Lipschitz gradient, to show that, w.h.p., the maximization subroutine (Algorithm \ref{alg:InnerMaxLoop}) requires at most $\mathrm{poly}(1/\epsilon)$ stochastic gradient ascent steps to reach an $\eps$-stationary point (Proposition \ref{thm_InnerRuntime}). 
As each step of the max-subroutine requires one gradient evaluation, and each iteration of the min-routine calls the max-subroutine once (and makes O(1) oracle calls), the total number of oracle calls is $\mathrm{poly}(1/\epsilon)$.
\end{proof}

\paragraph{Step 2: Show  $x^\star$ is approximate local minimum for $\mathcal{L}_\epsilon(\cdot, y^\star)$, and $y^\star$ is $\epsilon$-stationary point.} 

\begin{lemma}[\textbf{Informal, see Lemma \ref{lemma_greedyMinimax}}]\label{lemma_greedyMinimax_informal}
W.h.p., the output $(x^\star, y^\star)$ of Algorithm \ref{alg:Informal} is an approximate local equilibrium for our framework, for parameters $(\eps, \delta, \omega)$ and proposal distribution $Q$.
\end{lemma}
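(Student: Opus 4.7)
The plan is to verify the two equilibrium conditions in Definition \ref{def_greedy_minmax} separately at the output $(x^\star,y^\star)$, using the same stochastic-gradient and stochastic-function concentration bounds already employed in Lemma \ref{lemma_polytime_informal} (in particular \eqref{eq_proof_overview2}). I will union-bound the failure events of all these concentration inequalities across the $\mathrm{poly}(1/\epsilon)$ iterations so that everything below holds simultaneously with probability at least $9/10$.

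First I would establish the max-player condition \eqref{eq_approx_local_equilibrium_y}. The last call to Algorithm \ref{alg:InnerMaxLoop} on the accepted iterate $(x^\star,y^\star)$ stops only when $\|G_y(x^\star,y^\star)\|\le \epsilon'$, where $\epsilon'=\epsilon_i/(1-2\eta L)$ at termination lies in $[\epsilon/2,\epsilon]$; call this value $\epsilon^\star$. By \eqref{eq_proof_overview2}, $\|\nabla_y f(x^\star,y^\star)\|\le \|G_y(x^\star,y^\star)\|+O(\epsilon)\le \epsilon^\star$ w.h.p.\ (absorbing the $O(\epsilon)$ slack into the bracket $[\epsilon/2,\epsilon]$ by adjusting constants). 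Then, for any nontrivial unit-speed path $\gamma$ starting at $y^\star$, the derivative at $t=0$ satisfies $\tfrac{\mathrm d}{\mathrm dt}f(x^\star,\gamma(t))|_{t=0}=\nabla_y f(x^\star,y^\star)\cdot \gamma'(0)<\epsilon^\star$ since $\|\gamma'(0)\|\le 1$. This violates the rate condition of Definition \ref{def_path}, so $P_{\epsilon^\star}(x^\star,y^\star)=\{y^\star\}$ and \eqref{eq_approx_local_equilibrium_y} holds trivially.

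Second, to prove the min-player condition \eqref{eq_approx_local_equilibrium_x}, I would combine the accept/reject rule with the fact that the max-player's update is itself (approximately) a greedy-path response. From the previous step, $\mathcal{L}_{\epsilon^\star}(x^\star,y^\star)=f(x^\star,y^\star)$. For any proposal $\Delta$, let $\mathcal{Y}'$ denote the output of Algorithm \ref{alg:InnerMaxLoop} on input $(x^\star+\Delta,y^\star)$; by interpolating its discrete iterates into a unit-speed continuous curve, which stays $\epsilon^\star$-greedy by the Lipschitz-gradient bound (using that each step has $\|G_y\|>\epsilon'>\epsilon^\star$ and the directional derivative cannot drop below $\epsilon^\star$ over a segment of length $\eta\|G_y\|$), one obtains $\mathcal{Y}'\in P_{\epsilon^\star}(x^\star+\Delta,y^\star)$, hence $f(x^\star+\Delta,\mathcal{Y}')\le \mathcal{L}_{\epsilon^\star}(x^\star+\Delta,y^\star)$. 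Therefore, if $\Delta$ is ``good'' in the sense that $\mathcal{L}_{\epsilon^\star}(x^\star+\Delta,y^\star)<\mathcal{L}_{\epsilon^\star}(x^\star,y^\star)-\delta$, then $f(x^\star+\Delta,\mathcal{Y}')<f(x^\star,y^\star)-\delta$, and after applying an $|F-f|\le \delta/8$ concentration bound the test $f_{\mathrm{new}}\le f_{\mathrm{old}}-\delta/4$ passes and Algorithm \ref{alg:Informal} accepts deterministically (the annealing branch is never entered on this $\Delta$).

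Finally, the stopping rule forces $r_{\mathrm{max}}$ consecutive rejections at the terminal $(x^\star,y^\star)$, during which $(x,y)$ is unchanged, so the rejected proposals $\Delta_1,\dots,\Delta_{r_{\mathrm{max}}}$ are i.i.d.\ samples from $Q_{x^\star,y^\star}$. By the contrapositive of the previous paragraph, each of these is (conditionally) not good w.h.p. Choosing $r_{\mathrm{max}}=\Theta(\omega^{-1}\log(1/\omega))$ and applying a Chernoff bound to the Bernoulli variables $\mathbf{1}[\Delta_j\text{ is good}]$ yields that the population probability $p:=\Pr_{\Delta\sim Q_{x^\star,y^\star}}[\Delta\text{ is good}]$ is at most $\omega$ with high probability, which is exactly \eqref{eq_approx_local_equilibrium_x}. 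The main obstacle I anticipate is the bookkeeping across all random events: (i) union-bounding the $G_y$ and $F$ concentration bounds across polynomially many inner and outer iterations; (ii) rigorously certifying that the discrete gradient-ascent trajectory, after interpolation, is a genuine $\epsilon^\star$-greedy path, i.e.\ that the Lipschitz smoothness keeps the directional derivative above $\epsilon^\star$ between consecutive iterates; and (iii) reconciling the two nearby tolerances $\epsilon^\star$ and $\epsilon'=\epsilon^\star/(1-2\eta L)$, which differ by a controlled factor and must be absorbed into the $[\epsilon/2,\epsilon]$ window quoted in the theorem.
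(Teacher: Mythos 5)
Your proposal follows essentially the same route as the paper's proof of Lemma~\ref{lemma_greedyMinimax}: (a)~use the stopping condition of Algorithm~\ref{alg:InnerMaxLoop} plus gradient concentration to get $\|\nabla_y f(x^\star,y^\star)\|<\epsilon^\star$, hence $P_{\epsilon^\star}(x^\star,y^\star)=\{y^\star\}$ and $\mathcal{L}_{\epsilon^\star}(x^\star,y^\star)=f(x^\star,y^\star)$; (b)~certify the interpolated gradient-ascent trajectory as an $\epsilon^\star$-increasing path (the paper's Proposition~\ref{Lemma_Greedypath}) so that $f(x^\star+\Delta,\mathcal{Y}')\le\mathcal{L}_{\epsilon^\star}(x^\star+\Delta,y^\star)$; (c)~show a ``good'' $\Delta$ (one that drops $\mathcal{L}_{\epsilon^\star}$ by $\delta$) would be accepted deterministically, so $r_{\mathrm{max}}$ consecutive rejections at the output certifies $\Pr[\Delta\text{ is good}]\lesssim\omega$. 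These match the paper's steps exactly, and you correctly flag the three bookkeeping hurdles (union bounds, path certification, reconciling $\epsilon'$ vs.\ $\epsilon_i$) that the formal proof dispatches.

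One small imprecision in step (c): you phrase the final inference as ``each rejected $\Delta_j$ is not good, then Chernoff gives $p\le\omega$,'' but the valid inference runs in the other direction — one conditions on $p>\omega$, shows the probability of $r_{\mathrm{max}}$ straight rejections is at most $(1-\Omega(\omega))^{r_{\mathrm{max}}}$, and concludes by union-bounding over the $\mathcal{I}$ possible terminal iterations that w.h.p.\ termination happens at an index where $p\le\omega$. The paper does exactly this via the quantity $p_i$ and the bound $\Pr(\mathsf{Accept}_i=\mathsf{False}\mid x_i,y_i)\le p_i+\omega/8$, which also absorbs the extra randomness from the $F$ and $G_y$ oracles that your Bernoulli indicator does not track. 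This is a phrasing issue rather than a missing idea, since the Chernoff estimate you invoke is the same exponential bound used by the paper.
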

\begin{proof}[Proof outline of Lemma \ref{lemma_greedyMinimax_informal}.]
Since we have already shown that Algorithm \ref{alg:InnerMaxLoop} runs stochastic gradient ascent until it reaches a $\eps$-stationary point, $\|\nabla_{\mathrm{y}} f(x^\star, y^\star)\| \leq \epsilon$. 
The accept/reject rule (Line \ref{accept_reject_step} of Algorithm \ref{alg:Informal}) says that the proposed update $x^\star + \Delta$ is rejected with probability at least $1-O(\eps)$ whenever
\be \label{eq_proof_overview3}
    f(x^\star +
   \Delta,y') \geq f(x^\star ,y^\star) - \epsilon,
   \ee
    where the maximization subroutine computes $y'$ by gradient ascent on $f(x^\star+\Delta, \cdot)$ initialized at $y^\star$. 
And the stopping condition in Line \ref{InnerWhileStart} of Algorithm \ref{alg:Informal} implies that the last $r_{\mathrm{max}}$ updates $x^\star + \Delta$ proposed by the  min-player were all rejected,
  and hence were sampled from the distribution ${Q}_{x^\star,y^\star}$. 
     Roughly, 
    this fact together with \eqref{eq_proof_overview3}  implies that, with high probability, the proposal distribution  ${Q}_{x^\star,y^\star}$ at the point $(x^\star,y^\star)$ satisfies
 \be 
{\Pr_{\Delta \sim {Q}_{x^\star,y^\star}}[ f(x^\star + \Delta,y') \geq  f(x^\star ,y^\star)}& - \epsilon ] \geq 1 - O(r_{\mathrm{max}}^{-1})
   \\ &= 1-O(\epsilon).\label{eq:f_bound}
 \ee
  To show \eqref{eq_approx_local_equilibrium_x} holds, we need to replace $f$ in the above equation with the  min-player's objective $\mathcal{L}_\epsilon$.  
  Towards this end, we first use the fact that $f$ has $O(1)$-Lipschitz gradient,
    together with \eqref{eq_proof_overview2}, to show that, w.h.p., the stochastic gradient ascent steps of Algorithm \ref{alg:InnerMaxLoop} form an  ``$\epsilon$-increasing"  path, starting at $y^{\star}$ with endpoint $y'$, along which $f$ increases at rate at least $\epsilon$ (Prop. \ref{Lemma_Greedypath}).
    Since $\mathcal{L}_\epsilon$ is the supremum of $f$ at the endpoints of {\em all} such $\epsilon$-increasing paths starting at $y^\star,$ 
\begin{equation} \label{eq:g_bound}
 f(x^\star + \Delta, y') \leq \mathcal{L}_\epsilon(x^\star + \Delta, y^\star).
\end{equation} 
 Finally,  recall (Section~\ref{sec_theoretical_results}) that  $\|\nabla_{\mathrm{y}} f(x^\star, y^\star)\| \leq \epsilon^\star$ implies that 
     $\mathcal{L}_\epsilon(x^\star, y^\star) = f(x^\star, y^\star), $
 and hence  \eqref{eq_approx_local_equilibrium_y} holds. 
    Plugging this and \eqref{eq:g_bound} 
    into \eqref{eq:f_bound} implies that
     \be 
{\Pr_{\Delta \sim {Q}_{x^\star,y^\star}}\left[ \mathcal{L}_\epsilon(x^\star + \Delta,y')  \geq   \mathcal{L}_\epsilon(x^\star ,y^\star)  -  \epsilon \right]  \geq   1 - O(\epsilon),}
 \ee
    and hence that \eqref{eq_approx_local_equilibrium_x} holds.
    \end{proof}

\section{Empirical Results}
\label{sec:experiments}

\subsection{Performance on Test Functions} \label{sec_test_functions}
We apply our algorithm to three test loss functions previously considered in \cite{wang2019solving}:
 $$F_1(x,y)  = -3x^2 - y^2 + 4xy,$$
    $$F_2(x,y)  = 3x^2 + y^2 + 4xy,$$
    $$F_3(x,y)  = (4x^2 - (y - 3x + 0.05x^3)^2 - 0.1y^4)e^{-0.01(x^2 + y^2)}.$$
We choose these functions because they are known to be challenging for gradient-based algorithms.

Both $F_1$ and $F_3$ have global min-max at $(0,0)$, yet popular gradient-based algorithms including GDA, OMD, and extra-gradient (EG) algorithm were shown in \cite{wang2019solving} not to converge on these functions.
In contrast, we observe that our algorithm finds the global min-max points of both $F_1$ and $F_3$.
To see why our algorithm converges, note that it uses the maximization subroutine (Algorithm~\ref{alg:InnerMaxLoop}) to first find the ``ridge" along which $f(x,y)$ is a local maximum in the $y$ variable, and to then return to a point on the ridge every time the min-player proposes an update.
Since the min-player in our algorithm only accepts updates which lead to a net  decrease in $f$, our algorithm eventually finds the point $(0,0)$ on this ridge where $f$ is minimized.
In comparison, for GDA and OMD, the max-player's gradient $\nabla_y f$ is zero along the ridge where $f(x,y)$ is a local maximum in the $y$ variable, while the min-player's gradient $-\nabla_x f$ can be large; on $F_1$ and $F_3$ $-\nabla_x f$ points away from this ridge, and this can prevent GDA and OMD from converging to the point $(0,0)$.

In case of $F_2$, 
$\min_{x\in \mathbb{R}} \max_{y \in \mathbb{R}} f(x,y) =  + \infty$.
On $F_2$, 
GDA, OMD, and EG all {\em converge} to $(0,0)$ which is neither a global min-max 
nor a local min-max point.
In contrast, our algorithm
diverges to infinity.

When applying our algorithm 
we use 
$\eta = 0.05$ and 
$Q_{x,y} \sim N(0, 0.25)$. For GDA and OMD we use learning rate $0.05$    (see Appendix \ref{toy_simulation_setup}).

\begin{figure}[t]
\centering
\includegraphics[width=0.8\linewidth]{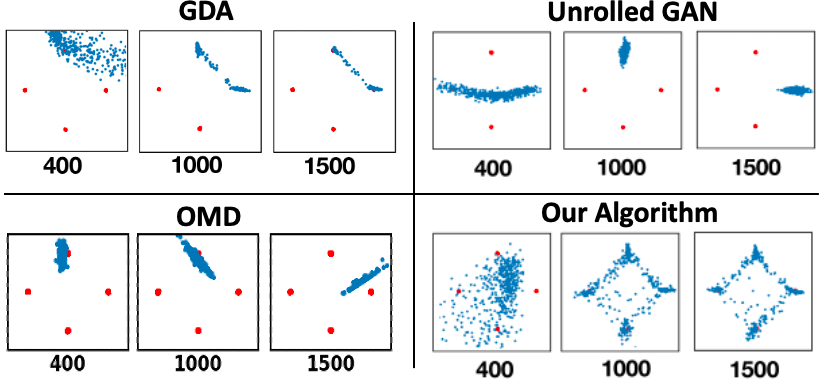}
\caption{\small Our algorithm, unrolled GANs with $k = 6$ unrolling steps, OMD, and  GDA with 
$k = 6$ max-player steps 
 trained on a 4-Gaussian mixture for 1500 iterations.  Our algorithm used $k = 6$ max-player steps and acceptance rate 
  $e^{-1/\tau} = 0.25$. 
 Plots show the points generated by each algorithm after the specified 
 iterations.}
\label{fig:4_Gaussians}
\end{figure}

\subsection{Performance when Training GANs}
We apply our algorithm to train GANs to learn from both synthetic and real-world datasets.
 When training on both datasets, we choose the proposal distribution $Q$ in our algorithm to be the (ADAM) stochastic gradients for $-\nabla_x f$.
We formulate GAN using our framework with cross entropy loss,
$$f(x,y) = - \left( \log(\mathcal{D}_y(\zeta)) + \log(1-
\mathcal{D}_y(\mathcal{G}_x(\xi))\right),$$ where $x, y$ are the parameters of generator $\mathcal{G}$ and discriminator $\mathcal{D}$
respectively,
$\zeta$ is sampled from data, and $\xi \sim N(0,I_d)$.

To adapt Algorithm \ref{alg:Informal} to training GANs, we make certain
simplifications: 
1) we use a fixed temperature $\tau$ at all iterations $i$, making
it simpler to choose a good temperature value, rather than a temperature schedule;
2) 
we replace the randomized acceptance rule with a deterministic rule:  If  $f_{\mathrm{new}} \leq  f_{\mathrm{old}}$ we accept, and if $f_{\mathrm{new}} > f_{\mathrm{old}}$ we only accept if $i$ is a multiple of $e^{\frac{1}{\tau}}$ 
  (i.e., average acceptance rate of $e^{-\frac{1}{\tau}}$);
3) 
  we take a fixed number of max-player steps at each iteration,
  instead of taking as many steps as needed to achieve a small gradient.
These simplifications do not 
significantly affect our algorithm's performance (see Appendix
 \ref{sec:More_simulations_annealing}).

\begin{table}[t]
\center
\caption{\small Gaussian mixture dataset. The fraction of times (out of 20 runs) each method generates $m$ modes, for $m \in [4]$.
$k$ is the number of max-player steps per iteration.
Our algorithm learns 4 modes in more runs than other algorithms.}
\small
\begin{tabular}{lcccc}
\toprule
 & \multicolumn{4}{c}{Number of modes learnt}\\
Method &  1 & 2 & 3 & 4 \\
\midrule
This paper & 0 & 0.15 & 0.15 & \textbf{0.70}  \\
GDA ($k=1$) & 0.95 & 0.05 & 0 & 0 \\
GDA ($k=6$) & 0.05 & 0.75 & 0 & 0.20 \\
OMD & 0.80 & 0.20 & 0 & 0 \\
Unrolled-GAN & 0.75 & 0.15 & 0.10 & 0 \\
\bottomrule
\end{tabular}
\label{tbl:gaussian_results}
\end{table}

\noindent
\textit{Gaussian mixture dataset.} This synthetic dataset consists of 512 points sampled from a mixture of four equally weighted Gaussians in two dimensions with standard deviation 0.01 and means at $(0,1)$, $(1,0)$, $(-1,0)$, $(0,-1).$
Since modes in this dataset are well-separated, mode collapse 
can be clearly detected. 
We report the number of modes learnt by the GAN from each training algorithm across iterations.

\paragraph{Baselines.} We compare
 our algorithm's performance
 to GDA, OMD \citep{Daskalakis2018optimism},
 and unrolled GAN \citep{Metz2017unrolled}. %
  For the 
 networks and hyperparameter details,
 see Appendix
 \ref{appendix_hyperparameters}.

\paragraph{Results.}
We trained GANs on the Gaussian mixture dataset for 1500
 iterations using our
 algorithm, unrolled GANs with 6 unrolling steps,  GDA
 with $k=1$ and $k=6$ max-player steps (using Adam updates), and OMD with $k=6$  max-player steps.
 We repeated each simulation 20
 times. 
The performance of the output GAN learned by all algorithms is presented in Table~\ref{tbl:gaussian_results}, while
Figure~\ref{fig:4_Gaussians} shows the samples from generators of the different training algorithms at various iterations (see Appendix
 \ref{sec:More_simulation_results}  for images
 from all runs).
The GAN returned by our algorithm learns all four modes in 70\% of the runs, significantly more than the other training algorithms (Table~\ref{tbl:gaussian_results}).  
{Thus, for this synthetic dataset, our algorithm is the most  effective in avoiding mode collapse and cycling in comparison to baselines.}

\begin{figure*}[t]
    \begin{minipage}{0.495\textwidth}
    \begin{center}
      \textbf{\small  GDA} \\
      \vspace{0.4mm}
    \fbox{\includegraphics[width=0.96\textwidth]{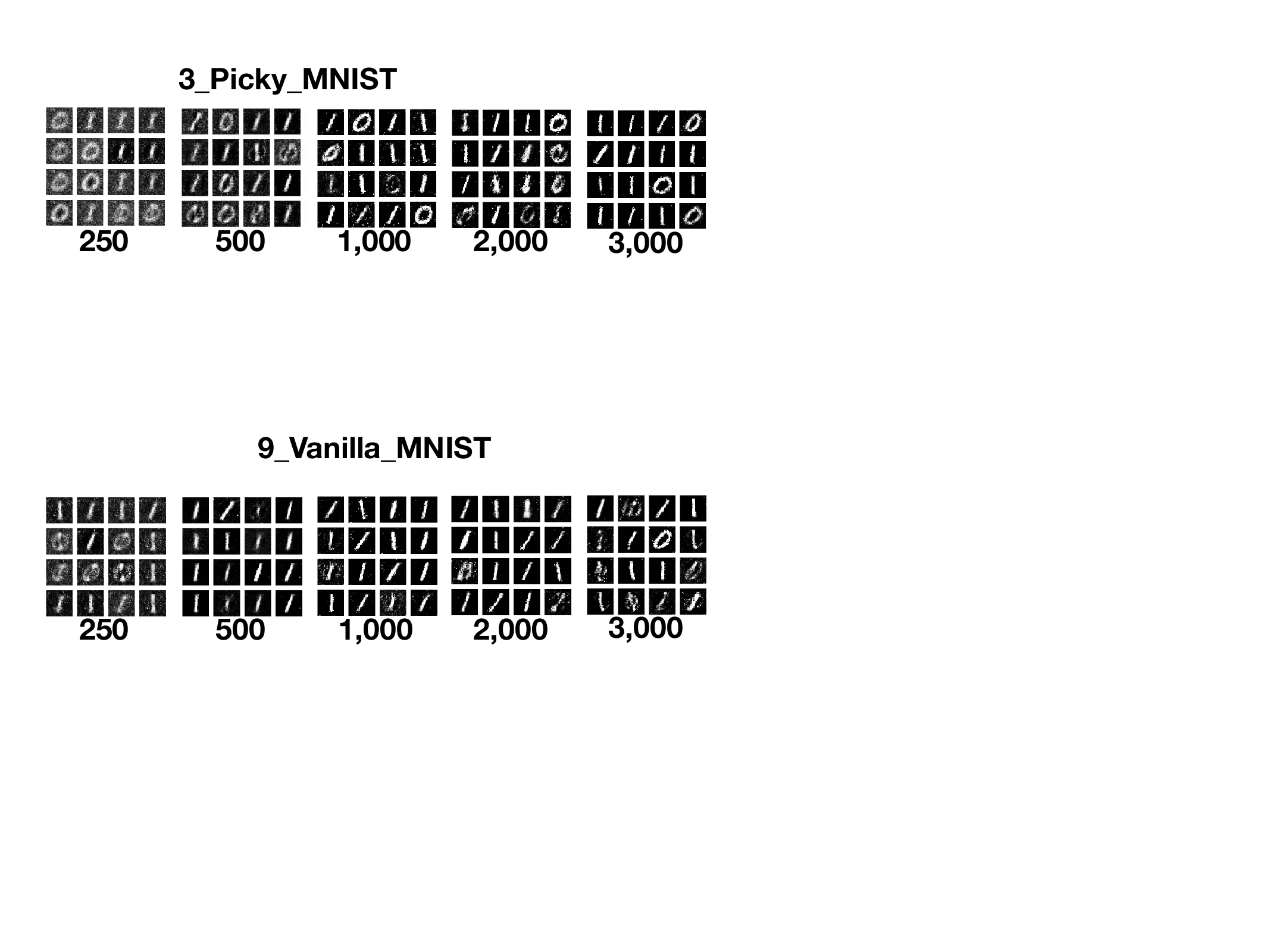}}
  \end{center}
  \end{minipage}
    \begin{minipage}{0.495\textwidth}
    \begin{center}
    \textbf{\small Our algorithm} \\
    \fbox{\includegraphics[trim={0, 0, 0, 0}, clip, width=0.96\textwidth]{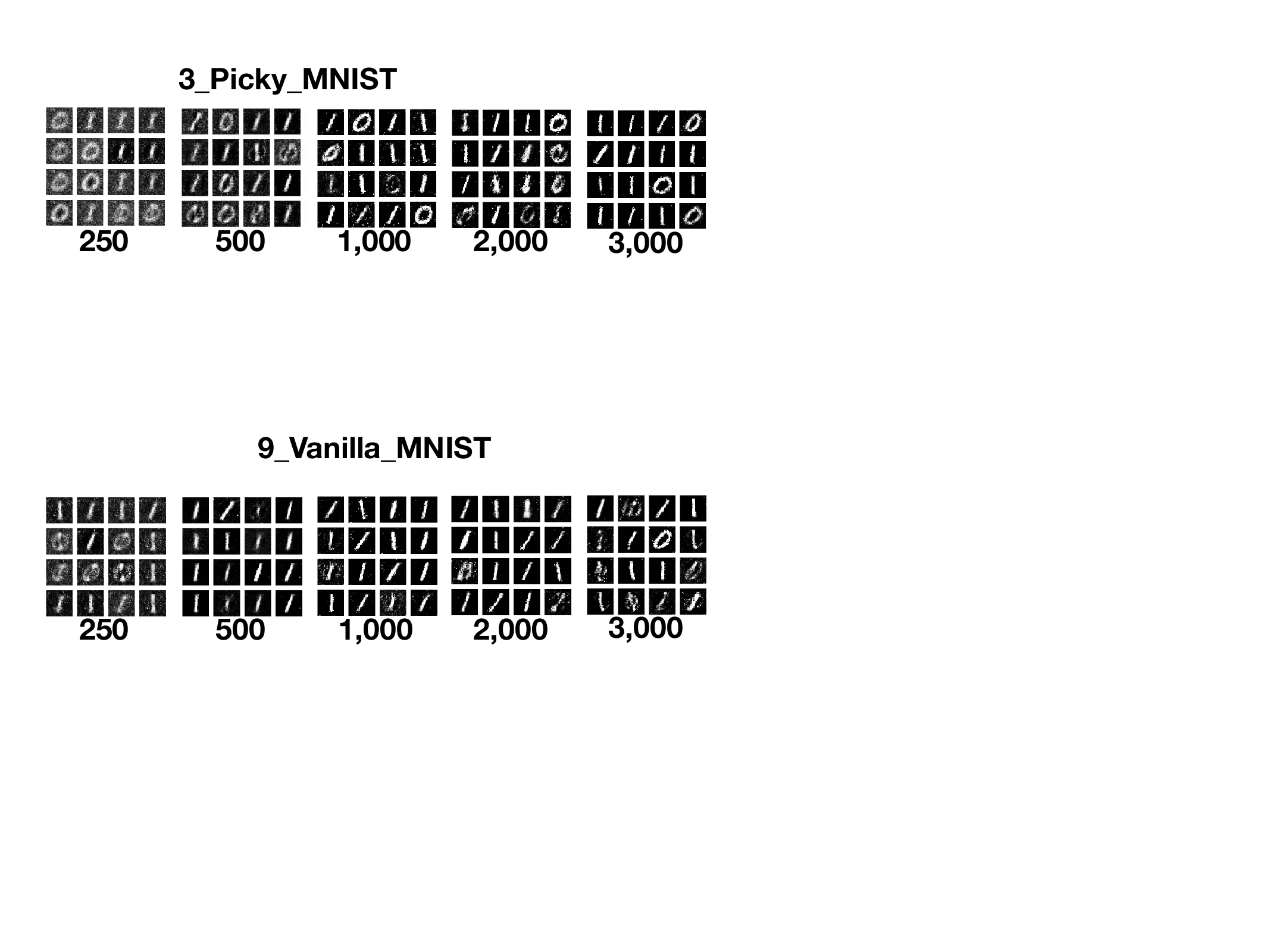}}
  \end{center}
  \end{minipage}
  \caption{Images generated at various iterations by the GAN trained using GDA vs our algorithm (for 3000 iterations) on the 01-MNIST dataset. See Appendix~\ref{sec:mnist_results} for more results and details.}
  \label{fig:mnist_examples}
\end{figure*}

\paragraph{Results on real-world datasets.}
{While we focus on  2-D  and Gaussian mixture GAN simulations in this section to illustrate the convergence properties of our algorithm, we also ran our algorithm on two real-world datasets, 01-MNIST and CIFAR-10.
For the 01-MNIST dataset, samples generated from GANs trained using GDA and our algorithm are presented in Figure~\ref{fig:mnist_examples}.
We observed that GANs trained on the  01-MNIST dataset with the gradient descent-ascent algorithm (GDA) exhibit mode collapse in 77\% of the trial runs (Figure~\ref{fig_01MNISTGDA},
in Appendix~\ref{sec:mnist_results}), while GANs trained with our algorithm do not exhibit mode collapse in any of the training runs (Figure~\ref{fig_01MNISTOur} in Appendix~\ref{sec:mnist_results}).
For the CIFAR-10 dataset, samples generated from GANs trained using our algorithm are presented in Figure~\ref{fig:cifar_examples}.
On CIFAR-10, our algorithm achieved a mean Inception score of 4.68 after 50k iterations (across 20 repetitions); in comparison, GDA achieved a mean Inception score of 4.51 and OMD achieved a mean Inception score of 1.96 (Table~\ref{tbl:cifar_results_main}).
Detailed results and methodologies used for 01-MNIST and CIFAR-10 datasets are presented in Appendix \ref{sec:mnist_results} and \ref{CIFAR_results_appendix} respectively.}

{
Experiments with GANs also demonstrate that our algorithm scales to high-dimensional parameter spaces; the dimension $d$ of the space of trainable parameters used in the GAN experiments was around $3.5\times 10^4$ for the GANs trained on the Gaussian mixture dataset, $3\times 10^6$ for 01-MNIST and $2\times 10^6$ for CIFAR-10.
\footnote{The code for the above simulations is available at \url{https://github.com/vijaykeswani/Min-Max-Optimization-Algorithm}.}
}

\begin{table}

\begin{center}
\caption{CIFAR-10 dataset: The mean (and standard error) of Inception Scores of models from different training algorithms.
Note that, GDA and our algorithm return generators with similar mean performance; however, the standard error of the Inception Score in case of GDA is relatively larger.
}
\label{tbl:cifar_results_main}
\small
\begin{tabular}{lccc}
\toprule
& \multicolumn{3}{c}{Iteration} \\
Method & 5000 & 25000 & 50000 \\
\midrule
Ours & 2.71 (0.28) & 4.10 (0.35)  & \textbf{4.68} (0.39) \\
GDA & 2.80 (0.52) & 4.28 (0.77) & 4.51 (0.86) \\
OMD & 1.60 (0.18) & 1.73 (0.25) & 1.96 (0.26) \\
\bottomrule
\end{tabular}
\end{center}
\end{table}

\begin{figure}
\centering
\includegraphics[width=0.5\linewidth, height=4.3cm]{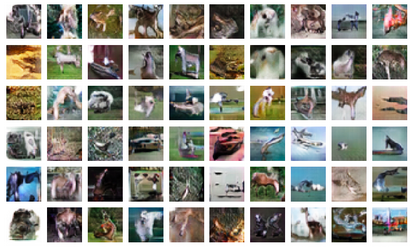}
\caption{Images generated by the GAN trained using our algorithm on the CIFAR-10 dataset. See Appendix~\ref{CIFAR_results_appendix} for more results and samples from GANs trained using other baselines.}
\label{fig:cifar_examples}
\end{figure}

 \section{Proof of Theorem \ref{thm:GreedyMinimax-main}} \label{sec:proofs}

In this section we give the  proof of Theorem \ref{thm:GreedyMinimax-main}.

\paragraph{Setting parameters:}
We start by setting parameters which will be used in the proof.
 Let $\mathfrak{b}_0 = |B|$,  $\mathfrak{b}_y = |B_y|$ denote the batch sizes. 
And note that the fact that each $f_i$ has $L$-Lipschitz gradient for all $i\in [m]$, implies that each $f_i$ is also $L_1$-Lipschitz, where $L_1= \sqrt{2L b}$.

For the theoretical analysis, we assume $0< \epsilon \leq 1$, and set the following parameters:
\begin{enumerate}

\item $\nu = \frac{1}{20}\bigg[\frac{320 b (L+1)}{\eps^2}\bigg(\tau_1 \log\left(\frac{128}{\omega^2}\right) + \frac{2048 b}{\omega \delta} \log^2\left(\frac{100}{\omega}\left(\tau_1 + 1\right)\left(8 \frac{b}{\delta} +1\right)\right)+ 1          \bigg)            \bigg]^{-2}$

\item $r_{\mathrm{max}} = \frac{128}{\omega} \log^2\bigg(\frac{100}{\omega}(\tau_1+1)\left(8 \frac{b}{\delta} +1\right) +\log\left(\frac{1}{\nu}\right)\bigg)$

\item Define $\mathcal{I}:= \tau_1 \log(\frac{r_{\mathrm{max}}}{\nu}) + 8r_{\mathrm{max}} \frac{b}{\delta} + 1$

\item $\eta = \min\left( \frac{1}{10L}, \frac{1}{8L \mathcal{I}}\right)$

\item Define $\mathcal{J} := \frac{16b}{\eta \epsilon^2}$

\item $\hat{\epsilon}_1 = \min\left(\eps \eta L, \frac{\delta}{8}\right)$

\item  $\mathfrak{b}_0 = \hat{\epsilon}_1^{-2}300^2 b^2 \log(\frac{1}{\nu})$

\item $\mathfrak{b}_{y} =  \epsilon^{-2}  \hat{\epsilon}_1^{-2}300^2 L_1^2 \log(\frac{1}{\nu})$

\end{enumerate}

\noindent
In particular, note that  $$\nu \leq  \frac{1}{20}\left(2 \mathcal{J} \mathcal{I} + 2 \times \left(r_{\mathrm{max}}
\frac{8b}{\delta} +
1\right)\right)^{-1} \text{ and } r_{\mathrm{max}} \geq \frac{4}{\omega} \log\left(\frac{100 \mathcal{I}}{\omega}\right).$$   At every iteration $i\leq \mathcal{I}$, where we set $\epsilon' = \epsilon_i$.
We also have 
\begin{equation} \label{eq_hyperparameter_1}
\epsilon' \leq \epsilon_0 \left(\frac{1}{1-2\eta L}\right)^{2i}  \leq \epsilon.
\end{equation}
To see why \eqref{eq_hyperparameter_1} holds, note that since we set the hyperparameter $\eta$ to be $\eta = \min \left(\frac{1}{10 L}, \frac{1}{8L \mathcal{I}}\right)$, we have $$1- 2 \eta L \leq 1-\frac{1}{4 \mathcal{I}}.$$
 Since we also set $\eps_0 = \frac{\eps}{2}$, we therefore have that for all $i \leq \mathcal{I}$,
 \begin{align*}
 \epsilon_0(1- 2 \eta L)^{-2 i } \leq \frac{\epsilon}{2} \left(1-\frac{1}{4 \mathcal{I}}\right)^{-2 \mathcal{I} }
 \leq \epsilon,
  \end{align*}
where the second inequality holds because $$\left(1-\frac{1}{2t}\right)^{-t} \leq 2$$ for all $t \geq 1.$

\subsection{Step 1: Bounding the Number of Gradient, Function, and Sampling Oracle Evaluations}\label{sec_step1}
 
 The first step in our proof is to bound the number of gradient, function, and sampling oracle evaluations required by our algorithm.
 Towards this end, we begin by showing a concentration bound (Proposition \ref{Prop_Azuma}) for the value of the stochastic gradient and function oracles used by our algorithm.
 Next, we bound the number of iterations of its discriminator update subroutine Algorithm \ref{alg:InnerMaxLoop} (Proposition \eqref{eq_stepsizey}), and the number of iterations in Algorithm \ref{alg:Informal} (Proposition \ref{thm_OuterRuntime}); together, these two bounds imply a $\mathrm{poly}(b,L, 1/\eps, 1/\delta, 1/\omega)$ bound on the number of gradient, function, and sampling oracle evaluations (Lemma \ref{lemma_polytime}).
 
\begin{proposition} \label{Prop_Azuma}
For any $\hat{\epsilon}_1, \nu>0$, if we use batch sizes $\mathfrak{b}_{y} = \epsilon^{-2} \hat{\epsilon}_1^{-2}300^2 L_1^2 \log(1/\nu)$ and  $\mathfrak{b}_0 = \hat{\epsilon}_1^{-2}300^2 b^2 \log(1/\nu)$, we have that
\be \label{eq_AzumaG}
\mathbb{P}\left(\|G_{y}(x,y) - \nabla_{y} f(x,y)\| \geq \frac{\hat{\epsilon}_1}{10}\right) < \nu,
\ee
and
\be \label{eq_AzumaF}
\mathbb{P} \left (|F(x,y) - f(x,y)| \geq \frac{\hat{\epsilon}_1}{10} \right) < \nu.
\ee
\end{proposition}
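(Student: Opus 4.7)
The plan is to prove both concentration bounds by direct application of scalar and vector Hoeffding-type (Azuma) inequalities, exploiting the fact that the batches $B$ and $B_y$ are sampled i.i.d. with replacement from $[m]$. I will first verify the uniform bounds on the per-sample summands that the concentration inequalities require, then plug in and solve for the batch sizes.

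For the function oracle bound \eqref{eq_AzumaF}, I would observe that for each $i \in B$ the random variable $f_i(x,y)$ lies in $[-b, b]$ (since each $f_i$ is $b$-bounded) and has mean $f(x,y)$, since sampling $i$ uniformly from $[m]$ and with $\mathbb{E}[f_i(x,y)] = \frac{1}{m}\sum_j f_j(x,y) = f(x,y)$. So $F(x,y) - f(x,y)$ is an average of $\mathfrak{b}_0$ i.i.d.\ bounded mean-zero random variables, and a direct application of the scalar Hoeffding inequality gives
\[
\mathbb{P}\left(|F(x,y) - f(x,y)| \geq \tfrac{\hat\epsilon_1}{10}\right) \leq 2 \exp\left(-\frac{\mathfrak{b}_0 \hat\epsilon_1^2}{200\, b^2}\right).
\]
Setting $\mathfrak{b}_0 = 140^2 b^2 \hat{\epsilon}_1^{-2} \log(1/\nu)$ makes the exponent large enough (with room to spare on the constant 140) to force the right-hand side below $\nu$.

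For the gradient oracle bound \eqref{eq_AzumaG}, the main additional ingredient is that each $\nabla_y f_i(x,y)$ is a vector with norm at most $L_1 = \sqrt{2Lb}$. This Lipschitz bound on $f_i$ follows from the standard fact that a $b$-bounded function with $L$-Lipschitz gradient is automatically $\sqrt{2Lb}$-Lipschitz (which the paper already notes). Then $G_y(x,y) - \nabla_y f(x,y)$ is an average of $\mathfrak{b}_y$ i.i.d.\ mean-zero random vectors, each of norm at most $2L_1$. Invoking a vector-valued Hoeffding (a.k.a.\ vector Azuma) inequality for bounded i.i.d.\ vectors in $\mathbb{R}^d$, of the form
\[
\mathbb{P}\left(\|G_y(x,y) - \nabla_y f(x,y)\| \geq t\right) \leq 2 \exp\left(-\frac{\mathfrak{b}_y t^2}{c\, L_1^2}\right),
\]
for some absolute constant $c$ (and crucially with no dimension dependence), plugging in $t = \hat\epsilon_1/10$ and $\mathfrak{b}_y = 140^2 L_1^2 \hat{\epsilon}_1^{-2} \log(1/\nu)$ yields the desired bound below $\nu$.

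The only real subtlety is the choice of vector concentration in the second step: one must use a version whose deviation bound depends only on the Euclidean norm bound of the summands (not on the ambient dimension $d$), so that the resulting batch size is dimension-free, consistent with the overall dimension-independent guarantee of Theorem \ref{thm:GreedyMinimax-main}. Standard vector Azuma/Hoeffding inequalities (or the Pinelis-type inequality for Hilbert-space-valued martingales) have this property, and the constant $140$ in the statement is chosen conservatively so the arithmetic absorbs the universal constant $c$ and the factor of 2 in front. No integrations or delicate estimates are required; the proof is essentially two invocations of a black-box concentration inequality followed by algebra.
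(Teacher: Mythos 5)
Your proposal is correct and follows essentially the same route as the paper: decompose $G_y - \nabla_y f$ (resp.\ $F - f$) as an average of i.i.d.\ mean-zero summands bounded by $2L_1$ (resp.\ $2b$), invoke a dimension-free vector Azuma/Hoeffding inequality, and absorb universal constants into the factor $140^2$. The paper applies the same vector Azuma bound to both cases (swapping $L_1$ for $b$), whereas you use scalar Hoeffding for the function oracle, but this is an immaterial difference since the scalar case is a special case of the vector one.
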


\begin{proof}

From Section \ref{sec_theoretical_results} we have that
\be
G_{y}(x,y) - \nabla_{y} f(x,y) = \frac{1}{\mathfrak{b}_{y}} \sum_{i \in B_y} [\nabla_{y} f_i(x,y)  - \nabla_{y} f(x,y)],
\ee
where the batch $B_y \subseteq [m]$ is sampled iid with replacement from $[m]$.
But since each $f_i$ has $L$-Lipschitz gradient, we have (with probability 1) that
\be
&\|\nabla_{y} f_i(x,y)  - \nabla_{y} f(x,y)\| \leq \|\nabla_{y} f_i(x,y)\| + \|\nabla_{y} f(x,y)\| 
 \leq 2 L_1.
\ee

\noindent
Now, 
\be
&\mathbb{E}[\nabla_{y} f_i(x,y)  - \nabla_{y} f(x,y)] = \mathbb{E}[\nabla_{y} f_i(x,y)  - \mathbb{E}[\nabla_{y} f_i(x,y)]] = 0.
\ee

\noindent
Therefore, by the Azuma–Hoeffding inequality for mean-zero bounded vectors, 
 we have
\be
&\mathbb{P}\bigg( \bigg\| \frac{1}{\mathfrak{b}_{y}} \sum_{i \in B_y} [\nabla_{y} f_i(x,y)  - \nabla_{y} f(x,y)]\bigg\|  \geq \frac{s \sqrt{\mathfrak{b}_{y}} +1}{\mathfrak{b}_{y}} 2 L_1\bigg) < 2e^{1- \frac{1}{2}s^2} \qquad \qquad \forall s >0.
\ee

\noindent
Hence, if we set $s  = 6 \log^{\frac{1}{2}}\left(\frac{2}{\nu}\right)$, we have that $$7 \log^{\frac{1}{2}}\left(\frac{2}{\nu}\right)  \sqrt{\mathfrak{b}_{y}} +1 \geq s  \sqrt{\mathfrak{b}_{y}} +1$$ and hence
\be
&\mathbb{P}\bigg( \bigg\| \frac{1}{\mathfrak{b}_{y}} \sum_{i \in B_y} [\nabla_{y} f_i(x,y)  - \nabla_{y} f(x,y)]\bigg\| \geq \frac{7 \log^{\frac{1}{2}}(\frac{2}{\nu})  \sqrt{\mathfrak{b}_{y}}}{\mathfrak{b}_{y}} 2 L_1\bigg) < \nu.
\ee

\noindent
Therefore,
\be
\mathbb{P}\bigg( \bigg\| \frac{1}{\mathfrak{b}_{y}} \sum_{i \in B_y} [\nabla_{y} f_i(x,y)  - \nabla_{y} f(x,y)]\bigg\|  \geq \frac{\hat{\epsilon}_1}{10} \bigg) < \nu
\ee
which completes the proof of Inequality \eqref{eq_AzumaG}.
Inequality \eqref{eq_AzumaF} follows from the exact same steps as the proof of Inequality \eqref{eq_AzumaG}, if we replace the bound $L_1$ for $\|\nabla_{y} f_i(x,y)\|$ with the bound  $b$ on $|f_i(x,y)|$.

\end{proof}

\begin{proposition} \label{thm_stepsize}
For every $j$, with probability at least $1-\nu$ we have that either $\|G_y(\mathsf{x}, \mathsf{y}_j)\|< \epsilon$, or that
\be\label{eq_stepsizeyb}
&\| \nabla_y f(\mathsf{x}, \mathsf{y}_j) - G_y(\mathsf{x}, \mathsf{y}_j)\| \leq \frac{1}{10} \eta L \times \min\left(\|G_y(\mathsf{x}, \mathsf{y}_j)\|, \,\, \|\nabla_y f(\mathsf{x}, \mathsf{y}_j)\| +  \frac{\hat{\epsilon}_1}{10}\right)
\ee
and
\be \label{eq_stepsizey}
\| \mathsf{y}_{\aj+1} - \mathsf{y}_{\aj}\| &= \eta \| G_y(\mathsf{x}, \mathsf{y}_j)\| \leq 2 \eta \| \nabla_y f(\mathsf{x}, \mathsf{y}_j)\|
\ee
\end{proposition}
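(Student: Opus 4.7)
The plan is to condition on the concentration event from Proposition \ref{Prop_Azuma}, which--thanks to the batch size $\mathfrak{b}_y$ being tuned for exactly this purpose--gives $\|G_y(\mathsf{x}, \mathsf{y}_j) - \nabla_y f(\mathsf{x}, \mathsf{y}_j)\| \leq \hat{\epsilon}_1/10$ with probability at least $1-\nu$. I would work on this event throughout and then dichotomize on $\|G_y(\mathsf{x}, \mathsf{y}_j)\|$: if it is below $\epsilon$ the first alternative of the proposition holds with nothing to prove, and otherwise I would establish both \eqref{eq_stepsizeyb} and \eqref{eq_stepsizey}.

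Assuming $\|G_y\| \geq \epsilon$, I would first apply reverse triangle to propagate the concentration bound into a lower bound on the true gradient: $\|\nabla_y f\| \geq \|G_y\| - \hat{\epsilon}_1/10 \geq (9/10)\epsilon$, using the parameter inequality $\hat{\epsilon}_1 \leq \epsilon$ from the setup. The equality half of \eqref{eq_stepsizey}, namely $\|\mathsf{y}_{j+1} - \mathsf{y}_j\| = \eta \|G_y\|$, is immediate from the update rule $\mathsf{y}_{j+1} \leftarrow \mathsf{y}_j + \eta g_{y,j}$ of Algorithm \ref{alg:InnerMaxLoop}. The second half $\eta\|G_y\| \leq 2\eta\|\nabla_y f\|$ follows by combining the forward triangle bound $\|G_y\| \leq \|\nabla_y f\| + \hat{\epsilon}_1/10$ with $\hat{\epsilon}_1/10 \leq \epsilon/10 \leq (1/9)\|\nabla_y f\|$. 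For \eqref{eq_stepsizeyb}, both arguments of the $\min$ on the right-hand side are then bounded below by roughly $(9/10)\epsilon$ (the one involving $\|G_y\|$ by assumption, the one involving $\|\nabla_y f\| + \hat{\epsilon}_1/10$ via the lower bound just derived), so the desired inequality reduces to comparing $\hat{\epsilon}_1/10$ against $(\eta L / 10)\cdot(9\epsilon/10)$.

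The fiddliest step, and where I expect the main obstacle to lie, is precisely this last comparison: one must show $\hat{\epsilon}_1 \lesssim \eta L \epsilon$, which is not immediate from $\hat{\epsilon}_1 = \min(\epsilon, \eta L, \delta/8)$ in isolation. To make it go through I would leverage the two parameter constraints $\hat{\epsilon}_1 \leq \epsilon$ and $\hat{\epsilon}_1 \leq \eta L$ in concert with whichever of $\epsilon$ or $\eta L$ dominates in the current regime, and exploit the fact that the algorithm's stopping threshold $\epsilon'$ has been scaled by $1/(1-2\eta L)$ so that the effective gradient lower bound includes an $\eta L$ factor. Once that comparison is in hand, the rest of the proof is routine triangle-inequality bookkeeping and requires no new ideas.
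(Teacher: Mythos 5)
Your plan follows the paper's own proof almost step for step: condition on the concentration event of Proposition~\ref{Prop_Azuma} (which holds with probability $\ge 1-\nu$ by the batch-size choice), dichotomize on whether $\|G_y(\mathsf{x},\mathsf{y}_j)\| < \epsilon$, and in the nontrivial case push the $\hat{\epsilon}_1/10$ error bound through triangle inequalities. Your argument for \eqref{eq_stepsizey} is correct and is exactly what the paper does: the first relation is an equality from the update rule, and the second follows from $\|G_y\| \le \|\nabla_y f\| + \hat{\epsilon}_1/10$ together with the reverse-triangle lower bound $\|\nabla_y f\| \ge \|G_y\| - \hat{\epsilon}_1/10 \ge (9/10)\epsilon \ge \hat{\epsilon}_1/10$. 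One minor tightening for \eqref{eq_stepsizeyb}: both arguments of the $\min$ are in fact $\ge \epsilon$, not merely $\ge (9/10)\epsilon$, since $\|\nabla_y f\| + \hat{\epsilon}_1/10 \ge \|G_y\| - \hat{\epsilon}_1/10 + \hat{\epsilon}_1/10 = \|G_y\| \ge \epsilon$; so the comparison you need is exactly $\hat{\epsilon}_1 \le \eta L\,\epsilon$ with no constant slack.

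The step you flag as the ``main obstacle'' is the right place to be suspicious, and the worry is legitimate. The paper's proof asserts $\frac{\hat{\epsilon}_1}{10} \le \frac{1}{10}\epsilon\eta L$ and justifies it by ``since $\hat{\epsilon}_1 \le \min(\epsilon,\eta L)$,'' but that justification does not go through: $\min(\epsilon,\eta L) \le \epsilon\eta L$ holds only when $\max(\epsilon,\eta L) \ge 1$, while the paper's parameter regime has $0<\epsilon\le 1$ and $\eta L = \min(\tfrac{1}{10},\tfrac{1}{8\mathcal{I}}) \le \tfrac{1}{10}$. Your speculative repair via the $1/(1-2\eta L)$ scaling of $\epsilon'$ will not help either, since that factor is bounded above by a constant near $1$ and cannot supply the missing $\eta L$ factor. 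The correct fix is to strengthen the parameter choice to something like $\hat{\epsilon}_1 = \min(\epsilon\,\eta L,\ \delta/8)$ so that $\hat{\epsilon}_1 \le \eta L\,\epsilon$ is built in (the batch sizes $\mathfrak{b}_0, \mathfrak{b}_y$ would then scale as $\mathrm{poly}(1/\epsilon,1/(\eta L))$ as needed, at no cost to the dimension-independence claim). So you have not missed an idea the paper supplies; rather, your suspicion has surfaced an unjustified step in the paper's own argument, and your write-up should state the required inequality $\hat{\epsilon}_1 \le \eta L\,\epsilon$ explicitly and adjust the parameter setting rather than leaving the resolution to ``whichever regime dominates.''
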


\begin{proof}
By Proposition \ref{Prop_Azuma}, we have that, with probability at least $1-\nu$, and whenever $\|G_y(\mathsf{x}, \mathsf{y}_j)\| \geq \epsilon$,
\be
    &\| \nabla_y f(\mathsf{x}, \mathsf{y}_j) - G_y(\mathsf{x}, \mathsf{y}_j)\| < \frac{\hat{\epsilon}_1}{10} \leq \frac{1}{10} \epsilon \eta L \leq \frac{1}{10} \eta L \times \min\left(\|G_y(\mathsf{x}, \mathsf{y}_j)\|, \,\, \|\nabla_y f(\mathsf{x}, \mathsf{y}_j)\| +  \frac{\hat{\epsilon}_1}{10}\right),
\ee
where the first inequality holds by Proposition \ref{Prop_Azuma},  the second inequality holds since $\hat{\epsilon}_1 \leq \epsilon \eta L$, and the third inequality holds since $\|G_y(\mathsf{x}, \mathsf{y}_j)\| \geq \epsilon$ and since (again by Proposition \ref{Prop_Azuma}) $$\|\nabla_y f(\mathsf{x}, \mathsf{y}_j) - G_y(\mathsf{x}, \mathsf{y}_j)\| < \frac{\hat{\epsilon}_1}{10}.$$
This proves Inequality \eqref{eq_stepsizeyb}.
Moreover, we have that, whenever $\|G_y(\mathsf{x}, \mathsf{y}_j)\| \geq \epsilon$ and in the same probability $1-\nu$ event where \eqref{eq_stepsizeyb} holds,
\be\label{eq_stepsizey_c}
\|\eta G_y(\mathsf{x}, \mathsf{y}_j)\| \leq \eta \left(\| \nabla_y f(\mathsf{x}, \mathsf{y}_j)\| + \frac{\hat{\epsilon}_1}{10}\right).
\ee
 Thus,
 \begin{equation*}
     2 \eta \| \nabla_y f(\mathsf{x}, \mathsf{y}_j)\| \geq 2\eta\left(\|G_y(\mathsf{x}, \mathsf{y}_j)\|  - \frac{\hat{\epsilon}_1}{10}\right)
     \geq \eta \|G_y(\mathsf{x}, \mathsf{y}_j)\| = \| \mathsf{y}_{\aj+1} - \mathsf{y}_{\aj}\|,
 \end{equation*}
 where the first inequality holds by \eqref{eq_stepsizey_c}, 
  the second inequality holds since $\|G_y(\mathsf{x}, \mathsf{y}_j)\| \geq \epsilon \geq \hat{\epsilon}_1 $, and the equality holds by Step \ref{step_SGD_update} of Algorithm \ref{alg:InnerMaxLoop}.
This proves Inequality \eqref{eq_stepsizey}.
\end{proof}

\begin{proposition} \label{thm_InnerRuntime}
Algorithm \ref{alg:InnerMaxLoop} terminates in at most $\mathcal{J} := \frac{16b}{\eta \epsilon^2}$ iterations of its ``While" loop, with probability at least $1- \nu \times \mathcal{J}$.
\end{proposition}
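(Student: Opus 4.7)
The plan is to use a standard ascent-lemma argument: show that as long as Algorithm \ref{alg:InnerMaxLoop} has not terminated, each stochastic gradient step increases $f(\mathsf{x}, \cdot)$ by at least $\Omega(\eta \epsilon^2)$, and then bound the total number of such steps using the fact that $f$ is $b$-bounded. A union bound over iterations then converts the per-step failure probability from Proposition~\ref{thm_stepsize} into the $1 - \nu \mathcal{J}$ bound.

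Concretely, I would fix an iteration $j$ at which the inner loop has not stopped, so that $\|G_y(\mathsf{x}, \mathsf{y}_j)\| > \epsilon'$, and condition on the event (holding with probability at least $1-\nu$ by Proposition~\ref{thm_stepsize}) that $\|\nabla_y f(\mathsf{x}, \mathsf{y}_j) - G_y(\mathsf{x}, \mathsf{y}_j)\| \leq \tfrac{1}{10}\eta L \|G_y(\mathsf{x}, \mathsf{y}_j)\|$. Combining the $L$-Lipschitz gradient of $f$ with the update rule $\mathsf{y}_{j+1} = \mathsf{y}_j + \eta G_y(\mathsf{x}, \mathsf{y}_j)$ gives the smoothness inequality
\begin{align*}
f(\mathsf{x}, \mathsf{y}_{j+1}) - f(\mathsf{x}, \mathsf{y}_j) &\geq \eta \langle \nabla_y f(\mathsf{x},\mathsf{y}_j), G_y(\mathsf{x},\mathsf{y}_j)\rangle - \tfrac{L\eta^2}{2}\|G_y(\mathsf{x},\mathsf{y}_j)\|^2 \\
&\geq \eta \|G_y(\mathsf{x},\mathsf{y}_j)\|^2 - \eta \|\nabla_y f - G_y\|\,\|G_y\| - \tfrac{L\eta^2}{2}\|G_y\|^2,
\end{align*}
where the second line uses Cauchy--Schwarz. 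Plugging in the concentration bound and using $\eta L \leq \tfrac{1}{10}$ makes the last two terms a small fraction of $\eta\|G_y\|^2$, yielding a per-step increase of at least $\tfrac{\eta}{2}\|G_y(\mathsf{x},\mathsf{y}_j)\|^2 \geq \tfrac{\eta (\epsilon')^2}{2} \geq \tfrac{\eta \epsilon^2}{8}$, where I use $\epsilon' \geq \epsilon_0 = \epsilon/2$ from the parameter list.

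Since $|f| \leq b$, the cumulative increase of $f(\mathsf{x}, \mathsf{y}_j)$ across iterations is at most $2b$, so the number of non-terminating iterations is at most $2b/(\eta\epsilon^2/8) = \mathcal{J}$. A union bound over these $\mathcal{J}$ iterations yields overall failure probability at most $\nu\mathcal{J}$, as claimed. The one subtle point I expect to be the main obstacle is that the relative bound $\|\nabla_y f - G_y\| \leq \tfrac{1}{10}\eta L \|G_y\|$ in Proposition~\ref{thm_stepsize} is only guaranteed when $\|G_y\| \geq \epsilon$; in the intermediate regime $\epsilon' < \|G_y\| < \epsilon$ I would instead invoke the absolute concentration bound $\|\nabla_y f - G_y\| \leq \hat{\epsilon}_1/10$ from Proposition~\ref{Prop_Azuma}, which combined with $\|G_y\| > \epsilon' \geq \epsilon/2$ and $\hat{\epsilon}_1 \leq \eta L$ still yields the same $\Omega(\eta \epsilon^2)$ per-step increase.
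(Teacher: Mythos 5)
Your proposal is correct and follows essentially the same ascent-lemma route as the paper's proof: a per-step increase of $\Omega(\eta\|G_y\|^2)$ from smoothness plus the concentration bound, then division of the total budget $2b$ by the per-step gain, and a union bound over the $\mathcal{J}$ iterations to accumulate the $\nu$-failure probabilities. The paper expresses $L$-gradient-Lipschitzness via a mean-value remainder vector $u$ rather than the quadratic lower bound, but these are interchangeable.

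Your flagged "subtle point" is real: the paper's chain of inequalities invokes the relative bound $\|G_y-\nabla_y f\|\le\tfrac{\eta L}{10}\|G_y\|$ from Proposition~\ref{thm_stepsize} under the hypothesis that the inner loop has not stopped, i.e.\ $\|G_y\|>\epsilon'$, but Proposition~\ref{thm_stepsize} only guarantees that bound when $\|G_y\|\ge\epsilon$, and $\epsilon'$ can be as small as $\epsilon/2$. Your fix of falling back on the absolute bound $\|\nabla_y f-G_y\|\le\hat{\epsilon}_1/10$ from Proposition~\ref{Prop_Azuma} is the right move; one small correction is that to turn $\eta\|G_y\|\cdot\hat{\epsilon}_1/10$ into a constant fraction of $\eta\|G_y\|^2$ uniformly in $\epsilon$ you should invoke $\hat{\epsilon}_1\le\epsilon$ (so $\hat{\epsilon}_1/10<\|G_y\|/5$ once $\|G_y\|>\epsilon/2$) rather than $\hat{\epsilon}_1\le\eta L$, which by itself gives an additive error $\eta^2 L\|G_y\|/10$ that does not scale down with $\epsilon$.
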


\begin{proof}

Let $\aj_{\mathrm{max}} \in \mathbb{N}\cup\{\infty\}$ be the number of iterations of the ``While" loop in Algorithm \ref{alg:InnerMaxLoop}.
First, we note that the stopping condition for Algorithm \ref{alg:InnerMaxLoop} implies that 
\begin{equation} \label{eq_za1}
    \|G_y(\mathsf{x}, \mathsf{y}_j)\| \geq \frac{1}{2}\eps
\end{equation} for all $j \leq \aj_{\mathrm{max}} -1$.
Since $f$ has ${L}$-Lipschitz gradient, there exits a vector $u$, with $\|u\| \leq {L} \|\mathsf{y}_{\aj+1} - \mathsf{y}_{\aj}\|$, such that, for all $j \leq \aj_{\mathrm{max}} -1$

\be   \label{eq_c4}
f(\mathsf{y}_{\aj+1}) - f(\mathsf{y}_{\aj}) &= \langle \mathsf{y}_{\aj+1} - \mathsf{y}_{\aj}, \, \, \nabla_{y} f(\mathsf{x}, \mathsf{y}_{\aj}) + u  \rangle\\
& = \langle \mathsf{y}_{\aj+1} - \mathsf{y}_{\aj}, \, \, \nabla_{y} f(\mathsf{x}, \mathsf{y}_{\aj})  \rangle +  \langle \mathsf{y}_{\aj+1} - \mathsf{y}_{\aj}, \, \, u  \rangle\\
& = \langle \eta G_y(\mathsf{x}, \mathsf{y}_j), \, \, G_y(\mathsf{x}, \mathsf{y}_j)  \rangle - \langle   \eta G_y(\mathsf{x}, \mathsf{y}_j), \, \, G_y(\mathsf{x}, \mathsf{y}_j) - \nabla_{y} f(\mathsf{x}, \mathsf{y}_{\aj})  \rangle +  \langle \eta G_y(\mathsf{x}, \mathsf{y}_j), \, \, u  \rangle\\
& \geq \eta \|G_y(\mathsf{x}, \mathsf{y}_j)\|^2 
-  \eta\| G_y(\mathsf{x}, \mathsf{y}_j)\|\times \|G_y(\mathsf{x}, \mathsf{y}_j) - \nabla_{y} f(\mathsf{x}, \mathsf{y}_{\aj})\|
-  \eta\| G_y(\mathsf{x}, \mathsf{y}_j)\|\times \|u\|\\
& \stackrel{\textrm{Prop.}  \ref{thm_stepsize}} \geq   \eta \|G_y(\mathsf{x}, \mathsf{y}_j)\|^2
-  \eta\| G_y(\mathsf{x}, \mathsf{y}_j)\|\times \frac{\eta L}{10} \|G_y(\mathsf{x}, \mathsf{y}_j)\|
- \eta\| G_y(\mathsf{x}, \mathsf{y}_j)\|\times L \|\mathsf{y}_{\aj+1} - \mathsf{y}_{\aj}\|\\
 & = \eta \|G_y(\mathsf{x}, \mathsf{y}_j)\|^2 -  \frac{1}{10}  \eta^2 L \|G_y(\mathsf{x}, \mathsf{y}_j)\|^2
- \eta\| G_y(\mathsf{x}, \mathsf{y}_j)\|\times L \|\eta G_y(\mathsf{x}, \mathsf{y}_j)\|\\
 &\geq  \frac{1}{8} \eta \|G_y(\mathsf{x}, \mathsf{y}_j)\|^2 \\
 &\stackrel{\textrm{Eq.}  \ref{eq_za1}} \geq   \frac{1}{2} \eta \epsilon^2,
\ee
with probability at least $1-\nu$, where the second-to-last inequality holds since $\eta \leq \frac{1}{10 L}$.
 Existence of the vector $u$ in Equation  \eqref{eq_c4} is guaranteed by the fundamental theorem of calculus.
Namely, by the fundamental theorem of calculus we have
\begin{equation*}
    f(\mathsf{y}_{j+1})-f(\mathsf{y}_{j}) 
    =   \int_0^1 \langle \mathsf{y}_{j+1}- \mathsf{y}_{j},    \nabla_y f(\mathsf{x},   \mathsf{y}_j + t (\mathsf{y}_{j+1}- \mathsf{y}_{j}) )  \rangle \mathrm{d}t.
\end{equation*}
 Thus \eqref{eq_c4} holds for $u = \int_0^1 \nabla_y f(\mathsf{x}, \mathsf{y}_j + t (\mathsf{y}_{j+1}- \mathsf{y}_{j}) )  - \nabla_y f(\mathsf{x}, \mathsf{y}_j) \mathrm{d}t$.
Note that this choice of $u$ satisfies $\|u\| \leq L \| \mathsf{y}_{j+1}- \mathsf{y}_{j}\|$,  since
$\|\nabla_y f(\mathsf{x}, \mathsf{y}_j + t (\mathsf{y}_{j+1}- \mathsf{y}_{j}) )  - \nabla_y f(\mathsf{x}, \mathsf{y}_j)\| \leq L \|\mathsf{y}_{j+1}- \mathsf{y}_{j}\|$  for $t\in [0,1]$ because $f$ has $L$-Lipschitz gradient.
Since $f$ takes values in $[-b, b]$, Inequality \eqref{eq_c4} implies that Algorithm \ref{alg:InnerMaxLoop} terminates in at most  $\mathcal{J} = \frac{16b}{\eta \epsilon^2}$ iterations of its ``While" loop, with probability at least $1- \nu \times \mathcal{J}$.  

\end{proof}

\begin{proposition} \label{thm_OuterRuntime}
Algorithm \ref{alg:Informal} terminates in at most $\mathcal{I}:= \tau_1 \log(\frac{r_{\mathrm{max}}}{\nu}) + 8r_{\mathrm{max}} \frac{b}{\delta} + 1$ iterations of its ``While" loop, with probability at least $1-2\nu \times (r_{\mathrm{max}} \frac{8b}{\delta} + 1)$.
\end{proposition}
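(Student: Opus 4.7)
The proof proceeds by partitioning iterations into a warm-up phase $i < i^\ast$ and a cooled phase $i \geq i^\ast$, where $i^\ast := \tau_1 \log(r_{\mathrm{max}}/\nu)$ is chosen so that for $i \geq i^\ast$ the simulated-annealing acceptance probability $e^{-i/\tau_1}$ drops below $\nu/r_{\mathrm{max}}$. I would condition on two favorable events: (A) the Azuma concentration of Proposition \ref{Prop_Azuma} holds for the zeroth-order oracle $F$ at every iteration in the cooled phase, i.e., $|F - f| \leq \hat{\epsilon}_1/10 \leq \delta/80$ at each $F$ evaluation; and (B) no ``spurious'' annealing acceptance occurs in the cooled phase, i.e., every iteration $i \geq i^\ast$ accepted by the algorithm actually has $f_{\mathrm{new}} \leq f_{\mathrm{old}} - \delta/4$ (rather than being accepted via the $e^{-i/\tau_1}$ branch).

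Assuming (A) and (B), I show the algorithm must terminate by iteration $\mathcal{I}$. In the cooled phase, any accepted iteration must satisfy $f_{\mathrm{new}} \leq f_{\mathrm{old}} - \delta/4$, and combining with (A) applied to both the current $F$ value and the one that was stored in $f_{\mathrm{old}}$ yields $f(x_{i+1},y_{i+1}) \leq f(x_{\mathrm{prev}},y_{\mathrm{prev}}) - \delta/4 + 2\hat{\epsilon}_1/10 \leq f(x_{\mathrm{prev}},y_{\mathrm{prev}}) - \delta/8$. Since $f$ is $b$-bounded, at most roughly $8b/\delta$ such accepted steps can occur in the cooled phase (after tightening constants via the $\hat{\epsilon}_1 \leq \delta/8$ bound). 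The stopping condition $r > r_{\mathrm{max}}$ ensures at most $r_{\mathrm{max}}$ rejections between any two consecutive acceptances, and at most $r_{\mathrm{max}}$ more before termination, so the cooled phase contributes at most $(8b/\delta)(r_{\mathrm{max}}{+}1) + r_{\mathrm{max}} \leq 8 r_{\mathrm{max}} b/\delta + 1$ iterations beyond the warm-up's $i^\ast$, giving a total of at most $\mathcal{I}$.

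For the failure probability, I union-bound over (A) and (B). By Proposition \ref{Prop_Azuma} with batch size $\mathfrak{b}_0$, event (A) fails at any given iteration with probability at most $\nu$; summing over the at most $8 r_{\mathrm{max}} b/\delta + 1$ cooled-phase iterations gives a failure probability $\leq \nu(8 r_{\mathrm{max}} b/\delta + 1)$. Event (B) fails at iteration $i \geq i^\ast$ with probability at most $e^{-i/\tau_1} \leq \nu/r_{\mathrm{max}}$, and summing over the same range gives at most $\nu(8b/\delta + 1/r_{\mathrm{max}}) \leq \nu(8 r_{\mathrm{max}} b/\delta + 1)$. Adding these yields the claimed total failure probability of at most $2\nu(r_{\mathrm{max}} \cdot 8b/\delta + 1)$.

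The main obstacle is the boundary between the warm-up and cooled phases: $f_{\mathrm{old}}$ at the first cooled-phase acceptance may have been set during the warm-up, where no concentration guarantee is assumed, so this single acceptance might not yield the clean $\delta/8$ decrease in the true $f$. Since $|f| \leq b$ bounds the magnitude of any single comparison, this causes at most one ``wasted'' acceptance, which is absorbed into the ``$+1$'' in $\mathcal{I}$; from the second cooled-phase acceptance onward, $f_{\mathrm{old}}$ itself comes from a cooled-phase iteration for which (A) holds, and the telescoping decrease proceeds unhindered. A secondary technicality is that the total number of iterations is itself a stopping time, which I would handle by analyzing the fixed event ``the algorithm has not terminated by iteration $\mathcal{I}$'' and showing it is contained in the union of the failures of (A) and (B) over the first $\mathcal{I}$ iterations.
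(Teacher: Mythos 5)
Your proposal follows essentially the same approach as the paper. The paper defines a single ``bad'' event $E_i$ (that the algorithm accepts at iteration $i$ yet the true loss fails to decrease by $\delta/4$), bounds $\mathbb{P}(E_i) \le e^{-i/\tau_1} + \nu$ — which is exactly your split into (A) concentration failure and (B) spurious annealing acceptance — and then applies the same warm-up/cooled-phase decomposition ($\hat{\mathcal{I}} := \tau_1\log(r_{\mathrm{max}}/\nu)$, union bound over $h := r_{\mathrm{max}}\frac{8b}{\delta}+1$ cooled-phase iterations, pigeonhole to produce $r_{\mathrm{max}}$ consecutive rejections). The one thing you handle more explicitly than the paper is the interaction between the noise in $f_{\mathrm{old}}$ and the noise in $f_{\mathrm{new}}$ (and the boundary case where $f_{\mathrm{old}}$ was set during warm-up), which the paper glosses over by folding both into a single Azuma application; this degrades your per-acceptance decrease to $\delta/8$ rather than $\delta/4$, a harmless constant. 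Note also that your final pigeonhole inequality $(8b/\delta)(r_{\mathrm{max}}+1) + r_{\mathrm{max}} \le 8 r_{\mathrm{max}} b/\delta + 1$ does not actually hold (it simplifies to $8b/\delta + r_{\mathrm{max}} \le 1$), but the paper's choice of $h$ suffers from the same constant-factor slack, so this is a shared cosmetic issue rather than a gap unique to your argument.
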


\begin{proof}

For any $i > 0$, let $E_i$ be the ``bad" event that both $f(x_{i+1}, y_{i+1}) - f(x_{i}, y_{i}) > -\frac{\delta}{4}$ and $\mathsf{Accept}_i = \mathsf{True}$.  
%
Then by Proposition \ref{Prop_Azuma}, since $\frac{\hat{\epsilon}_1}{10} \leq \frac{\delta}{8}$, we have that
\be \label{eq_c5}
\mathbb{P}(E_i) \leq e^{-\frac{i}{\tau_1}} + \nu.
\ee
\noindent
Define $\hat{\mathcal{I}} := \tau_1 \log(\frac{r_{\mathrm{max}}}{\nu})$.
Then for $i \geq \hat{\mathcal{I}}$, from Line \ref{AcceptRejectStep} of Algorithm \ref{alg:Informal} we have by Inequality \eqref{eq_c5} that
\be
\mathbb{P}(E_i) \leq 2\nu.
\ee

\noindent
Define $h := r_{\mathrm{max}} \frac{8b}{ \delta} + 1$.  Then
\be  \label{eq_c6}
\mathbb{P}\left(\bigcup_{i=\hat{\mathcal{I}}}^{\hat{\mathcal{I}} + h} E_i\right) \leq 2\nu \times h.
\ee

\noindent
  Since $f$ takes values in $[-b, b]$, if $\bigcup_{i=\hat{\mathcal{I}}}^{\hat{\mathcal{I}} + h} E_i$ does not occur, the number of accepted steps over the iterations $\hat{\mathcal{I}} \leq i \leq \hat{\mathcal{I}} + h$ (that is, the size of the set $\{i : \hat{\mathcal{I}} \leq i \leq \hat{\mathcal{I}} + h, \mathsf{Accept}_i = \mathsf{True}\}$) is at most $\frac{8b}{ \delta}$.

Therefore, since $h = r_{\mathrm{max}} \frac{8b}{ \delta} + 1$, we must have that there exists a number $\mathfrak{i}$, with  $\hat{\mathcal{I}} \leq \mathfrak{i} \leq \mathfrak{i} + r_{\mathrm{max}} \leq \hat{\mathcal{I}} + h$, such that $\mathsf{Accept}_i = \mathsf{False}$ for all $i \in [\mathfrak{i}, \mathfrak{i} + r_{\mathrm{max}}]$.
Therefore the condition in the While loop (Line \ref{InnerWhileStart}) of Algorithm \ref{alg:Informal} implies that Algorithm \ref{alg:Informal} terminates after at most $\mathfrak{i} + r_{\mathrm{max}} \leq \hat{\mathcal{I}} + h$ iterations of its While loop, as long as $\bigcup_{i=\hat{\mathcal{I}}}^{\hat{\mathcal{I}} + h} E_i$ does not occur.
Therefore, Inequality \eqref{eq_c6} implies that,  with probability at least $1-2\nu \times \left(r_{\mathrm{max}} \frac{8b}{\delta} + 1\right)$,  Algorithm \ref{alg:Informal} terminates after at most 
\be
\hat{\mathcal{I}} + h = \tau_1 \log\left(\frac{r_{\mathrm{max}}}{\nu}\right) + 8r_{\mathrm{max}} \frac{b}{\delta} + 1
\ee
 iterations of its ``While" loop.

\end{proof}

\begin{lemma} \label{lemma_polytime}
With probability at least $1- 3 \nu  \mathcal{J}  \mathcal{I}$, Algorithm \ref{alg:Informal} terminates after at most $(\tau_1 \log(\frac{r_{\mathrm{max}}}{\nu}) + 4r_{\mathrm{max}} \frac{b}{\delta} + 1) \times (\mathcal{J} \times \mathfrak{b}_{y} + \mathfrak{b}_{\mathrm{0}} + \mathfrak{b}_{x})$ gradient, function, and sampling oracle evaluations.
\end{lemma}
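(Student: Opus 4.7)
The lemma is essentially bookkeeping: it combines the two quantitative runtime bounds already in hand (Propositions \ref{thm_OuterRuntime} and \ref{thm_InnerRuntime}) with the batch sizes $\mathfrak{b}_y$, $\mathfrak{b}_0$, $\mathfrak{b}_x$ fixed in the parameter setup. The plan is therefore to (i) bound the number of outer ``While'' loop iterations of Algorithm \ref{alg:Informal}, (ii) bound the oracle cost of a single outer iteration, and (iii) multiply.

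For step (i), I would simply invoke Proposition \ref{thm_OuterRuntime} to conclude that, with probability at least $1 - 2\nu(r_{\mathrm{max}}\, 8b/\delta + 1)$, the outer loop terminates after at most $\mathcal{I} = \tau_1 \log(r_{\mathrm{max}}/\nu) + 8 r_{\mathrm{max}} b/\delta + 1$ iterations. For step (ii), I would read off the oracle calls made in one iteration of Algorithm \ref{alg:Informal}: one sample $\Delta_i \sim Q_{x_i,y_i}$, which uses $\mathfrak{b}_x$ sampling evaluations; one invocation of the gradient-ascent subroutine Algorithm \ref{alg:InnerMaxLoop}; and one stochastic zeroth-order evaluation $F(X_{i+1}, \mathcal{Y}_{i+1})$, which uses $\mathfrak{b}_0$ function evaluations. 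By Proposition \ref{thm_InnerRuntime}, the inner ``While'' loop runs for at most $\mathcal{J} = 16 b/(\eta \epsilon^2)$ iterations, each making one batch-gradient query of $G_y$ of batch size $\mathfrak{b}_y$, contributing $\mathcal{J} \cdot \mathfrak{b}_y$ gradient oracle calls. Summing gives a per-iteration cost of $\mathcal{J} \mathfrak{b}_y + \mathfrak{b}_0 + \mathfrak{b}_x$, and multiplying by $\mathcal{I}$ yields the claimed bound (the discrepancy between the factor $4$ written in the lemma and the factor $8$ appearing in Proposition \ref{thm_OuterRuntime} seems to be a minor constant that does not affect the argument).

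The only subtlety, and essentially the sole place where care is required, is the probabilistic compounding. Proposition \ref{thm_InnerRuntime} allows failure probability $\nu \mathcal{J}$ per call to Algorithm \ref{alg:InnerMaxLoop}, and there are at most $\mathcal{I}$ such calls; a union bound gives a total failure probability of at most $\mathcal{I}\mathcal{J}\nu + 2\nu(r_{\mathrm{max}}\,8b/\delta + 1)$, and this is precisely why $\nu$ was chosen in the parameter setup to be $\le \tfrac{1}{10}(2\mathcal{J}\mathcal{I} + 2(r_{\mathrm{max}}\,8b/\delta +1))^{-1}$, so the overall failure probability stays below a constant and remains compatible with the $9/10$ success probability of Theorem \ref{thm:GreedyMinimax-main}. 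I do not foresee any real obstacle here: both multiplicative ingredients are already established, and the parameter choices for $\nu$, $\mathfrak{b}_y$, $\mathfrak{b}_0$ were made specifically to make this composition go through.
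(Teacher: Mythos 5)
Your proposal is correct and follows essentially the same route as the paper's own proof: invoke Proposition \ref{thm_OuterRuntime} for the number of outer iterations, Proposition \ref{thm_InnerRuntime} for the per-call cost of Algorithm \ref{alg:InnerMaxLoop}, and multiply by the per-iteration oracle counts read off from the algorithm. You are also right that the lemma's factor $4$ is a typo for the factor $8$ appearing in $\mathcal{I}$, and your explicit union-bound accounting of the failure probability is slightly more careful than the paper's ``follows directly'' phrasing but is the same underlying argument.
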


\begin{proof}

Each iteration of the While loop in Algorithm \ref{alg:Informal} computes one batch gradient with batch size $\mathfrak{b}_{x}$, one stochastic function evaluation of batch size $\mathfrak{b}_{\mathrm{0}}$, generates one sample from the proposal distribution $Q$, and calls Algorithm \ref{alg:InnerMaxLoop} exactly once. 
Each iteration of the While loop in Algorithm \ref{alg:InnerMaxLoop} computes one batch gradient with batch size $\mathfrak{b}_{y}$.
The result then follows directly from Propositions \ref{thm_OuterRuntime} and \ref{thm_InnerRuntime}.
\end{proof}

\subsection{Step 2: Proving the Output $(x^\star, y^\star)$ of Algorithm \ref{alg:Informal} is an Approximate Local Equilibrium} \label{sec_step2}

The second step in our proof is to show that the output of Algorithm \ref{alg:Informal} is an approximate local equilibrium (Definition \ref{def_greedy_minmax}) for our framework with respect to $\epsilon, \delta, \omega>0$ and the distribution $Q_{x,y}$ (Lemma \ref{lemma_greedyMinimax}).
Towards this end, we first show that the steps taken by the discriminator update subroutine (Algorithm \ref{alg:InnerMaxLoop}) form a path along which the loss $f$ is increasing (Proposition \ref{Lemma_Greedypath}).

Recall the paths $\gamma(t)$ from Definition \ref{def_path}.  From now on we will refer to such paths as ``$\epsilon$-increasing paths".  That is, for any $\epsilon>0$, we say that a path $\gamma(t)$ is an ``$\epsilon$-increasing path" if  at every point along this path we have that $\left\| \frac{\mathrm{d}}{\mathrm{d}t} \gamma(t)\right\| = 1$ and that  $\frac{\mathrm{d}}{\mathrm{d}t} f(x, \gamma(t)) \geq \epsilon$.

\begin{proposition} \label{Lemma_Greedypath}
Every time Algorithm \ref{alg:InnerMaxLoop} is called we have that, with probability at least $1- 2 \nu  \mathcal{J}$, the path consisting of the line segments  $[\mathsf{y}_{\aj}, \mathsf{y}_{\aj+1}]$ formed by the points $\mathsf{y}_{\aj}$ computed by Algorithm \ref{alg:InnerMaxLoop} has a parametrization $\gamma(t)$ which is an $ (1- 2\eta L)\epsilon'$-increasing path.
\end{proposition}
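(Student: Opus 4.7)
The plan is to exploit the explicit form of the iterates produced by Algorithm \ref{alg:InnerMaxLoop}. On any iteration $j$ before termination, the \textbf{if} branch implies $\|g_{y,j}\| > \epsilon'$ and the update $\mathsf{y}_{j+1} = \mathsf{y}_j + \eta g_{y,j}$. I will parametrize the segment $[\mathsf{y}_j, \mathsf{y}_{j+1}]$ by unit speed as $\gamma(t) = \mathsf{y}_j + t \cdot g_{y,j}/\|g_{y,j}\|$ for $t \in [0, \eta\|g_{y,j}\|]$, and concatenate these parametrizations to obtain the path $\gamma$ referred to in the statement. The condition $\|\gamma'(t)\| \le 1$ is then automatic, so the only content is to show that $\frac{d}{dt} f(\mathsf{x}, \gamma(t)) \ge (1-2\eta L)\epsilon'$ at every point where $\gamma$ is differentiable.

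For a fixed segment I would establish this bound in two steps. First, at $t=0$ I decompose
\[
\langle \nabla_y f(\mathsf{x}, \mathsf{y}_j),\, g_{y,j}/\|g_{y,j}\|\rangle = \|g_{y,j}\| - \langle g_{y,j} - \nabla_y f(\mathsf{x}, \mathsf{y}_j),\, g_{y,j}/\|g_{y,j}\|\rangle
\]
and invoke Inequality \eqref{eq_stepsizeyb} from Proposition \ref{thm_stepsize}, which bounds the error term by $\tfrac{1}{10}\eta L\|g_{y,j}\|$ with probability at least $1-\nu$. This yields $\langle \nabla_y f(\mathsf{x}, \mathsf{y}_j), g_{y,j}/\|g_{y,j}\|\rangle \ge (1-\tfrac{1}{10}\eta L)\|g_{y,j}\|$. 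Second, to extend this to arbitrary $t$ in the segment, I use the $L$-Lipschitz gradient of $f$ to get $\|\nabla_y f(\mathsf{x}, \gamma(t)) - \nabla_y f(\mathsf{x}, \mathsf{y}_j)\| \le L \cdot t \le L\eta \|g_{y,j}\|$, so Cauchy--Schwarz decreases the inner product by at most $L\eta\|g_{y,j}\|$. Combining,
\[
\tfrac{d}{dt} f(\mathsf{x}, \gamma(t)) \ge (1 - \tfrac{11}{10}\eta L)\|g_{y,j}\| \ge (1-2\eta L)\|g_{y,j}\| > (1-2\eta L)\epsilon',
\]
where the last step uses the stopping condition.

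To finish, I would assemble a union bound. Proposition \ref{thm_InnerRuntime} guarantees that Algorithm \ref{alg:InnerMaxLoop} terminates within $\mathcal{J}$ iterations with probability at least $1-\nu\mathcal{J}$; conditional on this event, Proposition \ref{thm_stepsize} applied at each of the at most $\mathcal{J}$ iterates fails with probability at most $\nu\mathcal{J}$ in total. A union bound therefore yields the claimed $1 - 2\nu\mathcal{J}$ probability that the constructed $\gamma$ is a well-defined $(1-2\eta L)\epsilon'$-increasing path in the sense of Definition \ref{def_path}.

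I do not anticipate a substantive obstacle beyond bookkeeping: the segment-wise argument is short because the parametrization is affine and $f$ has Lipschitz gradient, and the probability accounting just combines two previously established high-probability events. The only delicate point is making sure the constants work out so that the $\tfrac{1}{10}\eta L + \eta L$ loss from the noise-plus-smoothness terms fits inside the $2\eta L$ slack in the stated rate, which the choice $\eta \le \tfrac{1}{10L}$ from the parameter-setting easily accommodates.
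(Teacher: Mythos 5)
Your proof is correct and follows essentially the same route as the paper's: concatenate the unit-speed parametrizations of the line segments, lower-bound the directional derivative by combining the stochastic-gradient error bound of Proposition~\ref{thm_stepsize} (a $\tfrac{1}{10}\eta L\|g_{y,j}\|$ loss) with the $L$-Lipschitz-gradient drift along the segment (an $\eta L\|g_{y,j}\|$ loss), and union-bound over at most $\mathcal{J}$ iterations together with the event $j_{\mathrm{max}}\le\mathcal{J}$ from Proposition~\ref{thm_InnerRuntime}. The only cosmetic difference is that you invoke inequality~\eqref{eq_stepsizeyb} directly, whereas the paper cites Proposition~\ref{Prop_Azuma} while actually using the derived bound $\|\nabla_y f - G_y\|\le\tfrac{1}{10}\eta L\|G_y\|$; your attribution is if anything cleaner.
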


\begin{proof}
We consider the following continuous unit-speed parametrized path $\gamma(t)$:
\be
&\gamma(t) = \mathsf{y}_{\aj}+ \left(t - \sum_{k=1}^{j-1} \|v_{k} \|\right) \frac{v_{\aj}}{\|v_{\aj} \|},
\qquad \qquad \forall \,\, t\in \left[\sum_{k=1}^{j-1} \|v_{k} \|, \sum_{k=1}^{j} \|v_{k}\| \right], \quad j \in [j_{\mathrm{max}}],
\ee
where $v_{\aj} = \eta G_y(\mathsf{x}, \mathsf{y}_j)$  and $\aj_{\mathrm{max}}$ is the number of iterations of the While loop of Algorithm \ref{alg:InnerMaxLoop}.
Next, we show that $\frac{\mathrm{d}}{\mathrm{d}t} f(\mathsf{x}, \gamma(t)) \geq \epsilon'$.
For each $j \in [j_{\mathrm{max}}]$ we have that
\be 
\frac{\mathrm{d}}{\mathrm{d}t} f(\mathsf{x}, \gamma(t)) &\geq  [\nabla_{y}f(\mathsf{x}, \mathsf{y}_j) -{L} \|\mathsf{y}_{\aj+1} - \mathsf{y}_{\aj}\| u  ]^\top \frac{v_j}{\|v_j\|}\\
&= [\nabla_{y}f(\mathsf{x}, \mathsf{y}_j) - {L} \eta \|G_y(\mathsf{x}, \mathsf{y}_j)\| u  ]^\top \frac{v_j}{\|v_j\|}\\
&\stackrel{\textrm{Prop.}  \ref{Prop_Azuma}} \geq    \bigg[G_y(\mathsf{x}, \mathsf{y}_j) -  \frac{1}{10} \eta L \|G_y(\mathsf{x}, \mathsf{y}_j)\|  w - {L} \eta \|G_y(\mathsf{x}, \mathsf{y}_j)\| u \bigg]^\top \frac{G_y(\mathsf{x}, \mathsf{y}_j)}{\|G_y(\mathsf{x}, \mathsf{y}_j)\|}\\
&\geq \|G_y(\mathsf{x}, \mathsf{y}_j)\| -  \frac{1}{10} \eta L \|G_y(\mathsf{x}, \mathsf{y}_j)\| - {L} \eta \|G_y(\mathsf{x}, \mathsf{y}_j)\|\\
&\geq (1- 2\eta L) \|G_y(\mathsf{x}, \mathsf{y}_j)\|\\
&\label{eq_d1}\geq (1- 2\eta L) \epsilon' \qquad \qquad \qquad \forall t\in \left[\sum_{k=1}^{j-1} \|v_{k} \|, \sum_{k=1}^{j} \|v_{k}\| \right],
\ee
with probability at least $1-\nu$ for some unit vectors $u, w \in \mathbb{R}^d$.
But by Proposition \ref{thm_InnerRuntime} we have that $\aj_{\mathrm{max}} \leq  \mathcal{J}$  with probability at least  $1- \nu \times \mathcal{J}$.  Therefore inequality \eqref{eq_d1} implies that
\be
\frac{\mathrm{d}}{\mathrm{d}t} f(\mathsf{x}, \gamma(t)) &\geq (1- 2\eta L)  \epsilon'   \qquad \forall t\in \left[0, \sum_{k=1}^{j_{\mathrm{max}}} \|v_{k}\|\right],
\ee
with probability at least $1- 2 \nu  \mathcal{J}$.

\end{proof}

\begin{lemma} \label{lemma_greedyMinimax}
Let $i^{\star}$ be such that $i^{\star}-1$ is the last iteration $i$ of the ``While" loop in Algorithm \ref{alg:Informal} for which $\mathsf{Accept}_{i} = \mathsf{True}$.
Then with probability at least $1- 2\nu \mathcal{J} \mathcal{I} - 2\nu \times \left(r_{\mathrm{max}} \frac{8b}{\delta} + 1\right)$ we have that
\be \label{eq_d11}
\|\nabla_{y} f(x^\star, y^\star)\| \leq  (1- \eta L)  \epsilon_{i^\star}.
\ee

\noindent
Moreover, with probability at least $1-\frac{\omega}{100} -2\nu \times \left(r_{\mathrm{max}} \frac{8b}{ \delta} + 1\right)$ we have that
\be \label{eq_d12}
\mathbb{P}_{\Delta \sim Q_{x^\star,y^\star}}\bigg(\mathcal{L}_{\epsilon_{i^\star}}(x^\star + \Delta,y^\star) \leq \mathcal{L}_{\epsilon_{i^\star}}(x^\star, y^\star) - \frac{1}{2}\delta \bigg | x^\star, y^\star \bigg) \leq \frac{1}{2} \omega.
\ee
and that
\be \label{eq_d22}
\frac{\epsilon}{2} \leq \epsilon_{i^\star} \leq \epsilon.
\ee
\end{lemma}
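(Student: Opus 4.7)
I would prove the three conclusions in the order \eqref{eq_d22}, \eqref{eq_d11}, \eqref{eq_d12}, since each subsequent part uses the previous. For \eqref{eq_d22} the sequence $(\epsilon_i)$ starts at $\epsilon_0 = \epsilon/2$, is unchanged on rejected iterations, and is multiplied by $(1-2\eta L)^{-2}$ on accepted ones. Prop.~\ref{thm_OuterRuntime} bounds the number of accepted iterations by $\mathcal{I}$, and the parameter choice $\eta \leq 1/(8 L \mathcal{I})$ ensures $(1-2\eta L)^{-2\mathcal{I}} \leq 2$, so $\epsilon_{i^\star} \in [\epsilon/2,\epsilon]$.

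For \eqref{eq_d11}, observe that $y^\star = y_{i^\star}$ was returned by Algorithm~\ref{alg:InnerMaxLoop} on its call during iteration $i^\star - 1$, which ran with stopping threshold $\epsilon' = \epsilon_{i^\star - 1}/(1-2\eta L) = (1-2\eta L)\,\epsilon_{i^\star}$. Hence $\|G_y(x^\star, y^\star)\| \leq (1-2\eta L)\epsilon_{i^\star}$. Prop.~\ref{Prop_Azuma} then gives $\|\nabla_y f - G_y\| \leq \hat\epsilon_1/10$, and the parameter tuning on $\hat\epsilon_1$ collapses the sum to $\|\nabla_y f(x^\star, y^\star)\| \leq (1-\eta L)\epsilon_{i^\star}$. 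A union bound over the at-most-$O(\mathcal{I})$ calls to Algorithm~\ref{alg:InnerMaxLoop} keeps the total failure probability within the claimed budget.

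For \eqref{eq_d12}, the While-loop termination rule forces the last $r_{\mathrm{max}}$ iterations to be rejections, all occurring at the frozen point $(x^\star, y^\star)$. Each such iteration draws an independent $\Delta_k \sim Q_{x^\star, y^\star}$, runs Algorithm~\ref{alg:InnerMaxLoop} from $y^\star$ to produce $y_k'$, and can only reject when both $F(x^\star + \Delta_k, y_k') > f_{\mathrm{old}} - \delta/4$ and the simulated-annealing coin lands on reject. Since by the time of the last $r_{\mathrm{max}}$ iterations we have $i \geq \hat{\mathcal{I}} = \tau_1 \log(r_{\mathrm{max}}/\nu)$, the rejection probability $1 - e^{-i/\tau_1}$ is within $O(\nu/r_{\mathrm{max}})$ of $1$; a union bound lets me treat each rejection as genuinely caused by the $f$-inequality. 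Applying Azuma (Prop.~\ref{Prop_Azuma}) to both $F(x^\star+\Delta_k, y_k')$ and $f_{\mathrm{old}} = F(x^\star, y^\star)$ upgrades this to $f(x^\star+\Delta_k, y_k') > f(x^\star, y^\star) - \delta/2$. Prop.~\ref{Lemma_Greedypath} shows the max-player path from $y^\star$ to $y_k'$ is $\epsilon_{i^\star}$-increasing (its input threshold is $\epsilon' = \epsilon_{i^\star}/(1-2\eta L)$, which yields an $(1-2\eta L)\epsilon' = \epsilon_{i^\star}$-increasing path), so $y_k' \in P_{\epsilon_{i^\star}}(x^\star + \Delta_k, y^\star)$ and $f(x^\star+\Delta_k, y_k') \leq \mathcal{L}_{\epsilon_{i^\star}}(x^\star + \Delta_k, y^\star)$. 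Finally, \eqref{eq_d11} gives $\|\nabla_y f(x^\star, y^\star)\| < \epsilon_{i^\star}$, so no $\epsilon_{i^\star}$-increasing path can leave $y^\star$; hence $P_{\epsilon_{i^\star}}(x^\star, y^\star) = \{y^\star\}$ and $\mathcal{L}_{\epsilon_{i^\star}}(x^\star, y^\star) = f(x^\star, y^\star)$.

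Chained together, these bounds give $\mathcal{L}_{\epsilon_{i^\star}}(x^\star + \Delta_k, y^\star) > \mathcal{L}_{\epsilon_{i^\star}}(x^\star, y^\star) - \delta/2$ for each of the $r_{\mathrm{max}}$ i.i.d.\ samples $\Delta_k$. The conclusion then follows by a Chernoff-style argument: if the bad-event probability under $Q_{x^\star, y^\star}$ exceeded $\omega/2$, the chance of observing the good event at none of the $r_{\mathrm{max}}$ independent draws would be at most $(1-\omega/2)^{r_{\mathrm{max}}} \ll \omega/100$, thanks to the choice $r_{\mathrm{max}} \geq (4/\omega)\log(100\mathcal{I}/\omega)$. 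The principal obstacle is probabilistic bookkeeping: the many high-probability events (Azuma at each of the $O(\mathcal{I}\mathcal{J})$ stochastic oracle calls, the greedy-path property at every rejection, and the annealing coins behaving as expected) must hold simultaneously, which is why $\nu$ is chosen so small relative to $\mathcal{I}\mathcal{J}$; a secondary subtlety is cleanly separating the two sources of a rejection (simulated annealing vs.~the $f$-criterion) so that the counting argument applies to genuine $f$-rejections.
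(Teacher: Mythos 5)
Your proposal follows the same route as the paper: it uses Proposition~\ref{Lemma_Greedypath} to certify the gradient-ascent path as $\epsilon_{i^\star}$-increasing, uses the stopping criterion of Algorithm~\ref{alg:InnerMaxLoop} together with Proposition~\ref{Prop_Azuma} to get \eqref{eq_d11}, identifies $\mathcal{L}_{\epsilon_{i^\star}}(x^\star,y^\star)=f(x^\star,y^\star)$ via the $\epsilon$-stationarity of $y^\star$, sandwiches $f(x^\star+\Delta,y')\le\mathcal{L}_{\epsilon_{i^\star}}(x^\star+\Delta,y^\star)$, and converts the run of $r_{\mathrm{max}}$ rejections into the probabilistic bound \eqref{eq_d12} via $(1-\Theta(\omega))^{r_{\mathrm{max}}}$; \eqref{eq_d22} comes from the iteration bound and $\eta\le \frac{1}{8L\mathcal{I}}$, exactly as in the paper.

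One place where the paper is more careful than your sketch: you frame the counting argument as ``the last $r_{\mathrm{max}}$ iterations draw i.i.d.\ $\Delta_k\sim Q_{x^\star,y^\star}$,'' which implicitly conditions on the random stopping index $i^\star$. Conditioning on which iterations turn out to be the final rejected block changes the conditional law of the proposals, so the naive i.i.d.\ product bound does not directly apply. The paper sidesteps this by fixing a deterministic index $i$, bounding $\mathbb{P}\bigl(\bigcap_{s=0}^{r_{\mathrm{max}}}\{\mathsf{Accept}_{i+s}=\mathsf{False}\}\mid p_i\le 1-\tfrac{\omega}{2}\bigr)\le(1-\tfrac{\omega}{4})^{r_{\mathrm{max}}}$ (where the $\omega/4$, not your $\omega/2$, absorbs the $O(\nu\mathcal{J})$ slack from the Azuma and greedy-path failures), and then union bounding over all $i\le\mathcal{I}-r_{\mathrm{max}}$; your choice of $r_{\mathrm{max}}$ with $\log(100\mathcal{I}/\omega)$ suggests you intend this union bound, but it is worth stating explicitly. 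Also, your remark about ``cleanly separating the two sources of a rejection'' is unnecessary: in Algorithm~\ref{alg:Informal} the annealing coin is only flipped when $f_{\mathrm{new}}>f_{\mathrm{old}}-\delta/4$, so a rejection always certifies the $f$-criterion failed.
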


\begin{proof}

First, we note that $(x^\star, y^\star) = (x_\ai, y_\ai)$ for all $\ai \in \{i^{\star}, \ldots, i^{\star} + r_{\mathrm{max}}\}$, and that Algorithm \ref{alg:Informal} stops after exactly $i^{\star} + r_{\mathrm{max}}$ iterations of the ``While" loop in Algorithm \ref{alg:Informal}.

Let $\mathsf{H}_i$ be the ``bad" event that, when Algorithm \ref{alg:InnerMaxLoop}  is called during the $i$th iteration of the ``While" loop in Algorithm \ref{alg:Informal}, the path traced by Algorithm \ref{alg:InnerMaxLoop} is not an $\epsilon_i$-increasing path.  Then, 
by Proposition \ref{Lemma_Greedypath} we have that
\be \label{eq_d7}
\mathbb{P}(\mathsf{H}_i) \leq 2 \nu  \mathcal{J}.
\ee

\noindent
Let $\mathsf{K}_i$ be the ``bad" event that  $\|G_{y}(x_i, y_i) - \nabla_{y} f(x_i, y_i)\| \geq \frac{\hat{\epsilon}_1}{10}$.  Then by Propositions \ref{Prop_Azuma} and \ref{thm_InnerRuntime} we have that
\be \label{eq_d8}
\mathbb{P}(\mathsf{K}_i) \leq  2\nu \mathcal{J}.
\ee

\noindent
Whenever $\mathsf{K}_i^c$ occurs we have that
\be \label{eq_d4}
\|\nabla_{y} f(x_i, y_i)\| &\leq \|G_{y}(x_i, y_i)\| + \|G_{y}(x_i, y_i) - \nabla_{y} f(x_i, y_i)\|\\
& \leq  (1- 2\eta L) \epsilon_{i} +  \|G_{y}(x_i, y_i) - \nabla_{y} f(x_i, y_i)\|\\
&\leq (1- 2\eta L) \epsilon_{i} + \frac{\hat{\epsilon}_1}{10}\\
&\leq  (1- \eta L) \epsilon_{i},
\ee
where the second Inequality holds by Line \ref{DiscADAMStart} of Algorithm \ref{alg:InnerMaxLoop}, and the last inequality holds since $\frac{\hat{\epsilon}_1}{10} \leq \eta L$.
Therefore, Inequalities \eqref{eq_d8} and \eqref{eq_d4} together with Proposition \ref{thm_OuterRuntime} imply that
\be
\|\nabla_{y} f(x^\star, y^\star)\| \leq  (1- \eta L) \epsilon_{i^\star}
\ee
with probability at least $1- 2\nu \mathcal{J} \mathcal{I} - 2\nu \times \left(r_{\mathrm{max}} \frac{8b}{\delta} + 1\right)$.  This proves Inequality \eqref{eq_d11}.

Inequality \eqref{eq_d4} also implies that, whenever  $\mathsf{K}_i^c$ occurs, the set $P_{\epsilon_{i}}(x_i, y_i)$ of endpoints of $\epsilon_{i}$-increasing paths with initial point $y_i$ (and $x$-value $x_i$) consists only of the single point $y_i$.  Therefore, we have that
\be \label{eq_d5}
\mathcal{L}_{\epsilon_{i}}(x_i ,y_i) = f(x_i ,y_i)
\ee
whenever $\mathsf{K}_i^c$ occurs.
Moreover, whenver $\mathsf{H}_i^c$ occurs we have that $\mathcal{Y}_{\ai+1}$ is the endpoint of an $\epsilon_{i}$-increasing path with starting point $(x_i + \Delta_i, y_i)$.  Now, $\mathcal{L}_{\epsilon_{i}}(x_i  + \Delta_i, y_i)$ is the supremum of the value of $f$ at the endpoints of all $\epsilon_{i}$-increasing paths with starting point $(x_i + \Delta_i, y_i)$.  Therefore, we must have that
\be \label{eq_d6}
\mathcal{L}_{\epsilon_{i}}(x_i  + \Delta_i,y_i) \geq f(x_i + \Delta_i, \mathcal{Y}_{\ai+1})
\ee
whenever  $\mathsf{H}_i^c$ occurs.
Therefore,
\be \label{eq_d9}
 &\mathbb{P}_{\Delta \sim Q_{x_i,y_i}} \bigg(\mathcal{L}_{\epsilon_{i}}(x_i + \Delta,y_i) > \mathcal{L}_{\epsilon_{i}}(x_i, y_i) - \frac{1}{2}\delta \bigg | x_i, y_i \bigg)\\
  &\stackrel{\textrm{Eq.}  \ref{eq_d5}, \ref{eq_d6}} \geq    \mathbb{P}_{\Delta \sim Q_{x_i,y_i}}\bigg(f(x_i + \Delta, \mathcal{Y}_{\ai+1}) > f(x_i ,y_i) - \frac{1}{2} \delta \bigg |  x_i, y_i \bigg) - \mathbb{P}(\mathsf{H}_i) - \mathbb{P}(\mathsf{K}_i)\\
&\stackrel{\textrm{Prop.}  \ref{Prop_Azuma}} \geq   \mathbb{P}_{\Delta \sim Q_{x_i,y_i}}\bigg(F(x_i + \Delta, \mathcal{Y}_{\ai+1}) > F(x_i ,y_i) - \frac{1}{4} \delta \bigg |  x_i, y_i \bigg) - 2 \nu - \mathbb{P}(\mathsf{H}_i) - \mathbb{P}(\mathsf{K}_i)\\
& \geq \mathbb{P}\bigg( \mathsf{Accept}_i = \mathsf{False}  \big | x_i, y_i \bigg) - 2 \nu - \mathbb{P}(\mathsf{H}_i) - \mathbb{P}(\mathsf{K}_i)\\
&\stackrel{\textrm{Eq.}  \ref{eq_d7}, \ref{eq_d8}} \geq   \mathbb{P}\bigg( \mathsf{Accept}_i = \mathsf{False}  \big | x_i, y_i \bigg) - 2 \nu -  2 \nu  \mathcal{J} -    2 \nu  \mathcal{J}, \,\,\, \forall i \leq \mathcal{I},
\ee
where the second inequality holds by Proposition \ref{Prop_Azuma}, since $\frac{\hat{\epsilon}_1}{10} \leq \frac{\delta}{8}$.
Define
$$p_i := \mathbb{P}_{\Delta \sim Q_{x_i,y_i}}\left(\mathcal{L}_{\epsilon_{i}}(x_i + \Delta,y_i) > \mathcal{L}_{\epsilon_{i}}(x_i, y_i) - \frac{1}{2}\delta \bigg | x_i, y_i \right)$$ for every $i\in \mathbb{N}$. Then Inequality \eqref{eq_d9} implies that

\be \label{eq_d10}
\mathbb{P}\left( \mathsf{Accept}_i = \mathsf{False}  \big | x_i, y_i \right) &\leq p_i + \nu(4 \mathcal{J} + 2) \leq p_i + \frac{1}{8}\omega  \qquad \forall i \leq \mathcal{I},
\ee
since $\nu \leq \omega/(32 \mathcal{J} + 16)$.
We now consider what happens for indices $i$ for which $p_i \leq 1- \omega/2$.  Since $(x_{i+s}, y_{i+s}) = (x_i, y_i)$ whenever $\mathsf{Accept}_{i+k} = \mathsf{False}$ for all $0\leq k \leq s$, we have by Inequality \eqref{eq_d10} that

\be
\mathbb{P}&\left( \cap_{s=0}^{r_{\mathrm{max}}} \{ \mathsf{Accept}_{i+s} = \mathsf{False}\}   \bigg | p_i \leq 1- \frac{1}{2}\omega \right) \leq \left(1- \frac{1}{4}\omega\right)^{r_{\mathrm{max}}} \leq \frac{\omega}{100 \mathcal{I}} \qquad \forall i \leq \mathcal{I}  - r_{\mathrm{max}}
\ee
since $$r_{\mathrm{max}} \geq \frac{4}{\omega} \log\left(\frac{100 \mathcal{I}}{\omega}\right).$$
Therefore, with probability at least $1-\frac{\omega}{100 \mathcal{I}} \times \mathcal{I} = 1-\frac{\omega}{100}$, we have that the event  $\cap_{s=0}^{r_{\mathrm{max}}} \{ \mathsf{Accept}_{i+s} = \mathsf{False} \}$ does not occur for any $i \leq \mathcal{I} - r_{\mathrm{max}}$ for which $p_i \leq 1- \frac{1}{2}\omega.$
Recall from Proposition \ref{thm_OuterRuntime} that Algorithm \ref{alg:Informal} terminates in at most $\mathcal{I}$ iterations of its ``While" loop,  with probability at least $1-2\nu \times \left(r_{\mathrm{max}} \frac{8b}{ \delta} + 1\right)$.
 Therefore,
 \be \label{eq_d13}
 &\mathbb{P}\left( p_{i^\star} > 1- \frac{1}{2}\omega \right) \geq 1-\frac{\omega}{100} -2\nu \times \left(r_{\mathrm{max}} \frac{8b}{ \delta} + 1\right).
\ee

\noindent
In other words, by the definition of  $p_{i^\star}$, Inequality \eqref{eq_d13} implies that with probability at least $1-\frac{\omega}{100} -2\nu \times \left(r_{\mathrm{max}} \frac{8b}{ \delta} + 1\right)$,  the point $(x^\star, y^\star)$ is such that
\be
\mathbb{P}_{\Delta \sim Q_{x^\star,y^\star}}&\bigg(\mathcal{L}_{\epsilon_{i^\star}}(x^\star + \Delta,y^\star) \leq \mathcal{L}_{\epsilon_{i^\star}}(x^\star, y^\star) - \frac{1}{2}\delta \bigg | x^\star, y^\star \bigg) \leq \frac{1}{2} \omega.
\ee
 This completes the proof of inequality \eqref{eq_d12}.
Finally we note that when Algorithm \ref{alg:Informal} terminates in at most $\mathcal{I}$ iterations of its ``While" loop, we have
\begin{equation}
    \epsilon_{i^\star} = \epsilon_0 \left(\frac{1}{1-2\eta L}\right)^{2i^\star}  \leq \epsilon_0 \left(\frac{1}{1-2\eta L}\right)^{2\mathcal{I}} \leq \epsilon,
\end{equation}
since $\eta \leq \frac{1}{8 L \mathcal{I}}$.
This completes the proof of Inequality \eqref{eq_d22}.
\end{proof}

\noindent
We can now complete the proof of the main theorem:

\begin{proof}[Proof of Theorem \ref{thm:GreedyMinimax-main}]
First, by Lemma \ref{lemma_polytime}, with probability at least $1- 3 \nu  \mathcal{J}  \mathcal{I} \geq 99/100$, our algorithm converges to some point $(x^\star, y^\star)$ after at most $$\left(\tau_1 \log\left(\frac{r_{\mathrm{max}}}{\nu}\right) + 4r_{\mathrm{max}} \frac{b}{\delta} + 1\right) \times (\mathcal{J} \times \mathfrak{b}_{y} + \mathfrak{b}_{\mathrm{0}} + \mathfrak{b}_{x})$$ gradient, function, and sampling oracle evaluations, which is polynomial in $b, L_1, {L}, 1/\epsilon,1/\delta, 1/\omega,$ and does not depend on the dimension $d$.   
  By Lemma \ref{lemma_greedyMinimax}, if we set $\epsilon^\star= \epsilon_{i^\star}$, we have that Inequalities \eqref{eq_approx_local_equilibrium_y} and \eqref{eq_approx_local_equilibrium_x} hold for parameters $\epsilon^\star \in\left[ \frac{1}{2}\epsilon, \epsilon\right]$, $\delta, \omega$ and distribution $Q$, with probability at least $$1- 2\nu \mathcal{J} \mathcal{I} - 2\nu \times \left(r_{\mathrm{max}} \frac{8b}{ \delta} + 1\right) \geq \frac{19}{20},$$  since $\nu \leq  \frac{1}{20}\left(2 \mathcal{J} \mathcal{I} + 2 \times \left(r_{\mathrm{max}} \frac{8b}{\delta} + 1\right)\right)^{-1}$.
\end{proof}

\section{Conclusion and Future Directions} \label{sec_conclusion}

We introduce a new variant of the min-max optimization framework, and provide a gradient-based algorithm with efficient convergence guarantees to an equilibrium for this framework,
for nonconvex-nonconcave losses and from any initial point.
Empirically, we observe our algorithm converges on many challenging test functions
and shows improved stability 
when training GANs.

 While we show our algorithm runs in time polynomial in $b,L$, and independent of dimension $d$, we do not believe our bounds are tight and it would be interesting to show the run-time is linear in $b, L$.
Moreover, while our guarantees hold for any distribution $Q$, it would be interesting to see if a specialized analysis 
for adaptively preconditioned distributions can lead to improved bounds.
Our framework can also be extended to more general settings;
e.g., to multi-agent minimization problems arising in meta-learning \citep{finn2017model}.

 \section*{Acknowledgments} This research was supported in part by NSF CCF-1908347, NSF CCF-2112665, and NSF CCF-2104528 grants and an AWS ML research award.

\bibliographystyle{plainnat}
\bibliography{GAN}

\newpage
\appendix
\onecolumn

\section{Equilibrium in the Strongly Convex-Strongly Concave Setting} \label{sec_strongly_convex}

In this section we show that, if $f$ is strongly-convex strongly-concave with Lipschitz gradient, if we choose $Q$ to be the distribution of (either deterministic or stochastic) gradients with mean $-\nabla_x f$, then an $(\epsilon, \delta, \omega, Q)$-equilibrium corresponds to an ``aproximate" global min-max point (Theorem \ref{assumption_x_stochastic_gradient}).
We then show that this fact, together with the proof of our main result, implies that when our algorithm is applied to $\alpha$-strongly-convex $\alpha$-strongly-concave objective functions $f$ with $L$-Lipschitz gradient, it finds an ``approximate" global min-max point with duality gap $O(\eps)$ in a number of gradient evaluations that is polynomial in $L, \frac{1}{\eps}, \frac{1}{\alpha}, L, D$ (Corollary \ref{cor_runtime_strongly_convex}).

For any $L>0$, we say that a function $\psi: \mathbb{R}^d \rightarrow \mathbb{R}$ has $L$-Lipschitz gradient (equivalently, ``$L$-smooth") if for any $x, \theta \in \mathbb{R}^d$, $$\|\nabla \psi(x) -  \nabla \psi( \theta ) \| \leq L \times || x- \theta  \|.$$

\noindent
And for any $\alpha >0$ we say that $\psi$ is $\alpha$-strongly convex if for any $x, \theta \in \mathbb{R}^d$, $$(\nabla \psi(x) - \nabla \psi(\theta))^\top (x-\theta) \geq \alpha \|x-\theta\|^2.$$

\noindent
Similarly, we say that $\psi$ is $\alpha$-strongly concave if $-\psi$ is $\alpha$ strongly-convex.

In the following, we assume that the proposal distribution is a stochastic gradient for $-\nabla_x f$ with some variance $\sigma^2\geq 0$; for simplicity we set $\sigma^2 = 0$ in Corollary \ref{cor_runtime_strongly_convex}, although this is not strictly necessary.

\begin{assumption} \label{assumption_x_stochastic_gradient}$(\sigma\geq 0)$
For every $(x,y) \in \mathbb{R}^d \times \mathbb{R}^d$, the distribution $Q_{x,y}$ satisfies $\mathbb{E}_{\Delta \sim Q_{x,y}}[\Delta] = -\frac{1}{2L}  \nabla_x f(x,y)$ and $\mathbb{E}_{\Delta \sim Q_{x,y}}[\|-\frac{1}{2L} \nabla_x f(x,y)- \Delta \|^2] \leq \sigma^2$.
\end{assumption}

\begin{theorem}\label{thm_strongly_convex}
Suppose that $f: \mathbb{R}^d \times \mathbb{R}^d \rightarrow \mathbb{R}$ is $\alpha$-strongly convex in $x$ and $\alpha$-strongly concave in $y$, with $L$-Lipschitz gradient in both variables for some $L \geq \alpha >0$.
Then, for any $\eps, \delta, \omega>0$ with $\omega \leq \frac{1}{2}$, and any proposal distribution $Q_{x,y}$ satisfying Assumption \eqref{assumption_x_stochastic_gradient} for some $\sigma\geq 0$, we have that any point $(x^\star, y^\star)$ which is an $(\eps, \delta, \omega, Q)$-approximate local equilibrium of $f$ also has duality gap satisfying
\begin{equation}
\max_{y \in \mathbb{R}^d} f(x^\star, y) -    \min_{x \in \mathbb{R}^d} f(x, y^\star) \leq \frac{L\eps^2}{2\alpha^2} + 
    \frac{L^3}{\alpha^2}
   \left(\sqrt{\delta + L \left(2\frac{\eps}{\alpha} +  \frac{1}{\alpha}(L\frac{\eps}{\alpha} + 2 \sigma)\right) + L \frac{\eps^2}{2 \alpha^2} +L \frac{\eps}{\alpha}} +  L \frac{\eps}{\alpha}\right)^2. \nonumber
\end{equation}
\end{theorem}

\noindent
Before proving Theorem \ref{thm_strongly_convex}, we first show a number of Lemmas.
In the following we set $\mu = \frac{1}{L}$.

\begin{lemma}\label{prop_strongly_convex_1}
For any $x, w \in \mathbb{R}^d$,
\begin{equation}
    \|\mathrm{argmax}_{z \in \mathbb{R}^d} f(x,z) -  \mathrm{argmax}_{z \in \mathbb{R}^d} f(\theta,z)\| \leq  \frac{L}{\alpha}\|\theta-x\| \nonumber
\end{equation}
\end{lemma}

\begin{proof}
Since $f(x,\cdot)$ is concave, we have that a point $z$ is a global maximum if and only if $\nabla_y f(x,z) = 0$. 
Since $f(x,\cdot)$ is $\alpha$-strongly concave for $\alpha>0$, this global maximum point is unique.

Let $z^\star$ be the global maximum of $f(x,\cdot)$, and let $\zeta^\star$ be the global maximum of $f(\theta, \cdot)$.
Then, since $f(x,\cdot)$ is $\alpha$-strongly concave, $\|\nabla_y f(x,z)\| \geq \alpha \|z - z^\star\|$ for all $z \in \mathbb{R}^d$.
Moreover, since $f$ is $L$-smooth, we also have that   $\|\nabla_y f(x,z) - \nabla_y f(\theta,z)\| \leq L\|\theta-x\|$ for every $x,\theta, z \in \mathbb{R}^d$.
Therefore, 

\begin{align} \label{eq_strongly_convex1}
\|\nabla_y f(\theta,z)\| &\geq \|\nabla_y f(x,z) \| - \|\nabla_y f(\theta,z) - \nabla_y f(x,z)\| \nonumber\\
&\geq  \|\nabla_y f(x,z) \|  - L\|\theta-x\| \nonumber\\
&\geq \alpha \|z - z^\star\| - L\|\theta-x\|.
\end{align}

\noindent
Since $\alpha \|z - z^\star\| - L\|\theta-x\|>0$ for any $z \in \mathbb{R}^d$ such that $\|z-z^\star\| > \frac{L}{\alpha}\|\theta-x\|$, \eqref{eq_strongly_convex1} implies that 
\begin{equation}
    \|\mathrm{argmax}_{z \in \mathbb{R}^d} f(x,z) -  \mathrm{argmax}_{z \in \mathbb{R}^d} f(\theta,z)\| \leq \frac{L}{\alpha}\|\theta-x\| \nonumber
\end{equation}

\end{proof}

\begin{lemma}\label{prop_strongly_convex_2}
If $(x,w)\in \mathbb{R}^d\times \mathbb{R}^d$  are such that $\|\nabla_y f(x,w)\| \leq \epsilon$ then $$\|w -\mathrm{argmax}_{z \in \mathbb{R}^d} f(x, z) \| \leq \frac{\eps}{\alpha}$$.
\end{lemma}

\begin{proof}
Let  $z^\star$ be the unique global maximum point of $f(x,\cdot)$.
Since $f(x, \cdot)$ is $\alpha$-strongly concave, we have that
 \begin{align} \label{eq_strongly_convex2}
 \alpha \|w - z^\star\| &\leq \|\nabla_y f(x,w)\|  \leq \epsilon
\end{align}

\noindent
Therefore, \eqref{eq_strongly_convex2} implies that

\begin{align}
\|w -\mathrm{argmax}_{z \in \mathbb{R}^d} f(x, z) \| = \|w - z^\star\| \leq \frac{\eps}{\alpha}. \nonumber
\end{align}

\end{proof}

\begin{lemma}\label{prop_strongly_convex_3}
For any $x, \theta \in \mathbb{R}^d$,  $$\|\mathcal{L}_\eps(x,y) - \mathcal{L}_\eps(\theta,y)| \leq L \left(2\frac{\eps}{\alpha} +  \frac{L}{\alpha}\|\theta- x\|\right)$$
\end{lemma}

\begin{proof}

Denote by $\hat{P}_\eps(x,y) \subseteq P_\eps(x,y)$ the collection of endpoints of $\eps$-greedy paths where the endpoint $z$ of the path satisfies $\nabla_y f(x,z) = \epsilon$.
Since $f$ is smooth, we have that $\sup_{z\in P_\eps(x,y)} f(x,z) = \sup_{z\in \hat{P}_\eps(x,y)} f(x,z)$ (this is true since, if the endpoint $z$ of an  $\eps$-greedy path does not satisfy $\|\nabla_y f(x,z)\| = \epsilon$,  then the $\eps$-greedy path can be extended to achieve a higher value of $f$).
Thus, we have that
\begin{align*}
    |\mathcal{L}_\eps(x,y) - \mathcal{L}_\eps(\theta,y)| &= |\sup_{z\in P_\eps(x,y)} f(x,z) - \sup_{w\in P_\eps(\theta,y)} f(\theta,w)| \\
     &= |\sup_{z\in \hat{P}_\eps(x,y)} f(x,z) - \sup_{w\in \hat{P}_\eps(\theta,y)} f(\theta,w)| \nonumber\\
    &\leq \sup_{z\in \hat{P}_\eps(x,y), w\in \hat{P}_\eps(\theta,y)} | f(x,z) - f(\theta,w)| \nonumber\\
    &\leq \sup_{z\in \hat{P}_\eps(x,y), w\in \hat{P}_\eps(\theta,y)} L \times \|w-z\| \nonumber\\
    &\leq \sup_{z\in \hat{P}_\eps(x,y), w\in \hat{P}_\eps(\theta,y)} L \times \left(\|w - \mathrm{argmax}_{\zeta \in \mathbb{R}^d} f(\theta,\zeta)\|  \right. \nonumber\\
    & \qquad  \left. +  \|\mathrm{argmax}_{\zeta \in \mathbb{R}^d} f(\theta,\zeta)- \mathrm{argmax}_{\zeta \in \mathbb{R}^d} f(x,\zeta)\| + \|\mathrm{argmax}_{\zeta \in \mathbb{R}^d} f(x,\zeta) -z\|\right) \nonumber\\
    & \stackrel{\textrm{Lemmas \ref{prop_strongly_convex_1}, \ref{prop_strongly_convex_2}}} \leq   L\times \left(\frac{\eps}{\alpha} + \frac{L}{\alpha}\|\theta-x\| +\frac{\eps}{\alpha}\right) \nonumber\\
    &= L\times \left(2\frac{\eps}{\alpha} + \frac{L}{\alpha}\|\theta-x\|\right), \nonumber
\end{align*}
where the last inequality holds by Lemma \ref{prop_strongly_convex_1}, and also by Lemma \ref{prop_strongly_convex_2} because $\|\nabla_y f(x, z)\| = \eps$ whenever  $z\in \hat{P}_\eps(x,y)$ and $\|\nabla_y f(\theta, w)\| = \eps$ whenever $w\in P_\eps(\theta,y)$.
\end{proof}

\begin{lemma}\label{prop_strongly_convex_4}
For any $x,y \in \mathbb{R}^d$ we have $$|\mathcal{L}_\eps(x,y) - \mathcal{L}_0(x,y)| \leq L \frac{\eps^2}{2 \alpha^2}$$
\end{lemma}

\begin{proof}

\begin{align*}
|\mathcal{L}_\eps(x,y) - \mathcal{L}_0(x,y)|  &= |\sup_{z \in \hat{P}(x,y)} f(x,z) - f(x, \mathrm{argmax}_{z\in \mathbb{R}^d} f(x,z))|\\
&\leq  \sup_{z \in \hat{P}(x,y)}|f(x,z) - f(x, \mathrm{argmax}_{z\in \mathbb{R}^d} f(x,z))|\nonumber\\
&\leq \sup_{z \in \hat{P}(x,y)} \frac{L}{2}\|z - \mathrm{argmax}_{z\in \mathbb{R}^d} f(x,z)\|^2 \nonumber\\
&\stackrel{\textrm{Lemmas  \ref{prop_strongly_convex_2}}} \leq   \frac{L}{2} \left(\frac{\eps}{\alpha}\right)^2, \nonumber
\end{align*}
where the last inequality holds by Lemma \ref{prop_strongly_convex_2} because $\|\nabla_y f(x, z)\| = \eps$ whenever  $z\in \hat{P}_\eps(x,y)$.

\end{proof}

\begin{lemma}\label{prop_strongly_convex_5}
$\mathcal{L}_0(x^\star, y^\star)$ is differentiable and
$$\| \nabla_x f(x^\star, y^\star) - \nabla_x \mathcal{L}_0(x^\star, y^\star) \| \leq L \frac{\eps}{\alpha}.$$
\end{lemma}

\begin{proof}

Since $f(x,\cdot)$ is concave, we have that $\mathcal{L}_0(x^\star, y^\star) = \max_{z\in \mathbb{R}^d} f(x^\star, z)$.
Moreover, by strong concavity $f(x^\star, \cdot)$ has a unique maximizer $\mathrm{argmax}_{z\in \mathbb{R}^d} f(x^\star, z)$.
Thus, by Danskin's Theorem \cite{danskin1966theory}, we have that $ \mathcal{L}_0(x^\star, \cdot)$ is differentiable and that 
\begin{equation}\label{eq_strongly_convex_3}
    \nabla_x  \mathcal{L}_0(x^\star, y^\star) = \nabla_x f(x^\star, \mathrm{argmax}_{z\in \mathbb{R}^d} f(x^\star, z)).
    \end{equation}

\noindent
Therefore, since $f$ is $L$-smooth,
\begin{align*}
    \| \nabla_x f(x^\star, y^\star) - \nabla_x \mathcal{L}_0(x^\star, y^\star) \| &\stackrel{\textrm{Eq.  \ref{eq_strongly_convex_3}}} =  \| \nabla_x f(x^\star, y^\star) -  \nabla_x f(x^\star, \mathrm{argmax}_{z\in \mathbb{R}^d} f(x^\star, z)) \| \\
&\leq    L \times \|y^\star - \mathrm{argmax}_{z\in \mathbb{R}^d} f(x^\star, z) \| \nonumber\\
&\leq  L \times \frac{\eps}{\alpha}, \nonumber
\end{align*}
where the last inequality holds by Lemma \ref{prop_strongly_convex_2} since $\|\nabla_y f(x^\star, y^\star)\| \leq \eps$.

\end{proof}

\begin{proof}[Proof of Theorem \ref{thm_strongly_convex}]

Since $(x^\star, y^\star)$ is an $(\eps, \delta, \omega, Q)$-approximate local equilibrium of $f$, we have that,
\begin{equation}
    \|\nabla_y f(x^\star, y^\star)\| \leq \eps, \nonumber
\end{equation}
and that, with probability at least $1-\omega$,
\begin{equation}
    \mathcal{L}_\eps(x^\star - \Delta, y^\star) \geq f(x^\star, y^\star) - \delta, \nonumber
\end{equation}
where $\Delta \sim Q_{x^\star, y^\star}$.
Thus, since by Assumption \ref{assumption_x_stochastic_gradient} $\mathbb{E}_{\Delta \sim Q_{x,y}}[\Delta] = \mu \nabla_x f(x,y)$ and $\mathbb{E}_{\Delta \sim Q_{x,y}}[\|\Delta -\mu \nabla_x\|^2] \leq \sigma^2$, by Chebyshev's inequality, we have that, with probability at least $1-\omega- \frac{1}{4}$,
\begin{equation}\label{eq_strongly_convex_4}
    \mathcal{L}_\eps(x^\star - \mu \nabla_x f(x^\star, y^\star) + \nu, y^\star) \geq f(x^\star, y^\star) - \delta,
\end{equation}
for some $\nu \in \mathbb{R}^d$ such that $\|\nu\|\leq 2\sigma$.

Since $\omega\leq \frac{1}{2}$, we have $1-\omega- \frac{1}{4}\geq \frac{1}{4}$.  Thus, \eqref{eq_strongly_convex_4} holds with probability at least $\frac{1}{4}$.
Since \eqref{eq_strongly_convex_4}  holds with probability at least $\frac{1}{4}$ yet contains no random variables, it must also hold with probability $1$.
Therefore, by plugging in Lemma \ref{prop_strongly_convex_4} to the LHS of  \eqref{eq_strongly_convex_4}  and applying Lemma \ref{prop_strongly_convex_2} together with the fact that $f$ is $L$-smooth to the RHS of  \eqref{eq_strongly_convex_4}, we have that
\begin{align}\label{eq_strongly_convex_5}
        \mathcal{L}_0(x^\star - \mu \nabla_x f(x^\star, y^\star) + \nu, y^\star) + L \frac{\eps^2}{2 \alpha^2} \geq \mathcal{L}_0(x^\star, y^\star) - L \frac{\eps}{\alpha}- \delta.
\end{align}

\noindent
Therefore, applying Lemma \ref{prop_strongly_convex_5} to  \eqref{eq_strongly_convex_5} we get that
\begin{align}\label{eq_strongly_convex_6}
        \mathcal{L}_0(x^\star - \mu \nabla_x \mathcal{L}_0(x^\star, y^\star) + v, y^\star) + L \frac{\eps^2}{2 \alpha^2} \geq \mathcal{L}_0(x^\star, y^\star) - L \frac{\eps}{\alpha}- \delta,
\end{align}
for some $v \in \mathbb{R}^d$ such that $\|v\|\leq L\frac{\eps}{\alpha} + 2 \sigma$.
Therefore, plugging in Lemma \ref{prop_strongly_convex_5} to the LHS of \eqref{eq_strongly_convex_6}, we get that
\begin{align}\label{eq_strongly_convex_6b}
        \mathcal{L}_0(x^\star - \mu \nabla_x \mathcal{L}_0(x^\star, y^\star), y^\star) + L \left(2\frac{\eps}{\alpha} +  \mu \frac{L}{\alpha}\left(L\frac{\eps}{\alpha} + 2 \sigma\right)\right) + L \frac{\eps^2}{2 \alpha^2} \geq \mathcal{L}_0(x^\star, y^\star) - L \frac{\eps}{\alpha}- \delta,
\end{align}
since $\|v\|\leq L\frac{\eps}{\alpha} + 2 \sigma$.
We also have that
\begin{align}\label{eq_strongly_convex_7}
\mathcal{L}_0(x^\star - \mu \nabla_x \mathcal{L}_0(x^\star, y^\star), y^\star) = \mathcal{L}_0(x^\star, y^\star) -(\nabla_x \mathcal{L}_0(x^\star, y^\star) + u)^\top \mu \nabla_x \mathcal{L}_0(x^\star, y^\star),
\end{align}
for some $u \in \mathbb{R}^d$ such that $\|u\| \leq L \| \mu \nabla_x \mathcal{L}_0(x^\star, y^\star)\|$, since $f$ is $L$-smooth.
Therefore \eqref{eq_strongly_convex_7} implies that
\begin{align}\label{eq_strongly_convex_8}
\mathcal{L}_0(x^\star - \mu \nabla_x \mathcal{L}_0(x^\star, y^\star), y^\star) \leq \mathcal{L}_0(x^\star, y^\star) - (\mu-\mu^2 L)\|\nabla_x \mathcal{L}_0(x^\star, y^\star)\|^2,
\end{align}

\noindent
Plugging \eqref{eq_strongly_convex_8} into \eqref{eq_strongly_convex_6b}, we get that (since $\mu= \frac{1}{2L}$ implies that $\mu - \mu^2 L >0$),
\begin{align}\label{eq_strongly_convex_9}
        \|\nabla_x \mathcal{L}_0(x^\star, y^\star)\| \leq \frac{1}{\mu-\mu^2 L}\sqrt{\delta + L \left(2\frac{\eps}{\alpha} + \mu \frac{L}{\alpha}\left(L\frac{\eps}{\alpha} + 2 \sigma\right)\right) + L \frac{\eps^2}{2 \alpha^2} +L \frac{\eps}{\alpha}}
\end{align}

\noindent
But from \eqref{eq_strongly_convex_3} we have that
\begin{equation}\label{eq_strongly_convex_10}
    \nabla_x  \mathcal{L}_0(x^\star, y^\star) = \nabla_x f(x^\star, \mathrm{argmax}_{z\in \mathbb{R}^d} f(x^\star, z)).
    \end{equation}

\noindent
Thus,
\begin{align}\label{eq_strongly_convex_11}
    \|\nabla_x  \mathcal{L}_0(x^\star, y^\star) - \nabla_x  f(x^\star, y^\star)\| &\stackrel{\textrm{Eq.  \ref{eq_strongly_convex_10}}} =   \|\nabla_x f(x^\star, \mathrm{argmax}_{z\in \mathbb{R}^d} f(x^\star, z))  - \nabla_x  f(x^\star, y^\star)\|\nonumber\\
    &\leq L \|y^\star - \mathrm{argmax}_{z\in \mathbb{R}^d} f(x^\star, z)\| \nonumber\\
 &\stackrel{\textrm{Lemma  \ref{prop_strongly_convex_2}}} \leq    L \frac{\eps}{\alpha},
    \end{align}
where the first inequality holds since $f$ is $L$-smooth, and the second inequality holds by Lemma  \ref{prop_strongly_convex_2} since $\|\nabla_y f(x^\star, y^\star)\|\leq \eps$.

Plugging in \eqref{eq_strongly_convex_11} into \eqref{eq_strongly_convex_9}, we get
\begin{align}\label{eq_strongly_convex_13}
        \|\nabla_x f(x^\star, y^\star)\| \leq \frac{1}{\mu-\mu^2}\sqrt{\delta + L \left(2\frac{\eps}{\alpha} + \mu \frac{L}{\alpha}\left(L\frac{\eps}{\alpha} + 2 \sigma\right)\right) + L \frac{\eps^2}{2 \alpha^2} +L \frac{\eps}{\alpha}} +  L \frac{\eps}{\alpha}.
\end{align}

\noindent
Now, since $f(\cdot, y^\star)$ is $\alpha$-strongly convex, by Lemma \ref{prop_strongly_convex_2} (applied to $-f$ instead of $f$), we have
\begin{align} \label{eq_strongly_convex_14}
    \|x^\star - \mathrm{argmin}_{\theta \in \mathbb{R}^d} f(\theta, y^\star)\|  \leq \frac{\|\nabla_x f(x^\star, y^\star)\|}{\alpha}.
\end{align}

\noindent
Since $f$ is $L$-smooth, \eqref{eq_strongly_convex_14} implies that
\begin{align} \label{eq_strongly_convex_15}
    f(x^\star,y^\star) - \min_{\theta \in \mathbb{R}^d} f(\theta, y^\star) &\leq  L \times  \frac{1}{2}\|x^\star - \mathrm{argmin}_{\theta \in \mathbb{R}^d} f(\theta, y^\star)\|^2 \\
    &\stackrel{\textrm{Eq. \ref{eq_strongly_convex_14}}} \leq  \frac{L}{2} \times \left(\frac{\|\nabla_x f(x^\star, y^\star)\|}{\alpha}\right)^2. \nonumber
\end{align}

\noindent
Moreover, since $f(x^\star, \cdot)$ is $\alpha$-strongly concave and $\|\nabla_y f(x^\star, y^\star)\|\leq \eps$ we have  by Lemma \ref{prop_strongly_convex_2} that 
 \begin{align} \label{eq_strongly_convex_16}
 \|y^\star -\mathrm{argmax}_{z \in \mathbb{R}^d} f(x^\star, z) \| \leq \frac{\eps}{\alpha}.
 \end{align}
 Since $f$ is $L$-smooth, \eqref{eq_strongly_convex_16} implies that
 \begin{align} \label{eq_strongly_convex_17}
    \max_{z \in \mathbb{R}^d} f(x^\star, z) - f(x^\star,y^\star) &\leq  L \times  \frac{1}{2}\|y^\star - \mathrm{argmax}_{z \in \mathbb{R}^d} f(x^\star, z)\|^2 \\
    &\stackrel{\textrm{Eq. \ref{eq_strongly_convex_16}}} \leq  \frac{L}{2} \times \left(\frac{\eps}{\alpha}\right)^2. \nonumber
\end{align}

\noindent
Thus, adding \eqref{eq_strongly_convex_15} to \eqref{eq_strongly_convex_17}, and plugging in \eqref{eq_strongly_convex_13}, we get that
\begin{align*} 
     \max_{z \in \mathbb{R}^d} &f(x^\star, z)  - \min_{\theta \in \mathbb{R}^d} f(\theta, y^\star) \leq \frac{L}{2} \times \left(\frac{\|\nabla_x f(x^\star, y^\star)\|}{\alpha}\right)^2 + \frac{L}{2} \times \left(\frac{\eps}{\alpha}\right)^2 \\
    & \stackrel{\textrm{Eq. \ref{eq_strongly_convex_13}}} \leq  
    \frac{L\eps^2}{2\alpha^2} + 
    \frac{L}{2\alpha^2}
   \left(\frac{1}{\mu-\mu^2}\sqrt{\delta + L \left(2\frac{\eps}{\alpha} + \mu \frac{L}{\alpha}\left(L\frac{\eps}{\alpha} + 2 \sigma\right)\right) + L \frac{\eps^2}{2 \alpha^2} +L \frac{\eps}{\alpha}} +  L \frac{\eps}{\alpha}\right)^2. \nonumber
\end{align*}

\noindent
Since $\mu= \frac{1}{2L}$, we get that

\begin{align*}
     \max_{z \in \mathbb{R}^d} f(x^\star, z)  &- \min_{\theta \in \mathbb{R}^d} f(\theta, y^\star)\\ &\leq
    \frac{L\eps^2}{2\alpha^2} + 
    \frac{L}{2\alpha^2}
   \left(\frac{1}{\mu-\mu^2}\sqrt{\delta + L \left(2\frac{\eps}{\alpha} + \mu \frac{L}{\alpha}\left(L\frac{\eps}{\alpha} + 2 \sigma\right)\right) + L \frac{\eps^2}{2 \alpha^2} +L \frac{\eps}{\alpha}} +  L \frac{\eps}{\alpha}\right)^2 \\
   &=\frac{L\eps^2}{2\alpha^2} + 
    \frac{L^3}{\alpha^2}
   \left(\sqrt{\delta + L \left(2\frac{\eps}{\alpha} +  \frac{1}{\alpha}\left(L\frac{\eps}{\alpha} + 2 \sigma\right)\right) + L \frac{\eps^2}{2 \alpha^2} +L \frac{\eps}{\alpha}} +  L \frac{\eps}{\alpha}\right)^2. \nonumber
\end{align*}

\end{proof}

\subsection{Runtime in Strongly Convex-Strongly Concave Setting}\label{sec_strongly_convex_runtime}

Suppose that $f(x,y)$ is $L$-smooth in  $(x,y)$, $\alpha$-strongly convex in $x$ and $\alpha$-strongly concave in $y$.

In this section we assume that hyper-parameter $\tau_1$ in Algorithm \ref{alg:Informal} is set to $\tau_1 = \infty$, and that  Algorithm \ref{alg:InnerMaxLoop} takes as input exact gradients for $\nabla_y f$ (i.e., the ``stochastic" gradients have variance set to $0$).

\begin{lemma}
The function $\max_{z\in \mathbb{R}^d}f(\cdot,z)$ is $\alpha$-strongly convex.
\end{lemma}

\begin{proof}
Define the ``global max" function $\psi(x): = \max_{z \in \mathbb{R}^d} f(x,z)$ for all $x \in \mathcal{X}$.  We start by showing that the function $\psi(x)$ is $\alpha$-strongly convex.  Indeed, for any $x_1, x_2 \in \mathcal{X}$ and any $\lambda \in [0,1]$ we have 
\be
\lambda \psi( \lambda x_1 + (1-\lambda)x_2) &=  \max_{y \in \mathbb{R}^d} f(\lambda x_1 + (1-\lambda)x_2,y)\\
& \leq  \max_{y \in \mathbb{R}^d} \left[ \lambda f( x_1, y) + (1-\lambda)f(x_2,y) - \frac{1}{2} \alpha \lambda(1-\lambda)\|x_1 -x_2\|^2 \right]\\
&\leq   \lambda\left[ \max_{y \in \mathbb{R}^d}  f( x_1, y)] + (1-\lambda) [\max_{y \in \mathbb{R}^d} f(x_2,y)\right] - \frac{1}{2} \alpha \lambda(1-\lambda)\|x_1 -x_2\|^2\\
& = \lambda \psi(x_1) + (1-\lambda)\psi(x_2) - \frac{1}{2} \alpha \lambda(1-\lambda)\|x_1 -x_2\|^2,
\ee
where the first inequality holds by the $\alpha$-strong convexity of $f(\cdot, y)$.
Thus $\psi$ is $\alpha$-strongly convex.
\end{proof}

\begin{lemma}\label{lemma_bounding_ball}
Denote by $\mathsf{y}_{i,j}$ the point $\mathsf{y}_{j}$ in Algorithm \ref{alg:InnerMaxLoop} when it is called at the $i$'th iteration of Algorithm \ref{alg:Informal}.
Let $(x^\dagger, y^\dagger)$ be the global min-max point of $f$, and define $D:= \|(x_0, y_0) - (x^\dagger, y^\dagger)\|$ and  $$ \mathfrak{D}: = 2\max\left(D + \frac{LD}{\alpha} + \frac{\eps}{\alpha},  \sqrt{ \frac{LD +  \frac{L^2D}{\alpha}  + L \frac{\eps^2}{\alpha^2}}{\alpha}}, \frac{L\sqrt{D}}{\alpha\sqrt{\alpha}}, \frac{\eps\sqrt{L}}{\alpha\sqrt{\alpha}}\right).$$
Then, as long as $\eta \leq \frac{1}{L}$, at every iteration $i$ of Algorithm \ref{alg:Informal} and at every iteration $j$ of its subroutine Algorithm \ref{alg:InnerMaxLoop}, we have that $\|(x_i, y_i)  - (x^\dagger, y^\dagger) \| \leq \mathfrak{D}$ and  $\|(x_i, \mathsf{y}_{i,j})  - (x^\dagger, y^\dagger) \| \leq \mathfrak{D}$.
\end{lemma}

\begin{proof}

{\bf Bounding the distance $\|(x_1, y_1) - (x^\dagger, y^\dagger)\|$}.\\

Since at the global min-max point $(x^\dagger, y^\dagger)$ we have $\nabla_y f(x^\dagger, y^\dagger) =0$, and since $f$ is L-Lipschitz, we have that $$\|\nabla_y f(x_0,y_0)\| \leq L \|(x_0,y_0) - (x^\dagger, y^\dagger)\| \leq L D.$$

\noindent
Thus, since $f(x_0, \cdot)$ is $\alpha$-strongly concave, we have by Lemma \ref{prop_strongly_convex_2} that 
\begin{align}\label{eq_Strong_Convex_RT_5}
\|y_0 - \mathrm{argmax}_{z\in \mathbb{R}^d} f(x_0, z)\| \leq \frac{LD}{\alpha}.
\end{align}

\noindent
And since (by definition) $x_0 = x_1$,  and $\nabla_y f(x_0, y_1) = \nabla_y f(x_1, y_1) \leq \eps$, we have that (again by Lemma \ref{prop_strongly_convex_2} ),
\begin{align}\label{eq_Strong_Convex_RT_6}
\|y_1 - \mathrm{argmax}_{z\in \mathbb{R}^d} f(x_0, z)\| \leq \frac{\eps}{\alpha}.
\end{align}

\noindent
Thus, combining \eqref{eq_Strong_Convex_RT_5} and \eqref{eq_Strong_Convex_RT_6}, we have that (since $x_0 = x_1$),
\begin{align*}
\|(x_1, y_1) - (x^\dagger, y^\dagger)\| &\leq \|(x_0, y_0) - (x^\dagger, y^\dagger)\| + \|(x_0, y_0) - (x_0, y_1)\| \\
&\leq D + \|y_0 - y_1\|\nonumber\\
& \leq D + \|y_1 - \mathrm{argmax}_{z\in \mathbb{R}^d} f(x_0, z)\|  + \|y_0 - \mathrm{argmax}_{z\in \mathbb{R}^d} f(x_0, z)\| \nonumber\\
& \leq D + \frac{LD}{\alpha} + \frac{\eps}{\alpha}\nonumber\\
&\leq \mathfrak{D}. \nonumber
\end{align*}

\noindent
{\bf Bounding the distance $\|x_i - x^\dagger\|$}.\\

\noindent
At each iteration $i>1$ of Algorithm \ref{alg:Informal} we have that $\|\nabla_y f(x_i, y_i) \| \leq \eps$.
Thus, by Lemma \ref{prop_strongly_convex_2} we have that 
\begin{equation}\label{eq_Strong_Convex_RT_1}
\|y_i -\mathrm{argmax}_{z \in \mathbb{R}^d} f(x_i, z)\| \leq \frac{\eps}{\alpha}.
\end{equation}
Thus, since $f$ is $L$-smooth,
\begin{align}\label{eq_Strong_Convex_RT_2}
    \max_{z \in \mathbb{R}^d} f(x_i, z) - f(x_i, y_i)   &\leq L \times \|y_i -\mathrm{argmax}_{z \in \mathbb{R}^d} f(x_i, z)\|^2 \\
    &\stackrel{\textrm{Eq. \ref{eq_Strong_Convex_RT_1}}} \leq   L \frac{\eps^2}{\alpha^2}. \nonumber
\end{align}

\noindent
But $f(x_{i+1}, y_{i+1}) \leq f(x_i, y_i)$ at each iteration $i$  (since, if the proposed update to $x_i$ does not lead to a decrease in the value of $f$ we have that the proposed update to $x_i$ would be rejected and $x_i= x_{i+1}$ and $y_i = y_{i+1}$).
Therefore,
\begin{align}\label{eq_Strong_Convex_RT_3}
f(x_i, y_i) \leq f(x_1, y_1) \qquad \forall i \geq 1.
\end{align}

\noindent
Therefore, \eqref{eq_Strong_Convex_RT_2} and \eqref{eq_Strong_Convex_RT_3} imply that

\begin{align}
\max_{z \in \mathbb{R}^d} f(x_i, z)  &\leq  f(x_i, y_i)+ L \frac{\eps^2}{\alpha^2} \nonumber\\
  &\stackrel{\textrm{Eq. \ref{eq_Strong_Convex_RT_2}}} \leq   f(x_1, y_1) + L \frac{\eps^2}{\alpha^2}\nonumber\\
  & \stackrel{\textrm{Eq. \ref{eq_Strong_Convex_RT_3}}} \leq   \max_{z \in \mathbb{R}^d} f(x_1, z)  + L \frac{\eps^2}{\alpha^2}\nonumber\\
  &  = f(x_0, \mathrm{argmax}_{z \in \mathbb{R}^d} f(x_0,z))  + L \frac{\eps^2}{\alpha^2}\nonumber\\
  &  \stackrel{\textrm{Eq. \ref{eq_Strong_Convex_RT_5}}} \leq   f(x_0, y_0) + L \times \frac{LD}{\alpha}  + L \frac{\eps^2}{\alpha^2}, \label{eq_Strong_Convex_RT_4}
  \end{align}
since $(x_0 = x_1)$, and where the last inequality holds by \eqref{eq_Strong_Convex_RT_5} since $f$ is $L$-smooth.
But since $\nabla_x f(x^\dagger, y^\dagger) = \nabla_y f(x^\dagger, y^\dagger) =0$, and $f$ is $L$-smooth, 

\begin{align}\label{eq_Strong_Convex_RT_9}
f(x^\dagger, y^\dagger) - f(x_0,y_0) \leq L \|(x^\dagger, y^\dagger) - (x_0,y_0) \| \leq LD.
\end{align}

\noindent
Thus, plugging in \eqref{eq_Strong_Convex_RT_9} into \eqref{eq_Strong_Convex_RT_4}, we get
\begin{align}
\psi(x_i) - \min_{\theta \in \mathbb{R}^d}\psi(\theta) \nonumber
&= \max_{z \in \mathbb{R}^d} f(x_i, z) - \min_{\theta \in \mathbb{R}^d} \max_{z \in \mathbb{R}^d} f(\theta, z)\\
&=\max_{z \in \mathbb{R}^d} f(x_i, z) - f(x^\dagger, y^\dagger)\nonumber\\
&\stackrel{\textrm{Eq. \ref{eq_Strong_Convex_RT_5}}} \leq   LD +  \frac{L^2D}{\alpha}  + L \frac{\eps^2}{\alpha^2}. \label{eq_Strong_Convex_RT_10}
  \end{align}

\noindent
But we have shown that $\psi$ is $\alpha$-strongly convex.
Therefore, \eqref{eq_Strong_Convex_RT_10} implies that
\begin{align}
\|x_i - x^\dagger\| &= \|x_i - \mathrm{argmin}_{\theta \in \mathbb{R}^d}\psi(\theta)\| \nonumber\\
&\leq \sqrt{ \frac{LD +  \frac{L^2D}{\alpha}  + L \frac{\eps^2}{\alpha^2}}{\alpha}}\nonumber\\
&\leq \mathfrak{D}. \label{eq_Strong_Convex_RT_20}
  \end{align}

\noindent
{\bf Bounding the distance $\|\mathsf{y}_{i,j} - y^\dagger\|$.}

\noindent
Now, since $\eta \leq \frac{1}{L}$, we have that $f(x_i,\mathsf{y}_{i,j})$ is nondecreasing at each iteration $j$ of Algorithm \ref{alg:InnerMaxLoop},
\begin{align}\label{eq_Strong_Convex_RT_13}
f(x_i,\mathsf{y}_{i, j+1}) \geq f(x_i,\mathsf{y}_{i, j}) \qquad \forall i\geq 0, j \geq 1.
\end{align}

\noindent
First we consider the case when $i=0$.  
Since $\mathsf{y}_{1, 0} = y_0$, by \eqref{eq_Strong_Convex_RT_5} we have that

\begin{align}\label{eq_Strong_Convex_RT_11}
\|\mathsf{y}_{1, 0} - \mathrm{argmax}_{z\in \mathbb{R}^d} f(x_0, z)\| \leq \frac{LD}{\alpha}.
\end{align}

\noindent
Thus, since $f(x_0, \cdot)$ is $L$-smooth, \eqref{eq_Strong_Convex_RT_11} implies that 
\begin{align*}
\max_{z\in \mathbb{R}^d} f(x_0, z) - f(x_0, \mathsf{y}_{1, 0}) \leq L\left(\frac{\sqrt{LD}}{\alpha}\right)^2.
\end{align*}

\noindent
Thus, by \eqref{eq_Strong_Convex_RT_13} we have that

\begin{align}\label{eq_Strong_Convex_RT_14}
\max_{z\in \mathbb{R}^d} f(x_0, z) - f(x_0,\mathsf{y}_{0, j}) \leq L\left(\frac{\sqrt{LD}}{\alpha}\right)^2  \qquad \forall  j \geq 1.
\end{align}

\noindent
Thus, since $\nabla_y f(x_0,  \mathrm{argmax}_{z\in \mathbb{R}^d} f(x_0, z)) = 0$, and since $f(x_0,\cdot)$ is $\alpha$-strongly concave, we have that
\begin{align}\label{eq_Strong_Convex_RT_15}
\|\mathsf{y}_{1, j} - \mathrm{argmax}_{z\in \mathbb{R}^d} f(x_0, z)\| \leq \sqrt{\frac{L}{\alpha}\left(\frac{\sqrt{LD}}{\alpha}\right)^2} = \frac{L\sqrt{D}}{\alpha\sqrt{\alpha}} \leq \mathfrak{D}  \qquad \forall  j \geq 1.
\end{align}

\noindent
Next, we consider the case when $i\geq1$.
At each $i \geq 1$, we have that $ \|\nabla_y f(x_i, y_i) \|  =\|\nabla_y f(x_i, \mathsf{y}_{i,0}) \| \leq \eps$.  Therefore, since $f(x_i,\cdot)$ is $\alpha$-strongly concave, we have by Lemma \ref{prop_strongly_convex_2} that
\begin{align}\label{eq_Strong_Convex_RT_16}
\|\mathsf{y}_{i,0} -\mathrm{argmax}_{z \in \mathbb{R}^d} f(x, z) \| \leq \frac{\eps}{\alpha}.
\end{align}

\noindent
Thus, since $f(x_i,\cdot)$ is $L$-smooth, \eqref{eq_Strong_Convex_RT_16} implies that
\begin{align*}
\max_{z\in \mathbb{R}^d} f(x_i, z) - f(x_i, \mathsf{y}_{i, 0}) \leq L\left(\frac{\eps}{\alpha}\right)^2.
\end{align*}

\noindent
Thus by \eqref{eq_Strong_Convex_RT_13} we have that
\begin{align*}
\max_{z\in \mathbb{R}^d} f(x_i, z) - f(x_i, \mathsf{y}_{i, j}) \leq L\left(\frac{\eps}{\alpha}\right)^2 \qquad \forall j \geq 0.
\end{align*}

\noindent
Thus, since $\nabla_y f(x_0,  \mathrm{argmax}_{z\in \mathbb{R}^d} f(x_0, z)) = 0$, and since $f(x_0,\cdot)$ is $\alpha$-strongly concave, we have that
\begin{align}\label{eq_Strong_Convex_RT_19}
\|\mathsf{y}_{i, j} - \mathrm{argmax}_{z\in \mathbb{R}^d} f(x_i, z)\| \leq \sqrt{\frac{L}{\alpha}\left(\frac{\eps}{\alpha}\right)^2} = \frac{\eps\sqrt{L}}{\alpha\sqrt{\alpha}} \leq \mathfrak{D}  \qquad \forall  j \geq 1.
\end{align}

\noindent
Therefore, from   \eqref{eq_Strong_Convex_RT_20}, \eqref{eq_Strong_Convex_RT_14}, and \eqref{eq_Strong_Convex_RT_19}, and since $y_i = \mathsf{y}_{i,0}$ for every $i$,  we have that  $\|(x_i, y_i)  - (x^\dagger, y^\dagger) \| \leq \mathfrak{D}$ and  $\|(x_i, \mathsf{y}_{i,j})  - (x^\dagger, y^\dagger) \| \leq \mathfrak{D}$ for every $i \geq 0$ and every $j \geq 0$

\end{proof}

\begin{corollary}\label{cor_runtime_strongly_convex}
Suppose that $f:\mathbb{R}^d \times \mathbb{R}^d \rightarrow \mathbb{R}$ is such that $f(\cdot,y)$ is $\alpha$-strongly convex for every $y \in \mathbb{R}^d$ and $f(x, \cdot)$ is $\alpha$-strongly concave for  every $x \in \mathbb{R}^d$, and that $f$ has $L$-Lipschitz gradients for some $L\geq \alpha>0$.
And suppose that the proposal distribution  $Q_{x,y}$ of Algorithm  \ref{alg:Informal} are the deterministic gradients $\Delta = -\frac{1}{2L}  \nabla_x f(x,y)$ for $\Delta \sim Q_{x,y}$, and that Algorithm \ref{alg:InnerMaxLoop} takes as input deterministic gradients  $\nabla_y f$.
Then, given any $\eps'>0$ and any initial point $(x_0, y_0) \in \mathbb{R}^d \times \mathbb{R}^d$, Algorithm  \ref{alg:Informal}, with appropriate parameters, outputs a point $(x^\star, y^\star)$ which is 
an approximate global min-max point of $f$ with duality gap $$\max_{y \in \mathbb{R}^d} f(x^\star, y) -    \min_{x \in \mathbb{R}^d} f(x, y^\star) \leq \eps'$$ in $\mathrm{poly}\left(L, \frac{1}{\alpha}, \frac{1}{\eps'}, D\right)$ gradient and function evaluations, where $D= \|(x_0, y_0) - (x^\dagger, y^\dagger)\|$ is the distance from the initial point to the (exact) global min-max point $(x^\dagger, y^\dagger)$ of $f$.
\end{corollary}

\begin{proof}

Set the parameters 
$$\eps = \frac{1}{10} \min\left(\frac{\alpha}{\sqrt{L}} \sqrt{\eps'}, \, \,  \, \frac{\alpha^4}{L^5} \eps'\right),  \delta = \frac{1}{10} \frac{\alpha^2}{L^3} \eps', \text{ and }  \omega = \frac{1}{4}.$$
Define $$\mathfrak{D}: = 2\max\left(D + \frac{LD}{\alpha} + \frac{\eps}{\alpha},\, \,  \,   \sqrt{ \frac{LD +  \frac{L^2D}{\alpha}  + L \frac{\eps^2}{\alpha^2}}{\alpha}},\, \,  \,  \frac{L\sqrt{D}}{\alpha\sqrt{\alpha}}, \, \,  \,  \frac{\eps\sqrt{L}}{\alpha\sqrt{\alpha}}\right).$$
Define $b := 4L \mathfrak{D}^2$, and $L_1:= 2 L \mathfrak{D}$.
Set hyperparameters $\tau_1 = \infty$ (so that the rejection probability in line \ref{AcceptRejectStep} of Algorithm \ref{alg:Informal} is $1$).
Set the remaining hyperparameters as in Items 1-8 in Section \ref{sec:proofs} with the parameter ``$L$" in Items 1-8 replaced by $\min(L, \frac{L_1^2}{2b})$. 
Since $\eta \leq \frac{1}{10L}$, by Lemma  \ref{lemma_bounding_ball} we have that  every step $i$ of  Algorithm \ref{alg:Informal} and every step $j$ of its subroutine  Algorithm \ref{alg:InnerMaxLoop} satisfy
$$\|(x_i, y_i)  - (x^\dagger, y^\dagger) \| \leq \mathfrak{D} \text{ and } \|(x_i, \mathsf{y}_{i,j})  - (x^\dagger, y^\dagger) \| \leq \mathfrak{D}$$ for all $i\geq 0$ and all $j \geq 0$.
Thus, Algorithm \ref{alg:Informal} and its subroutine  Algorithm \ref{alg:InnerMaxLoop} remain inside the ball $B((x^\dagger, y^\dagger), \mathfrak{D})$ of radius $\mathfrak{D}$ with center at the global min-max point $(x^\dagger, y^\dagger)$ of $f$.
Since  $(x^\dagger, y^\dagger)$ is the global min-max point of $f$, we have that $$\nabla_x f(x^\dagger, y^\dagger) =\nabla_y f(x^\dagger, y^\dagger)= 0.$$
Without loss of generality we may assume that $f(x^\dagger, y^\dagger)=0$ (we can assume this since each step of the algorithm remains the same if we add a constant to $f$).
Thus, since $f$ is $L$-smooth on all of $\mathbb{R}^d \times \mathbb{R}^d$, we have that 
\begin{align*} 
|f(x,y)| \leq L\times 4 \mathfrak{D}^2 \qquad  \forall (x,y) \in B((x^\dagger, y^\dagger),  2\mathfrak{D}), \text{ and }
\end{align*}
\begin{align*}
\|(\nabla_x f(x,y), \nabla_y f(x,y))\| \leq L\times 2\mathfrak{D} \qquad  \forall (x,y) \in B((x^\dagger, y^\dagger),  2\mathfrak{D}).
\end{align*}

\noindent
Since $f$ is $b$-bounded with $L_1$-Lipschitz gradient on the ball $B((x^\dagger, y^\dagger),  2\mathfrak{D})$, and since every step of the algorithm remains inside the ball $B((x^\dagger, y^\dagger),  \mathfrak{D}) \subseteq B((x^\dagger, y^\dagger),  2\mathfrak{D})$, each step of the proof of Theorem \ref{thm:GreedyMinimax-main} holds if we replace the parameter ``L" in that proof with $\min\left(L, \frac{L_1^2}{2b}\right)$ (since the parameter ``$L$" in the proof of Theorem \ref{thm:GreedyMinimax-main} is required to be $\leq \frac{L_1^2}{4b}$, and setting that parameter ``L" to be $\min\left(L, \frac{L_1^2}{2b}\right)$ ensures that this assumption holds).

Therefore, the conclusion of Theorem \ref{thm:GreedyMinimax-main} must also hold and we have that Algorithm \ref{alg:Informal} returns a point
  $(x^\star,y^\star) \in \mathbb{R}^{d} \times \mathbb{R}^{d}$
  such that, for some  $\epsilon^\star \in\left[ \frac{1}{2}\epsilon, \epsilon\right]$,  $(x^\star, y^\star)$ is an $(\epsilon^\star, \delta, \omega,Q)$-equilibrium.
  The number of gradient and function evaluations required by the algorithm is $$\mathrm{poly}\left(b,\min\left(L, \frac{L_1^2}{2b}\right), \frac{1}{\eps}, \frac{1}{\delta}, \frac{1}{4}\right)$$ and does not depend on the dimension $d$.
Note that, since we assume the gradients and proposal distribution are deterministic, each step of the algorithm is also deterministic, and the conclusion must hold with probability $1$.
But $$\frac{1}{\eps}, b, \frac{1}{\delta} = \mathrm{poly}\left(L, \frac{1}{\alpha}, \frac{1}{\eps}, D\right) \text{ and } \min\left(L, \frac{L_1^2}{2b}\right) = \mathrm{poly}\left(L, \frac{1}{\alpha}, \frac{1}{\eps}, D\right).$$
Therefore, the number of gradient and function evaluations is also  $\mathrm{poly}(L, \frac{1}{\alpha}, \frac{1}{\eps}, D)$.
We have now shown that Algorithm \ref{alg:Informal} returns a point $(x^\star, y^\star)$ which is an $(\epsilon^\star, \delta, \omega,Q)$-equilibrium for $f$ where $\epsilon^\star \in\left[ \frac{1}{2}\epsilon, \epsilon\right]$  (and in particular,  $\eps^\star, \delta = \mathrm{poly}\left(\eps', \alpha, \frac{1}{L}\right)$).
Therefore, by Theorem \ref{thm_strongly_convex}, we have that since $f: \mathbb{R}^d \times \mathbb{R}^d \rightarrow \mathbb{R}$ is $\alpha$-strongly convex in $x$ and $\alpha$-strongly concave in $y$, with $L$-Lipschitz gradient in both variables, the point $(x^\star, y^\star)$, which is an $(\epsilon^\star, \delta, \omega,Q)$-equilibrium, also satisfies the duality gap

\begin{align*}
\max_{y \in \mathbb{R}^d} f(x^\star, y) &-    \min_{x \in \mathbb{R}^d} f(x, y^\star)\\ &\leq \frac{L(\eps^\star)^2}{2\alpha^2} + 
    \frac{L^3}{\alpha^2}
   \left(\sqrt{\delta + L \left(2\frac{\eps^\star}{\alpha} +  \frac{1}{\alpha}\left(L\frac{\eps^\star}{\alpha} + 2 \sigma\right)\right) + L \frac{(\eps^\star)^2}{2 \alpha^2} +L \frac{\eps^\star}{\alpha}} +  L \frac{\eps^\star}{\alpha}\right)^2 \nonumber\\
   &\leq \eps'.
\end{align*}

\end{proof}

\section{Examples of Functions where Global Min-Max Satisfies Definition~\ref{def_greedy_minmax} but not Other Local Equilibrium Notions} \label{sec:examples}

In this section, we expand upon the examples mentioned in 
Section~\ref{sec_theoretical_results}.
In particular, we provides example functions for which there exists min-max points that satisfy Definition~\ref{def_greedy_minmax} but which do not satisfy other common notions of local equilibrium.

\paragraph{Functions for which global min-max points are not first-order stationary points.}

Consider the function
$$f(x,y) = \sin(x) \times \sin(y) - \sum_{m,n \in \mathbb{Z}}  \mathrm{Bump}(x+ m \pi,y + n \pi),$$  where $$\mathrm{Bump}(x,y) = e^{-1/(1-100(x^2+y^2))} \text{ for } x^2 +y^2 < \frac{1}{100} \text{ and } \mathrm{Bump}(x,y)  =0 \text{ everywhere else}.$$  
This function has a global min-max point at $(x,y) = (0,1)$ and this point also satisfies Definition~\ref{def_greedy_minmax} (and $f$ also has such a point at all points along the line $x=0$ except for the intervals $(-\frac{1}{10}+n \pi, \frac{1}{10}+n \pi)$ for intergers $n$), and yet $$\nabla_x f(0,1) = \cos(0)\times\sin(1) = 0.84$$ meaning that $(x,y) = (0,1)$ is not a first-order stationary point for $x$.  
In fact, every global min-max point of this function is not a first-order stationary point in $x$. 

\paragraph{Functions for which global min-max points are not second-order equilibrium points.}
For functions $$f(x,y)= \sin(x+y) \text{ and } $$ $$f(x,y)= 10^3\cdot\sum_{k\in \mathbb{Z}} e^{-(x+y + 2 +9k)^2} + 2e^{-(x+y + 2 +9k)^2}- e^{-(x+6k)^2},$$ there are no $\varepsilon$-approximate local min/max points for $\varepsilon<\frac{1}{2}$, and yet, an equilibrium point from Definition~\ref{def_greedy_minmax} is guaranteed to exist for such functions. 
Note that these functions are indeed smooth and bounded.

\section{Comparison of Local Equilibrium Point and Local Min-Max Point} \label{sec_local}

\begin{lemma} \label{lemma_local}
Suppose that $(x^\star, y^\star)$ is such that $y^\star$ is a local maximum point of $f(x^\star, \cdot)$ and $x^\star$ is a local minimum point of $f(\cdot, y^\star)$.
Then $(x^\star, y^\star)$ is also a local equilibrium of $f$.
\end{lemma}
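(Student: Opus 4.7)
The plan is to verify the two defining conditions of a local equilibrium, \eqref{eq_our_local_frameowrk_y} and \eqref{eq_our_local_frameowrk_x}, in that order. The $y$-condition is the substantive step; the $x$-condition will then follow almost automatically from the local-minimum hypothesis on $f(\cdot,y^\star)$ together with the $y$-condition.

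First I would establish \eqref{eq_our_local_frameowrk_y}, namely that $\max_{y\in P(x^\star,y^\star)} f(x^\star,y)=f(x^\star,y^\star)$. The lower bound is immediate from the constant path $\gamma\equiv y^\star$, which is continuous, unit-speed, and (trivially) nondecreasing in $f(x^\star,\cdot)$, so $y^\star\in P(x^\star,y^\star)$. For the upper bound, take any $w\in P(x^\star,y^\star)$ with a witnessing continuous unit-speed path $\gamma:[0,T]\to\mathcal{Y}$ from $y^\star$ to $w$ along which $f(x^\star,\cdot)$ is nondecreasing. Let $U$ be a neighborhood of $y^\star$ with $f(x^\star,y)\leq f(x^\star,y^\star)$ for all $y\in U$, provided by the local-maximum hypothesis. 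Combining nondecreasingness $f(x^\star,\gamma(t))\geq f(x^\star,y^\star)$ with the local-max bound on $U$ forces $f(x^\star,\gamma(t))=f(x^\star,y^\star)$ whenever $\gamma(t)\in U$, so the path cannot strictly increase $f(x^\star,\cdot)$ while it lies in $U$. A first-exit-time argument, together with continuity of $f$ and the freedom to shrink $U$, then pins the path to the level set $\{f(x^\star,\cdot)=f(x^\star,y^\star)\}$ throughout $[0,T]$, yielding $f(x^\star,w)=f(x^\star,y^\star)$.

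Next I would pick $\nu>0$ small enough that $f(x,y^\star)\geq f(x^\star,y^\star)$ for all $x\in B(x^\star,\nu)\cap\mathcal{X}$; such $\nu$ exists because $x^\star$ is a local minimum of $f(\cdot,y^\star)$. For any such $x$, the constant path at $y^\star$ witnesses $y^\star\in P(x,y^\star)$, so
\[
\max_{y\in P(x,y^\star)} f(x,y)\;\geq\;f(x,y^\star)\;\geq\;f(x^\star,y^\star)\;=\;\max_{y\in P(x^\star,y^\star)} f(x^\star,y),
\]
where the final equality is the $y$-condition already established. This is exactly \eqref{eq_our_local_frameowrk_x}, so $(x^\star,y^\star)$ is a local equilibrium with parameter $\nu$.

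The main obstacle is the first-exit-time step inside the proof of \eqref{eq_our_local_frameowrk_y}: formally ruling out that a nondecreasing path can exit the local-max neighborhood $U$ and then climb strictly above $f(x^\star,y^\star)$. Under a strict local-max reading of the hypothesis this is trivial, because the only nondecreasing path out of $y^\star$ is the constant path and hence $P(x^\star,y^\star)=\{y^\star\}$; in the general (non-strict) case one must argue, using continuity of $f$ and the openness of the local-max region, that along its entire domain the path remains on the connected component of $y^\star$ in the level set $\{f(x^\star,\cdot)=f(x^\star,y^\star)\}$.
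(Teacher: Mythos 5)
Your approach mirrors the paper's: establish the max-player condition \eqref{eq_our_local_frameowrk_y} first, then combine it with the local-minimum hypothesis on $f(\cdot,y^\star)$ and the constant path to obtain \eqref{eq_our_local_frameowrk_x}. Your second paragraph is exactly the paper's argument for the $x$-condition, and your observation that under a strict local-max reading the only nondecreasing path out of $y^\star$ is the constant one, so $P(x^\star,y^\star)=\{y^\star\}$, is precisely what the paper asserts to prove the $y$-condition.

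The difficulty lies in the step you already flag as ``the main obstacle'': your level-set argument for the non-strict case does not go through, and in fact it cannot. A nondecreasing path started at a non-strict local maximum can leave the level set and climb. For instance, take $f(x^\star,\cdot)$ one-dimensional, smooth, equal to $0$ on $(-\infty,1]$ and to $\int_1^y\phi(s)\,\mathrm{d}s$ for $y>1$, with $\phi\ge0$ a bump supported on $[1,2]$. Then $f$ is bounded with Lipschitz gradient, $y^\star=0$ is a (non-strict) local maximum of $f(x^\star,\cdot)$, but the unit-speed path $\gamma(t)=t$ on $[0,3]$ satisfies $\frac{\mathrm{d}}{\mathrm{d}t}f(x^\star,\gamma(t))\ge0$ throughout and ends at $w=3$ with $f(x^\star,w)>f(x^\star,y^\star)$. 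Hence $w\in P(x^\star,y^\star)$ while $y^\star\notin\mathrm{argmax}_{y\in P(x^\star,y^\star)}f(x^\star,y)$, so \eqref{eq_our_local_frameowrk_y} fails; the path does not stay on the connected component of $y^\star$ in the level set, contrary to what you conjecture. Under the ordinary (non-strict) reading of ``local maximum'' the lemma is therefore false at $\eps=0$, which is exactly the value implicit in the local-equilibrium definition (since $P=P_0$). The paper's own step, ``$f$ must increase at rate at least $\eps$ so the only $\eps$-greedy path is $\{y^\star\}$,'' shares this implicit strictness: it forces $P_\eps(x^\star,y^\star)=\{y^\star\}$ only when $\eps>0$ or when the maximum is strict. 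The right fix is the one you already identify as trivial: read the hypothesis as a strict local maximum (so $P(x^\star,y^\star)=\{y^\star\}$ immediately), rather than attempting to repair the first-exit-time argument.
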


\begin{proof}

Fix any $\eps \geq 0$ (the proof of this Lemma requires only $\eps = 0$, but we state the proof for any $\eps \geq 0$ since this will allow us to prove Corollary \ref{cor_local}).
Since $y^\star$ is a local maximum of $f(x^\star, \cdot)$, there is only one $\eps$-greedy path with initial point $y^\star$, namely, the path  $\{y^\star\}$ consisting of the single point $y^\star$ (since $f$ must increase at rate at least $\eps$ at every point on an $\eps$-greedy path).
Thus,
\begin{equation}\label{eq_local_1}
P_\eps(x^\star, y^\star) = \{y^\star\}
\end{equation}

\noindent
Hence, \eqref{eq_local_1} implies that
\begin{equation}\label{eq_local_2}
y^\star \in \mathrm{argmax}_{y \in P_\eps(x^\star, y^\star)} f(x^\star,y)
\end{equation}
which proves Equation \eqref{eq_our_local_frameowrk_y}.
Next, we will show that Equation \eqref{eq_our_local_frameowrk_x} holds.
Since $x^\star$ is a local minimum point of $f(\cdot, y^\star)$, there exists $\nu >0$ such that
\begin{equation} \label{eq_local_3}
    f(z,y^\star) \geq f(x^\star, y^\star) \qquad \qquad \forall z \in B(x^\star, \nu)
\end{equation}

\noindent
Since $y^\star \in P_\eps(x, y^\star)$ for all $x \in \mathcal{X}$, we have that
\begin{equation} \label{eq_local_4}
    \max_{y \in P_\eps(x, y^\star)} f(x,y) \geq f(x,y^\star) \qquad \forall x \in \mathcal{X},
\end{equation}
and hence that
\be
\min_{x \in B(x^\star, \nu) \cap \mathcal{X}} \max_{y \in P_\eps(x, y^\star)} f(x,y) &\stackrel{\textrm{Eq. } \ref{eq_local_4}} \geq   \min_{x \in B(x^\star, \nu)} f(x, y^\star)\\
& \stackrel{\textrm{Eq. } \ref{eq_local_3}} =   f(x^\star, y^\star)\\
& \stackrel{\textrm{Eq. } \ref{eq_local_2}} =   \max_{y \in P_\eps(x^\star, y^\star)} f(x^\star,y),
\ee
which proves Equation \eqref{eq_our_local_frameowrk_x}.

\end{proof}

\begin{corollary} \label{cor_local}
Suppose that $(x^\star, y^\star)$ is such that $y^\star$ is a local maximum point of $f(x^\star, \cdot)$ and $x^\star$ is a local minimum point of $f(\cdot, y^\star)$.
Then there exists $\nu>0$ such that, for any $\eps, \delta \geq 0$,
and any proposal distribution $Q$ with support on $\mathcal{X}$ which satisfies
\be\label{eq_local_6}
\Pr_{\Delta \sim Q_{x^\star,y^\star}}(\|\Delta\| \geq \nu) < \omega,
\ee
for some $\omega >0$,
$(x^\star, y^\star)$ is also an approximate local equilibrium of $f$ for parameters $(\eps, \delta, \omega)$ and proposal distribution $Q$.
\end{corollary}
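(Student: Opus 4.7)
The plan is to obtain this corollary as a direct consequence of Lemma~\ref{lemma_local}, using only the additional hypothesis \eqref{eq_local_6} on the tail of the proposal distribution. Concretely, Lemma~\ref{lemma_local} already supplies a radius $\nu > 0$ and establishes both \eqref{eq_our_local_frameowrk_x} and \eqref{eq_our_local_frameowrk_y} for every $\epsilon \ge 0$; the corollary's approximate equilibrium conditions of Definition~\ref{def_greedy_minmax} should follow by routing the ``exact'' inequalities from Lemma~\ref{lemma_local} through the probability bound on $\|\Delta\|$.

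First I would dispense with \eqref{eq_approx_local_equilibrium_y}. The proof of Lemma~\ref{lemma_local} shows that, for any $\epsilon \ge 0$, $P_\epsilon(x^\star, y^\star) = \{y^\star\}$ because $y^\star$ is a local maximum of $f(x^\star, \cdot)$ and hence no nontrivial $\epsilon$-increasing path can leave $y^\star$. Consequently $y^\star$ is trivially the argmax of $f(x^\star, \cdot)$ over $P_\epsilon(x^\star, y^\star)$, which is exactly \eqref{eq_approx_local_equilibrium_y}.

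Next, for \eqref{eq_approx_local_equilibrium_x}, I would let $\nu > 0$ be the radius supplied by Lemma~\ref{lemma_local} (applied with the same $\epsilon$), and analyze the event
\[
E := \left\{ \max_{y \in P_\epsilon(x^\star+\Delta, y^\star)} f(x^\star+\Delta, y) \; < \; \max_{y \in P_\epsilon(x^\star, y^\star)} f(x^\star, y) - \delta \right\}.
\]
I claim $E \subseteq \{\|\Delta\| \ge \nu\}$. Indeed, on the complementary event $\|\Delta\| < \nu$, the hypothesis that $Q_{x^\star,y^\star}$ is supported on (translations into) $\mathcal{X}$ gives $x^\star + \Delta \in B(x^\star,\nu) \cap \mathcal{X}$, and then Lemma~\ref{lemma_local}'s conclusion \eqref{eq_our_local_frameowrk_x} yields
\[
\max_{y \in P_\epsilon(x^\star+\Delta, y^\star)} f(x^\star+\Delta, y) \; \ge \; \max_{y \in P_\epsilon(x^\star, y^\star)} f(x^\star, y) \; \ge \; \max_{y \in P_\epsilon(x^\star, y^\star)} f(x^\star, y) - \delta,
\]
since $\delta \ge 0$; thus $E$ cannot occur. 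Combining this containment with hypothesis \eqref{eq_local_6} gives $\Pr_{\Delta \sim Q_{x^\star,y^\star}}(E) \le \Pr(\|\Delta\| \ge \nu) < \omega$, which is \eqref{eq_approx_local_equilibrium_x}.

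There is no substantial obstacle here; the only point requiring care is the bookkeeping of what ``support on $\mathcal{X}$'' means in the statement, so that $x^\star + \Delta$ lies in $\mathcal{X}$ whenever $\|\Delta\| < \nu$ and Lemma~\ref{lemma_local}'s local inequality is applicable. Everything else is a direct transcription of the exact local equilibrium conditions of Lemma~\ref{lemma_local} into the approximate/probabilistic form of Definition~\ref{def_greedy_minmax}, with the tail bound on $Q_{x^\star,y^\star}$ absorbing the failure probability.
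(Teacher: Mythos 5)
Your proof is correct and follows essentially the same route as the paper: both extract the radius $\nu$ and the key chain of inequalities from Lemma~\ref{lemma_local} (in particular its conclusion \eqref{eq_local_5}, which holds uniformly in $\eps$), establish \eqref{eq_approx_local_equilibrium_y} from $P_\eps(x^\star,y^\star)=\{y^\star\}$, and bound the failure probability in \eqref{eq_approx_local_equilibrium_x} by showing the bad event is contained in $\{\|\Delta\|\ge\nu\}$, then invoking the tail hypothesis \eqref{eq_local_6}. Your explicit attention to the ``support on $\mathcal{X}$'' assumption (so that $\|\Delta\|<\nu$ implies $x^\star+\Delta\in B(x^\star,\nu)\cap\mathcal{X}$) is exactly the implicit step the paper uses when it identifies $\{x^\star+\Delta\notin B(x^\star,\nu)\cap\mathcal{X}\}$ with $\{\|\Delta\|\ge\nu\}$.
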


\noindent
We note that many distributions satisfy \eqref{eq_local_6}, for instance the distribution $Q_{x,y} \sim N(0,\sigma^2 I_d)$ for $\sigma = O(\nu \log^{-1}(\frac{1}{\omega}))$.

\begin{proof}
By Inequality \eqref{eq_local_6} in the proof of Lemma \ref{lemma_local}, there exists $\nu > 0$ such that
\be \label{eq_local_7}
\min_{x \in B(x^\star, \nu)\cap \mathcal{X}} \max_{y \in P_\eps(x, y^\star)} f(x,y) \geq \max_{y \in P_\eps(x^\star, y^\star)} f(x^\star,y),
\ee

\noindent
Thus, for any proposal distribution $Q$ which satisfies Inequality \eqref{eq_local_6}, Inequality \eqref{eq_local_7} implies that, for any $\delta \geq 0$,
\be
\Pr_{\Delta \sim {Q}_{x^\star, y^\star}} 
    \bigg[\max_{y \in P_\eps(x^\star +
    \Delta,y^\ast)} &f(x^\star +
    \Delta,y)   < \max_{y \in P_{\eps}(x^\star,y^\ast)} f(x^\star,y) - \delta \bigg]\\
&\ \stackrel{\textrm{Eq. } \ref{eq_local_7}} \leq   \Pr_{\Delta \sim {Q}_{x^\star, y^\star}} 
   \left[ x^\star + \Delta  \notin B(x^\star, \nu) \cap \mathcal{X} \right]\\
&=\Pr_{\Delta \sim Q_{x^\star,y^\star}}(\|\Delta\| \geq \nu)\\
    & \stackrel{\textrm{Eq. } \ref{eq_local_6}} <   \omega, 
\ee

\noindent
This proves Inequality \eqref{eq_approx_local_equilibrium_x}.
Inequality \eqref{eq_approx_local_equilibrium_y} follows directly from Inequality \eqref{eq_local_2} in the proof of Lemma \ref{lemma_local}.

\end{proof}

\begin{figure}[t]
    \centering
    \includegraphics[width=\linewidth]{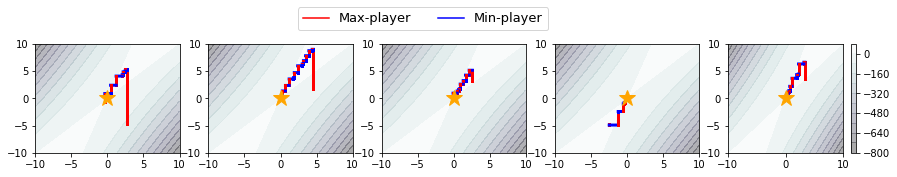}
    \caption{Different runs of our algorithm over function $F_1$ for random starting points.}
    \label{fig:f1_random_start}
\end{figure}

\begin{figure}[t]
    \centering
    \includegraphics[width=\linewidth]{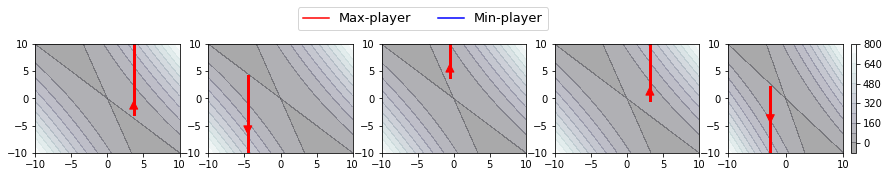}
    \caption{Different runs of our algorithm over function $F_2$ for random starting points.}
    \label{fig:f2_random_start}
\end{figure}

\begin{figure}[t]
    \centering
    \includegraphics[width=\linewidth]{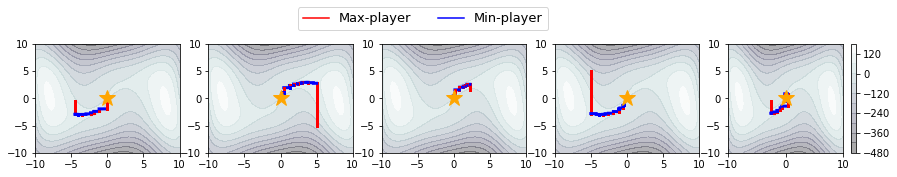}
    \caption{Different runs of our algorithm over function $F_3$ for random starting points.}
    \label{fig:f3_random_start}
\end{figure}

\section{Additional Empirical Details and Results for Test Functions and Gaussian Mixture Dataset}

\subsection{Simulation Setup for Low-dimensional Test Functions} \label{toy_simulation_setup}

In this section we describe the setup for the simulations on the low-dimensional test functions presented in Figures \ref{fig_intro} and \ref{figure_toy_function}.
For our algorithm, we use a learning rate of 
$\eta = 0.05$ for the max-player, and a proposal distribution of
$Q_{x,y} \sim N(0, 0.25)$ for the min-player.
For GDA and OMD we use a learning rate of $0.05$ for both the min-player and the max-player.
When generating Figures \ref{fig_intro} and \ref{figure_toy_function} we used the initial point $(x_0, y_0) = (5.5, 5.5)$ for all three algorithms.

\subsection{Additional Simulation Results for Low-dimensional Test Functions} \label{toy_simulation_random_start}

{We also run our algorithm for toy functions $F_1, F_2, F_3$ (defined in Section~\ref{sec:experiments}) on random initial points.
The results are present in Figures~\ref{fig:f1_random_start}, \ref{fig:f2_random_start}, \ref{fig:f3_random_start} for functions $F_1, F_2, F_3$, respectively.
For all starting points, our algorithm converges to global min-max point $(0,0)$ for functions $F_1, F_3$, and diverges to $\infty$ for function $F_2$.
}

\subsection{Simulation Setup for Gaussian Mixture Dataset} \label{appendix_hyperparameters}

In this section we discuss the neural network architectures, choice of hyperparameters, and hardware used for the Gaussian mixture dataset

\noindent \paragraph{Hyperparameters for Gaussian mixture simulations.}
 For the simulations on Gaussian mixture data, we have used the code provided by the authors of \cite{Metz2017unrolled} (\url{github.com/poolio/unrolled_gan}), which uses a batch size 512, Adam learning rates of $10^{-3}$ for the generator and $10^{-4}$ for the discriminator, and Adam parameter $\beta_1 = 0.5$ for both the generator and discriminator.\footnote{Note that the authors also mention using slightly different ADAM parameters and neural network architecture in their paper than in their code; we have used the Adam parameters and neural network architecture  provided in their code.}
      We use the same neural networks that were used in the code from \cite{Metz2017unrolled}: The generator uses a fully connected neural network with 2 hidden layers of size 128 and RELU activation, followed by a linear projection to two dimensions. 
       The discriminator uses a fully connected neural network with 2 hidden layers of size 128 and RELU activation, followed by a linear projection to 1 dimension (which is  fed as input to the cross entropy loss function).  
      As in the paper \cite{Metz2017unrolled}, we initialize all the neural network weights to be orthogonal with scaling 0.8.  
      
For OMD, we once again use Wasserstein loss and clip parameter 0.01 (\url{github.com/vsyrgkanis/optimistic_GAN_training/}).

\noindent \paragraph{Setting hyperparameters.}
In our simulations, our goal was to be able to use the smallest number of discriminator or unrolled steps while still learning the distribution in a short amount of time, and we therefore decided to compare all algorithms using the same hyperparameter $k$. To choose this single value of $k$, we started by running each algorithm with $k=1$ and increased the number of discriminator steps until one of the algorithms was able to learn the distribution consistently in the first 1500 iterations.  

The experiments were performed on four 3.0 GHz Intel Scalable CPU Processors, provided by AWS.

\subsection{Additional Simulation Results for Gaussian Mixture Dataset}
\label{sec:More_simulation_results}

\label{sec:More_simulations_GaussianMixture}

In this section we show the results of all the runs of the simulation mentioned in Figure \ref{fig:4_Gaussians}, where all the algorithms were trained on a 4-Gaussian mixture dataset for 1500 iterations.  For each run, we plot points from the generated distribution at iteration 1,500.  Figure \ref{fig:4_Gaussians_AllRuns_VanillaOneDisc} gives the results for GDA with $k=1$ discriminator step. Figure \ref{fig:4_Gaussians_AllRuns_VanillaSixDisc} gives the results for GDA with $k=6$ discriminator steps.
Figure \ref{fig:4_Gaussians_AllRuns_Unrolled} gives the results for the Unrolled GANs algorithm. 
 Figure \ref{fig:omd_gaussian_all} gives the results for the OMD algorithm.
Figure \ref{fig:4_Gaussians_AllRuns_OurAlgorithm} gives the results for our algorithm.

\begin{figure*}
    \centering
    \begin{center}
\textbf{\small GDA with 1 discriminator step} \\
\end{center}
    \includegraphics[width=\linewidth]{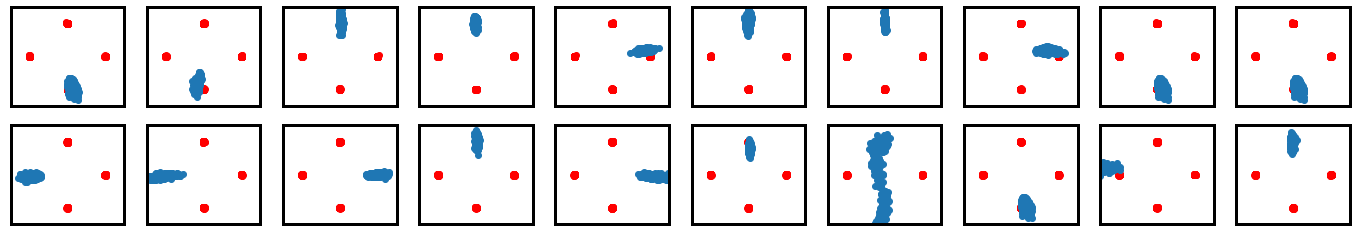}
    \caption{The generated points at the 1500'th iteration for all runs of GDA with $k=1$ discriminator steps.}
    \label{fig:4_Gaussians_AllRuns_VanillaOneDisc}
\end{figure*}

\begin{figure*}
\begin{center}
\textbf{\small GDA with 6 discriminator steps} \\
\end{center}
\noindent \fbox{\includegraphics[trim={2cm 2cm 2cm 2cm},clip, scale=0.12]{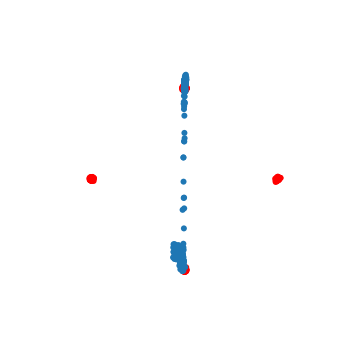}}
\noindent \fbox{\includegraphics[trim={2cm 2cm 2cm 2cm},clip, scale=0.12]{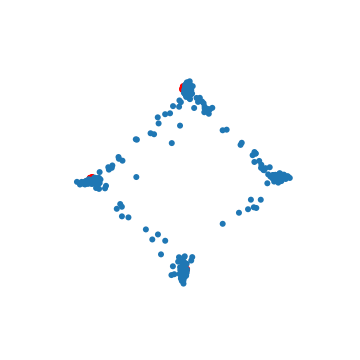}}
\noindent \fbox{\includegraphics[trim={2cm 2cm 2cm 2cm},clip, scale=0.12]{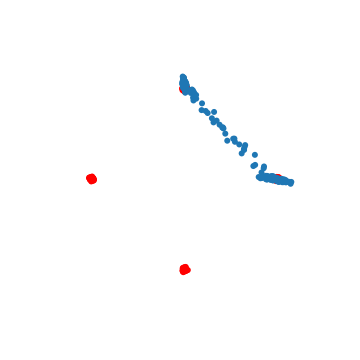}}
\noindent \fbox{\includegraphics[trim={2cm 2cm 2cm 2cm},clip, scale=0.12]{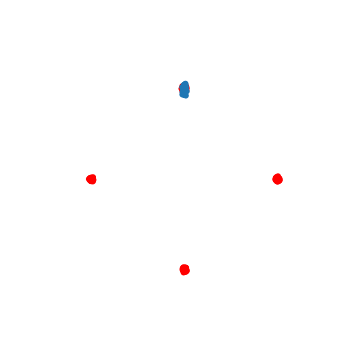}}
\noindent \fbox{\includegraphics[trim={2cm 2cm 2cm 2cm},clip, scale=0.12]{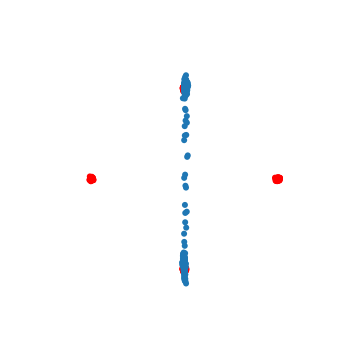}}
\noindent \fbox{\includegraphics[trim={2cm 2cm 2cm 2cm},clip, scale=0.12]{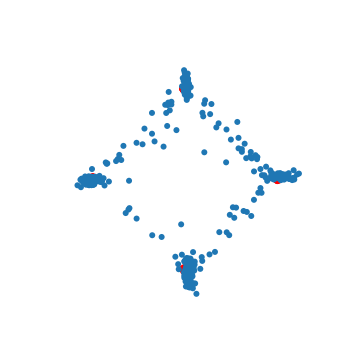}}
\noindent \fbox{\includegraphics[trim={2cm 2cm 2cm 2cm},clip, scale=0.12]{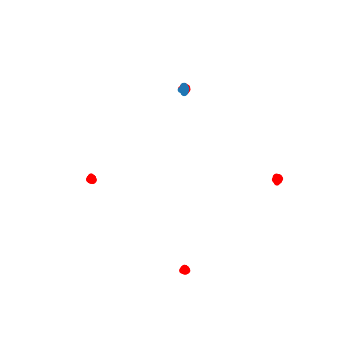}}
\noindent \fbox{\includegraphics[trim={2cm 2cm 2cm 2cm},clip, scale=0.12]{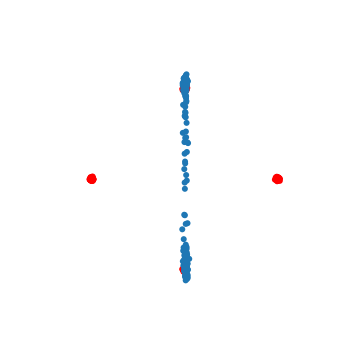}}
\noindent \fbox{\includegraphics[trim={2cm 2cm 2cm 2cm},clip, scale=0.12]{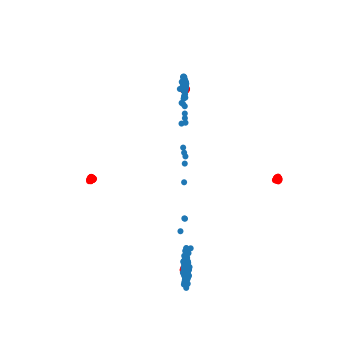}}
\noindent \fbox{\includegraphics[trim={2cm 2cm 2cm 2cm},clip, scale=0.12]{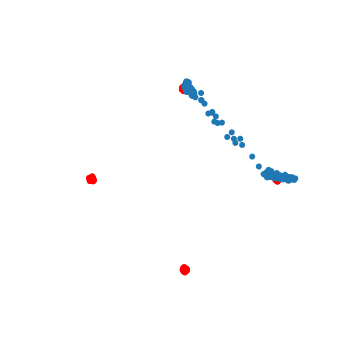}}
\noindent \fbox{\includegraphics[trim={2cm 2cm 2cm 2cm},clip, scale=0.12]{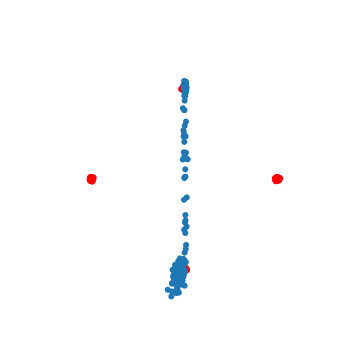}}
\noindent \fbox{\includegraphics[trim={2cm 2cm 2cm 2cm},clip, scale=0.12]{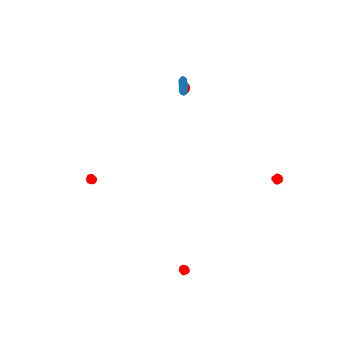}}
\noindent \fbox{\includegraphics[trim={2cm 2cm 2cm 2cm},clip, scale=0.12]{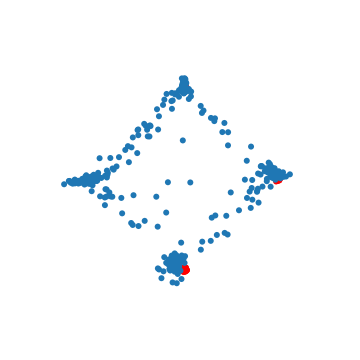}}
\noindent \fbox{\includegraphics[trim={2cm 2cm 2cm 2cm},clip, scale=0.12]{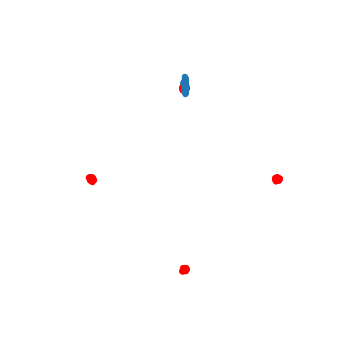}}
\noindent \fbox{\includegraphics[trim={2cm 2cm 2cm 2cm},clip, scale=0.12]{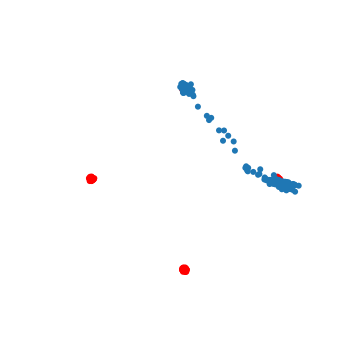}}
\noindent \fbox{\includegraphics[trim={2cm 2cm 2cm 2cm},clip, scale=0.12]{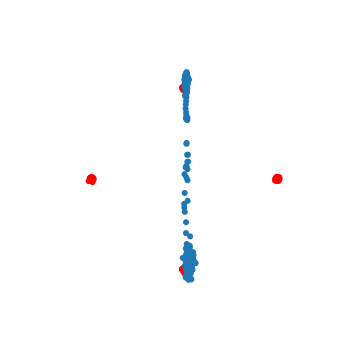}}
\noindent \fbox{\includegraphics[trim={2cm 2cm 2cm 2cm},clip, scale=0.12]{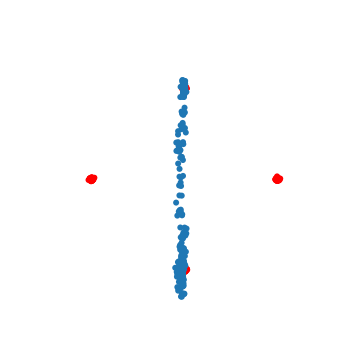}}
\noindent \fbox{\includegraphics[trim={2cm 2cm 2cm 2cm},clip, scale=0.12]{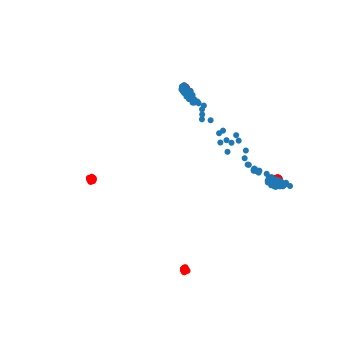}}
\noindent \fbox{\includegraphics[trim={2cm 2cm 2cm 2cm},clip, scale=0.12]{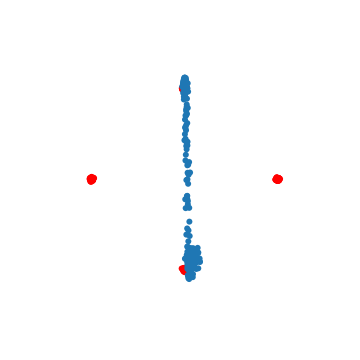}}
\noindent \fbox{\includegraphics[trim={2cm 2cm 2cm 2cm},clip, scale=0.12]{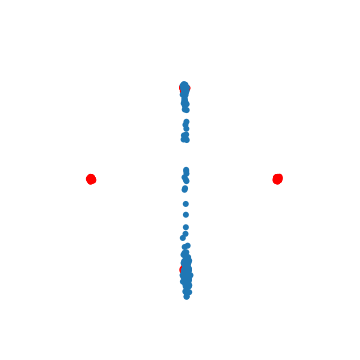}}
\caption{\small The generated points at the 1500'th iteration for all runs of the GDA algorithm, with $k=6$ discriminator steps, for the simulation mentioned in Figure \ref{fig:4_Gaussians}.  At the 1500'th iteration, GDA had learned two modes 65\% of the runs, one mode 20\% of the runs, and four modes  15 \% of the runs.}\label{fig:4_Gaussians_AllRuns_VanillaSixDisc}
\end{figure*}

\begin{figure*}
\begin{center}
\textbf{\small Unrolled GANs with 6 unrolling steps} \\
\end{center}
\noindent \fbox{\includegraphics[trim={2cm 2cm 2cm 2cm},clip, scale=0.12]{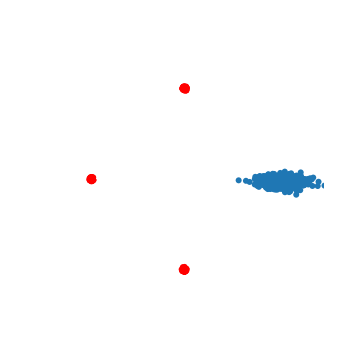}}
\noindent \fbox{\includegraphics[trim={2cm 2cm 2cm 2cm},clip, scale=0.12]{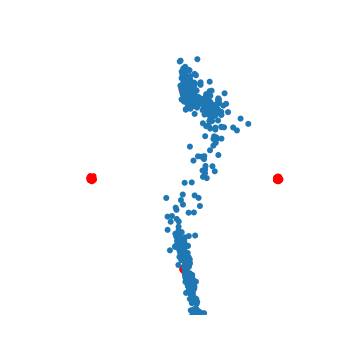}}
\noindent \fbox{\includegraphics[trim={2cm 2cm 2cm 2cm},clip, scale=0.12]{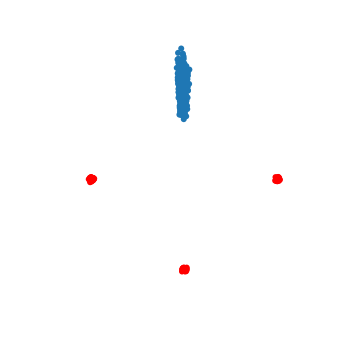}}
\noindent \fbox{\includegraphics[trim={2cm 2cm 2cm 2cm},clip, scale=0.12]{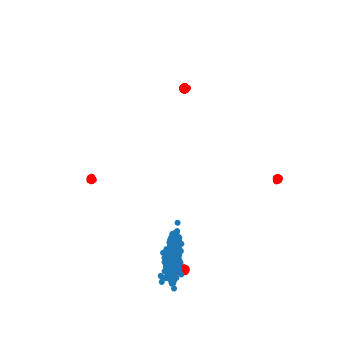}}
\noindent \fbox{\includegraphics[trim={2cm 2cm 2cm 2cm},clip, scale=0.12]{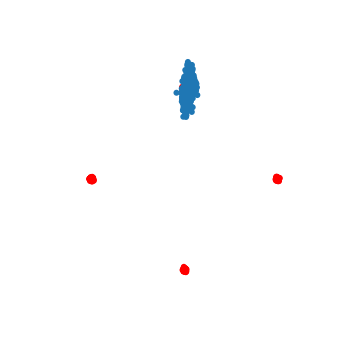}}
\noindent \fbox{\includegraphics[trim={2cm 2cm 2cm 2cm},clip, scale=0.12]{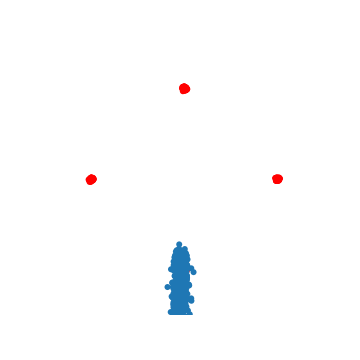}}
\noindent \fbox{\includegraphics[trim={2cm 2cm 2cm 2cm},clip, scale=0.12]{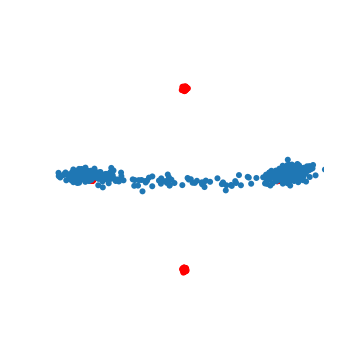}}
\noindent \fbox{\includegraphics[trim={2cm 2cm 2cm 2cm},clip, scale=0.12]{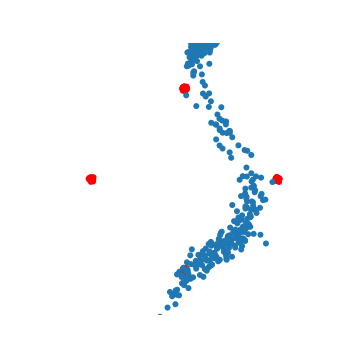}}
\noindent \fbox{\includegraphics[trim={2cm 2cm 2cm 2cm},clip, scale=0.12]{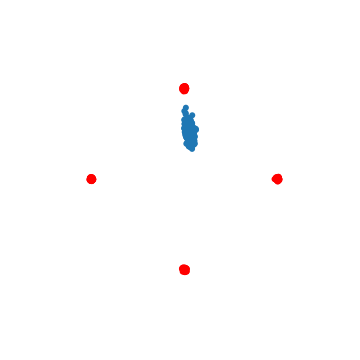}}
\noindent \fbox{\includegraphics[trim={2cm 2cm 2cm 2cm},clip, scale=0.12]{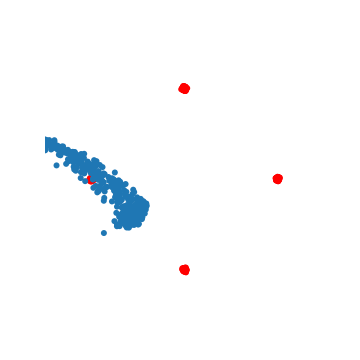}}
\noindent \fbox{\includegraphics[trim={2cm 2cm 2cm 2cm},clip, scale=0.12]{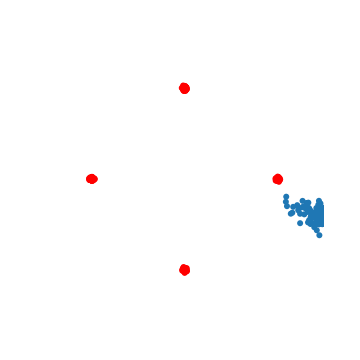}}
\noindent \fbox{\includegraphics[trim={2cm 2cm 2cm 2cm},clip, scale=0.12]{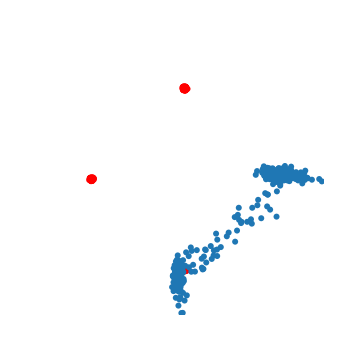}}
\noindent \fbox{\includegraphics[trim={2cm 2cm 2cm 2cm},clip, scale=0.12]{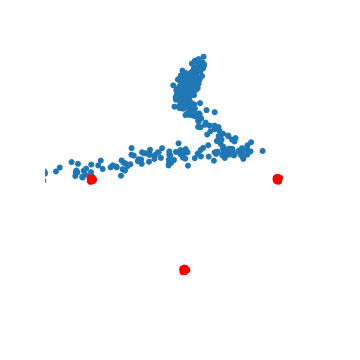}}
\noindent \fbox{\includegraphics[trim={2cm 2cm 2cm 2cm},clip, scale=0.12]{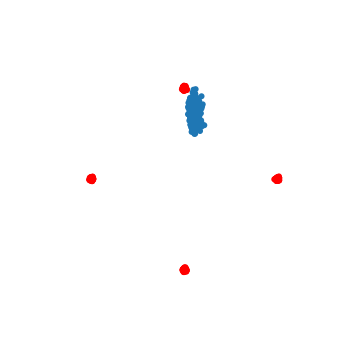}}
\noindent \fbox{\includegraphics[trim={2cm 2cm 2cm 2cm},clip, scale=0.12]{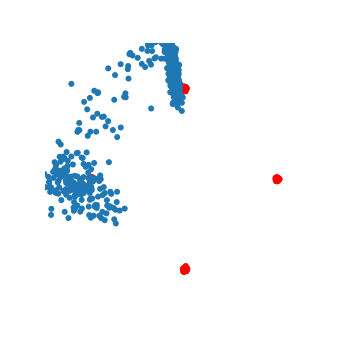}}
\noindent \fbox{\includegraphics[trim={2cm 2cm 2cm 2cm},clip, scale=0.12]{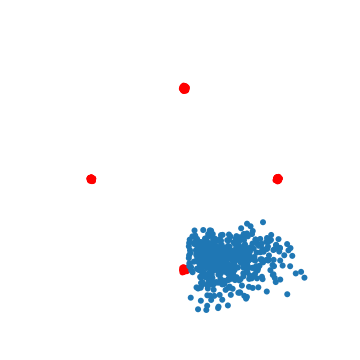}}
\noindent \fbox{\includegraphics[trim={2cm 2cm 2cm 2cm},clip, scale=0.12]{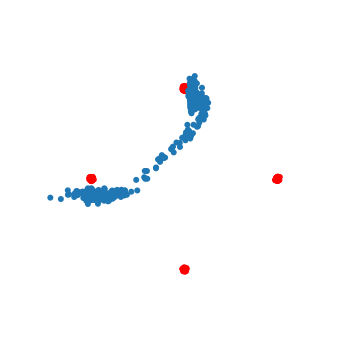}}
\noindent \fbox{\includegraphics[trim={2cm 2cm 2cm 2cm},clip, scale=0.12]{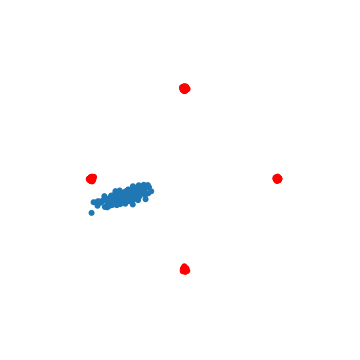}}
\noindent \fbox{\includegraphics[trim={2cm 2cm 2cm 2cm},clip, scale=0.12]{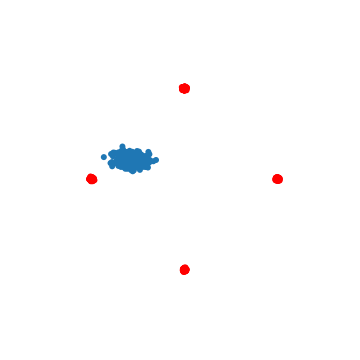}}
\noindent \fbox{\includegraphics[trim={2cm 2cm 2cm 2cm},clip, scale=0.12]{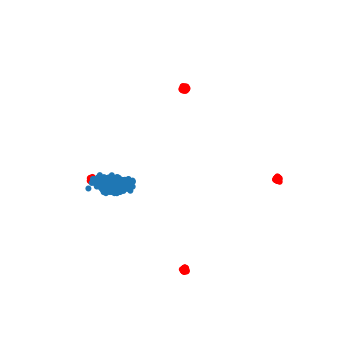}}
%
%
\caption{\small The generated points at the 1500'th iteration for all runs of the Unrolled GAN algorithm for the example in Figure \ref{fig:4_Gaussians}, with $k=6$ unrolling steps.
}\label{fig:4_Gaussians_AllRuns_Unrolled}
\end{figure*}

\begin{figure*}
    \centering
    \begin{center}
\textbf{\small OMD} \\
\end{center}
    \includegraphics[width=\linewidth]{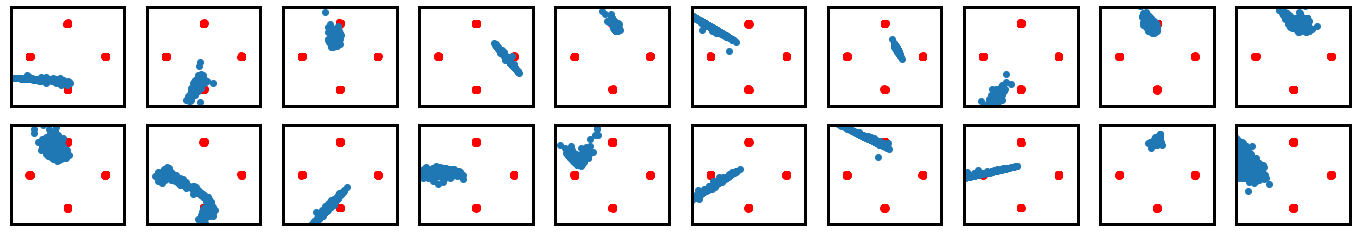}
    \caption{The generated points at the 1500'th iteration for all  runs of OMD algorithm.}
    \label{fig:omd_gaussian_all}
\end{figure*}

\begin{figure*}
\begin{center}
\textbf{\small Our algorithm} \\
\end{center}
\noindent \fbox{\includegraphics[trim={2cm 2cm 2cm 2cm}, clip, scale=0.12]{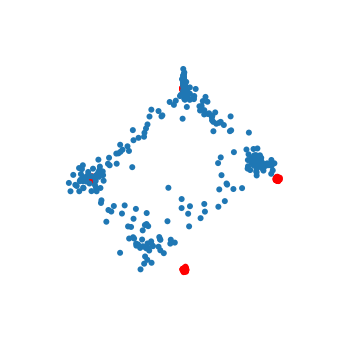}}
\noindent \fbox{\includegraphics[trim={2cm 2cm 2cm 2cm},clip, scale=0.12]{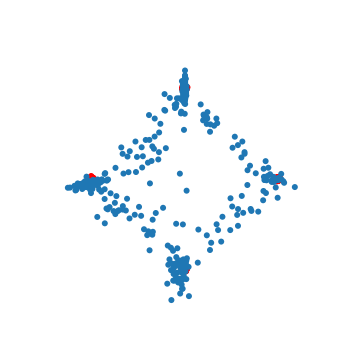}}
\noindent \fbox{\includegraphics[trim={2cm 2cm 2cm 2cm},clip, scale=0.12]{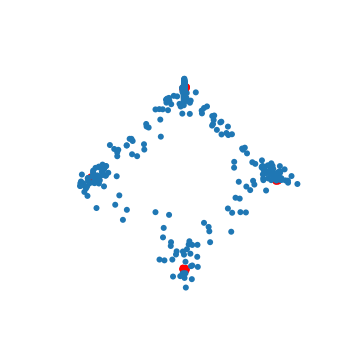}}
\noindent \fbox{\includegraphics[trim={2cm 2cm 2cm 2cm},clip, scale=0.12]{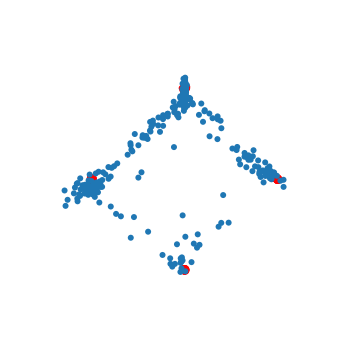}}
\noindent \fbox{\includegraphics[trim={2cm 2cm 2cm 2cm},clip, scale=0.12]{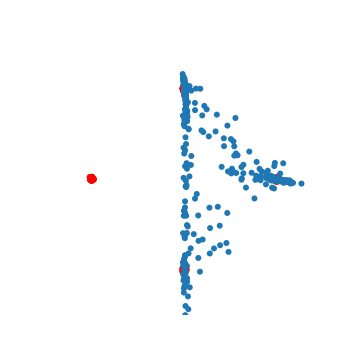}}
\noindent \fbox{\includegraphics[trim={2cm 2cm 2cm 2cm},clip, scale=0.12]{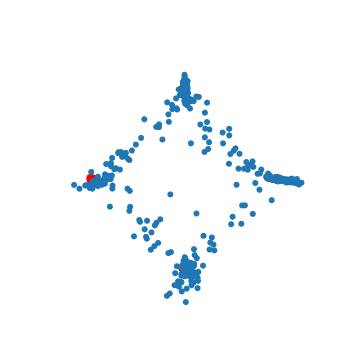}}
\noindent \fbox{\includegraphics[trim={2cm 2cm 2cm 2cm},clip, scale=0.12]{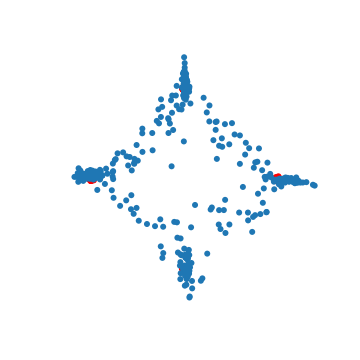}}
\noindent \fbox{\includegraphics[trim={2cm 2cm 2cm 2cm},clip, scale=0.12]{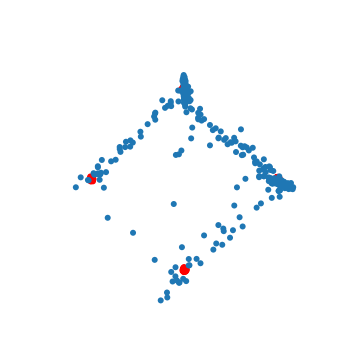}}
\noindent \fbox{\includegraphics[trim={2cm 2cm 2cm 2cm},clip, scale=0.12]{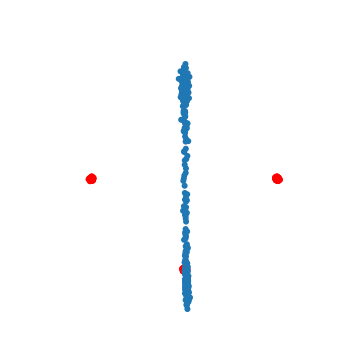}}
\noindent \fbox{\includegraphics[trim={2cm 2cm 2cm 2cm},clip, scale=0.12]{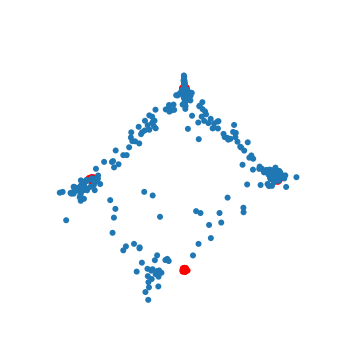}}
\noindent \fbox{\includegraphics[trim={2cm 2cm 2cm 2cm},clip, scale=0.12]{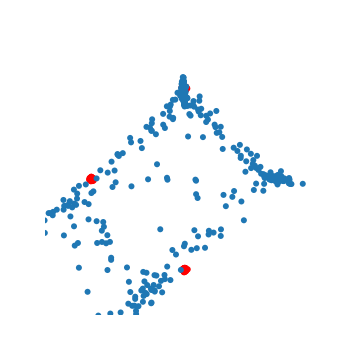}}
\noindent \fbox{\includegraphics[trim={2cm 2cm 2cm 2cm},clip, scale=0.12]{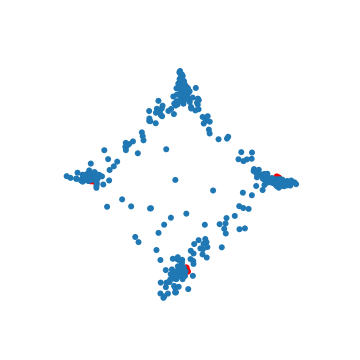}}
\noindent \fbox{\includegraphics[trim={2cm 2cm 2cm 2cm},clip, scale=0.12]{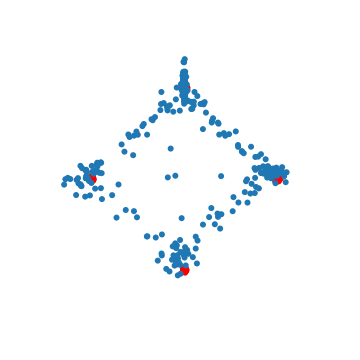}}
\noindent \fbox{\includegraphics[trim={2cm 2cm 2cm 2cm},clip, scale=0.12]{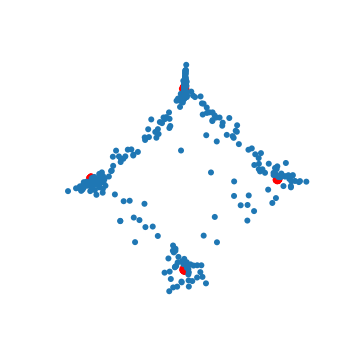}}
\noindent \fbox{\includegraphics[trim={2cm 2cm 2cm 2cm},clip, scale=0.12]{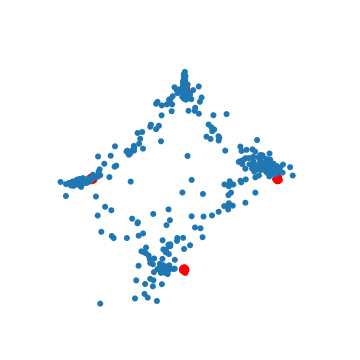}}
 \noindent \fbox{\includegraphics[trim={2cm 2cm 2cm 2cm},clip, scale=0.12]{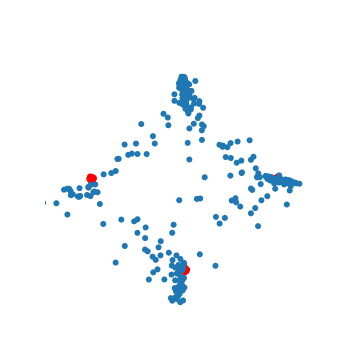}}
 \noindent \fbox{\includegraphics[trim={2cm 2cm 2cm 2cm},clip, scale=0.12]{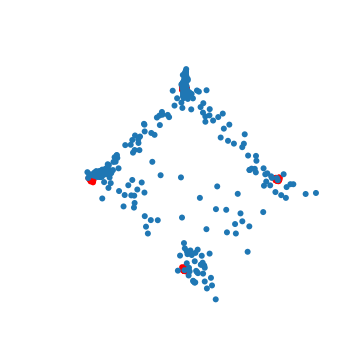}}
 \noindent \fbox{\includegraphics[trim={2cm 2cm 2cm 2cm},clip, scale=0.12]{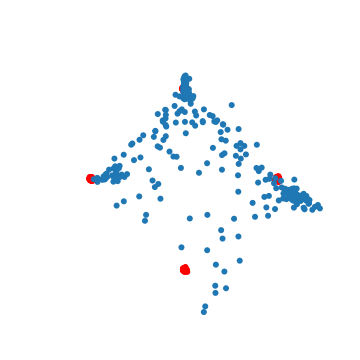}}
 \noindent \fbox{\includegraphics[trim={2cm 2cm 2cm 2cm},clip, scale=0.12]{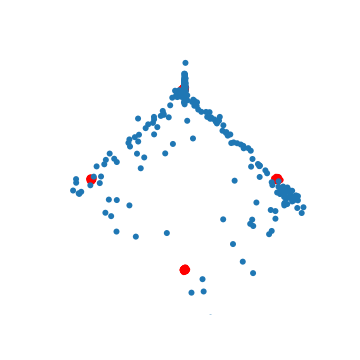}}
\caption{\small The generated points at the 1500'th iteration for all runs of our algorithm, for the simulation mentioned in Figure \ref{fig:4_Gaussians}.  Our algorithm used $k=6$ discriminator steps and an acceptance rate hyperparameter of $\frac{1}{\tau} = \frac{1}{4}$.  By the 1500'th iteration, our algorithm seems to have learned all four modes 70\% of the runs, three modes 15\% of the runs, and two modes 15\% of the runs.}\label{fig:4_Gaussians_AllRuns_OurAlgorithm}
\end{figure*}

\begin{figure}
    \centering
    \includegraphics[width=0.6\linewidth]{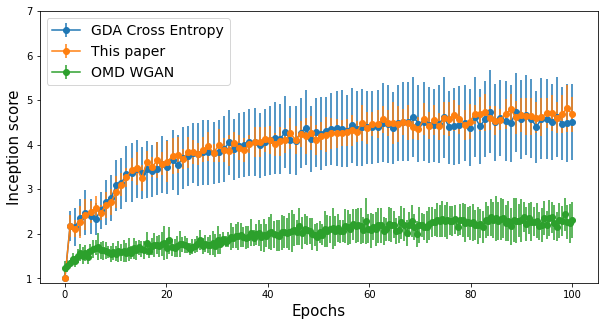}
    \caption{Inception score average (and standard deviation in errorbars) of all methods across iterations. Note that mean inception score of our algorithm is higher than the mean inception score of OMD, while the standard deviation of inception score of our algorithm is lower than the standard deviation of inception score of GDA.}
    \label{fig:inception_vs_iterations}
\end{figure}

\begin{figure*}
\includegraphics[width=\linewidth]{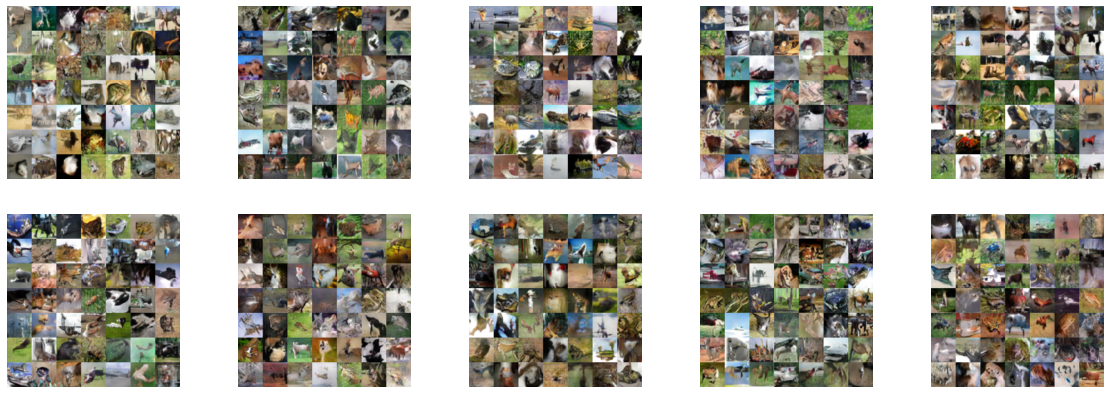}
\caption{\small GAN trained using our algorithm (with $k=1$
  discriminator steps and acceptance rate
  $e^{-\frac{1}{\tau}} = \frac{1}{2}$).  We repeated
  this simulation multiple times; here we display images generated from some of the resulting generators
  for our algorithm.}\label{fig:mx_cifar_samples}
\end{figure*}

\begin{figure*}
\includegraphics[width=\linewidth]{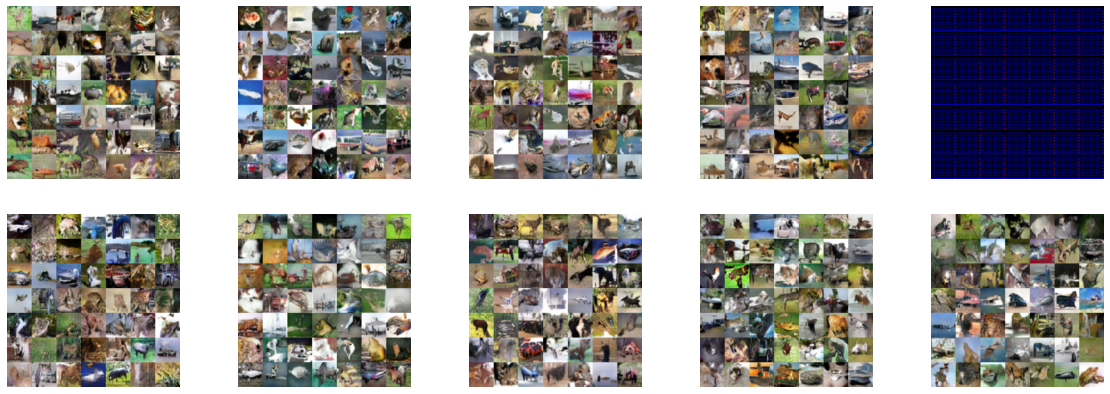}
\caption{\small GAN trained using GDA (with $k=1$
  discriminator steps).  We repeated
  this simulation multiple times; here we display images generated from some of the resulting generators
  for GDA.}\label{fig:gda_cifar_samples}
\end{figure*}

\begin{figure*}
\includegraphics[width=\linewidth]{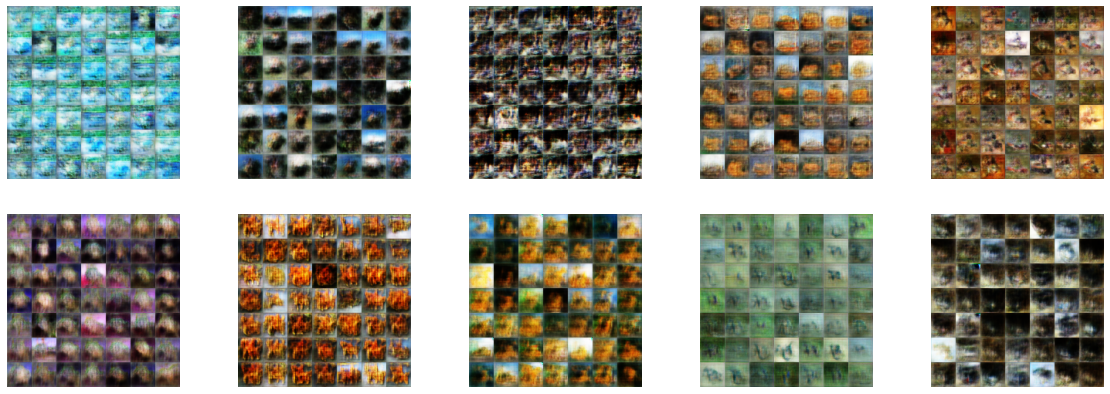}
\caption{\small GAN trained using OMD.  We repeated
  this simulation multiple times; here we display images generated from some of the resulting generators
  for OMD.}\label{fig:omd_cifar_samples}
\end{figure*}

\section{Empirical Results for CIFAR-10 Dataset} \label{CIFAR_results_appendix}

\begin{table}
\begin{center}
\caption{CIFAR-10 dataset: The mean (and standard error) of Inception Scores of models from different training algorithms.
Note that, GDA and our algorithm return generators with similar mean performance; however, the standard error of the Inception Score in case of GDA is relatively larger.
}
\label{tbl:cifar_results}
\small
\begin{tabular}{lcccc}
\toprule
& \multicolumn{4}{c}{Iteration} \\
Method & 5000 & 10000 & 25000 & 50000 \\
\midrule
Ours & 2.71 (0.28) & 3.57 (0.26) & 4.10 (0.35)  & \textbf{4.68} (0.39) \\
GDA & 2.80 (0.52) & 3.56 (0.64) & 4.28 (0.77) & 4.51 (0.86) \\
OMD & 1.60 (0.18) & 1.80 (0.37) & 1.73 (0.25) & 1.96 (0.26) \\
\bottomrule
\end{tabular}
\end{center}
\end{table}
\raggedbottom

\noindent
This real-world dataset contains 60K color images from 10 classes. 
Previous works \cite{borji2019pros, Metz2017unrolled, srivastava2017veegan} have noted that it is challenging to
detect mode collapse on CIFAR-10, visually or using standard metrics such as Inception Scores, because the modes are not well-separated. 
We use this dataset primarily to compare the scalability, quality, and stability of GANs in our framework obtained using our training algorithm. 

For CIFAR-10, in addition to providing images generated by the GANs, we also report the Inception Scores \cite{salimans2018improving} at different iterations. 
Inception Score is a standard heuristic measure for evaluating the quality of CIFAR-10 images and quantifies whether the generated images correspond to specific objects/classes, as well as, whether the GAN generates diverse images.
A higher Inception Score is better, and the lowest possible Inception Score is 1.%

\noindent \paragraph{Hyperparameters for CIFAR-10 simulations.}
      For the CIFAR-10 simulations, we use a batch size of 128, with Adam learning rate of $0.0002$ and hyperparameter $\beta_1=0.5$ for both the generator and discriminator gradients. 
 Our code for the CIFAR-10 simulations is based on the code of Jason Brownlee~\cite{Brownlee}, which originally used gradient descent ascent and ADAM gradients for training.

For the generator we use a neural network with input of size 100 and 4 hidden layers. The first hidden layer consists of a dense layer with $4,096$ parameters, followed by a leaky RELU layer, whose activations are reshaped into $246$ $4 \times 4$ feature maps. The feature maps are then upscaled to an output shape of 32 x 32 via three hidden layers of size 128 each consisting of a convolutional {\em Conv2DTranspose} layer followed by a leaky RELU layer, until the output layer where three filter maps (channels) are created.  Each leaky RELU layer has ``alpha" parameter $0.2$.

For the discriminator, we use a neural network with input of size $32 \times 32 \times 3$ followed by 5 hidden layers.  The first four hidden layers each consist of a convolutional {\em Conv2DTranspose} layer followed by a leaky RELU layer with ``alpha" parameter $0.2$.  The first layer has size 64, the next two layers each have size 128, and the fourth layer has size 256.  The output layer consists of a projection to 1 dimension with dropout regularization of 0.4 and sigmoid activation function.

\noindent \paragraph{Hardware.}
Our simulations on the CIFAR-10 dataset were performed on the above, and using one GPU with High frequency Intel Xeon E5-2686 v4 (Broadwell) processors, provided by AWS.

\noindent \paragraph{Results for CIFAR-10.}
We ran our algorithm (with $k=1$ discriminator steps and acceptance
rate $e^{-\frac{1}{\tau}} = \frac{1}{2}$) on CIFAR-10 for 20 repetitions and 50,000 iterations per repetition. We compare with GDA with $k=1$ discriminator
steps and OMD.
For all algorithms, we compute the Inception Score every 500 iterations; Table~\ref{tbl:cifar_results} reports the Inception Scores at iteration 5000, 10000, 25000, and 50000, while Figure~\ref{fig:inception_vs_iterations} provides the complete plot for Inception Score vs. training iterations.
Sample images from all three algorithms are also provided in Figures \ref{fig:mx_cifar_samples}, \ref{fig:gda_cifar_samples}, \ref{fig:omd_cifar_samples}.

The average Inception Score of GANs from both GDA and our algorithm are fairly close to each other, with the final mean Inception Score of 4.68 for our algorithm being somewhat higher than the final mean of 4.51 for GDA.
However, the standard error of Inception Scores of GDA is much larger than of our algorithm.
The relatively larger standard deviation of GDA is because GDA, in certain runs, does not learn an appropriate distribution at all (Inception Score is close to 1 throughout training in this case), leading to a larger value of standard deviation.
Visually, in these GDA runs, the GANs from GDA  do not generate recognizable images (Figure \ref{fig:gda_cifar_samples}, top-right image).
For all other trials, the images generated by GDA have similar Inception Score (and similar quality) as the images generated by our algorithm.
In other words, 
our algorithm seems to be more stable 
than GDA and returns GANs that generate high quality images in every repetition.

GANs trained using OMD  attain much lower Inception Scores than our algorithm.\footnote{We could not replicate the performance of OMD reported in \cite{Daskalakis2018optimism}, even with the  implementation provided here - \url{https://github.com/vsyrgkanis/optimistic_GAN_training}.}
Moreover, the images generated by GANs trained using OMD have visually much lower quality than the images generated by GANs trained using our algorithm (Figure \ref{fig:omd_cifar_samples}).

Evaluation on CIFAR-10 dataset shows that the GANs from our training algorithm can always generate good quality images; in comparison to OMD, the GANs trained using our algorithm generate higher quality images, while in comparison to GDA, it is relatively more stable.

\paragraph{Clock time per iteration.}
When training on CIFAR-10, our algorithm and GDA both took the same amount of time per iteration, 0.08 seconds, on the AWS GPU server.

We evaluate our algorithm on MNIST dataset as well, where it also learns to generate from multiple modes; 
the results are presented in Appendix~\ref{sec:mnist_results}.

\begin{figure*}[t]
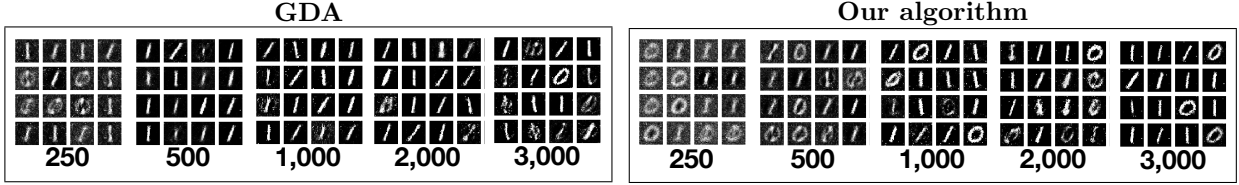

    \begin{minipage}{0.495\textwidth}
    \begin{center}
      \textbf{\small  GDA} \\
      \vspace{0.4mm}
    \fbox{\includegraphics[width=0.96\textwidth]{images/9_Vanilla_MNIST_small}}
  \end{center}
  \end{minipage}
    \begin{minipage}{0.495\textwidth}
    \begin{center}
    \textbf{\small Our algorithm} \\
    \fbox{\includegraphics[trim={0, 0, 0, 0}, clip, width=0.96\textwidth]{images/3_Picky_MNIST_small}}
  \end{center}
  \end{minipage}
  \caption{\small We trained a GAN using our algorithm on 0-1 MNIST for 30,000
    iterations (with $k=1$ discriminator steps and acceptance rate
    $e^{-\frac{1}{\tau}} = \frac{1}{5}$).  We repeated this experiment
    22 times for our algorithm and 13 times for  GDA.  Shown
    here are the images generated from one of these runs at various
    iterations for our algorithm (right) and  GDA (left).}
  \label{fig:MNIST_Picky_Vs_Vanilla}
\end{figure*}

\section{Empirical Results for MNIST Dataset} \label{sec:mnist_results}

This dataset consists of 60k images of hand-written digits \cite{lecun2010mnist}. 
We use two versions of this dataset: the full dataset and the dataset restricted to 0-1 digits.

\noindent \paragraph{Hyperparameters for MNIST simulations.}
For the MNIST simulations, we use a batch size of 128, with Adam learning rate of $0.0002$ and hyperparameter $\beta_1=0.5$ for both the generator and discriminator gradients. 
 Our code for the MNIST simulations is based on the code of Renu Khandelwal~\cite{Khandelwal} and Rowel Atienza~\cite{Atienza}, which originally used gradient descent ascent and ADAM gradients for training.

For the generator we use a neural network with input of size 256 and 3 hidden layers, with leaky RELUS each with ``alpha" parameter $0.2$ and dropout regularization of 0.2 at each layer. 
 The first layer has size 256, the second layer has size  512, and the third layer has size 1024, followed by an output layer with hyperbolic tangent (``tanh") acvtivation.

For the discriminator we use a neural network with 3 hidden layers, and leaky RELUS each with ``alpha" parameter $0.2$, and dropout regularization of 0.3 (for the first two layers) and 0.2 (for the last layer). 
 The first layer has size 1024, the second layer has size  512, the third layer has size 256, and the hidden layers are followed by a projection to 1 dimension with sigmoid activation (which is fed as input to the cross entropy loss function).

\noindent \paragraph{Results for 0-1 MNIST.}
We trained GANs using both GDA and our algorithm on the 0-1 MNIST dataset, and ran each algorithm for 3000 iterations (Figure~\ref{fig:MNIST_Picky_Vs_Vanilla}).   GDA
seems to briefly generate shapes that look like a combination of 0's
and 1's, then switches to generating only 1's, and then re-learns how
to generate 0's.  In contrast, our algorithm seems to learn how to
generate both 0's and 1's early on and
does not mode collapse to either  digit.
(See Figure \ref{fig_01MNISTGDA} for images generated by all the runs of GDA, and Figure \ref{fig_01MNISTOur} for images generated by the GAN for all the runs of our algorithm.)

\begin{figure*}
\fbox{\includegraphics[trim={0 13cm 13cm 0},clip, scale=0.125]{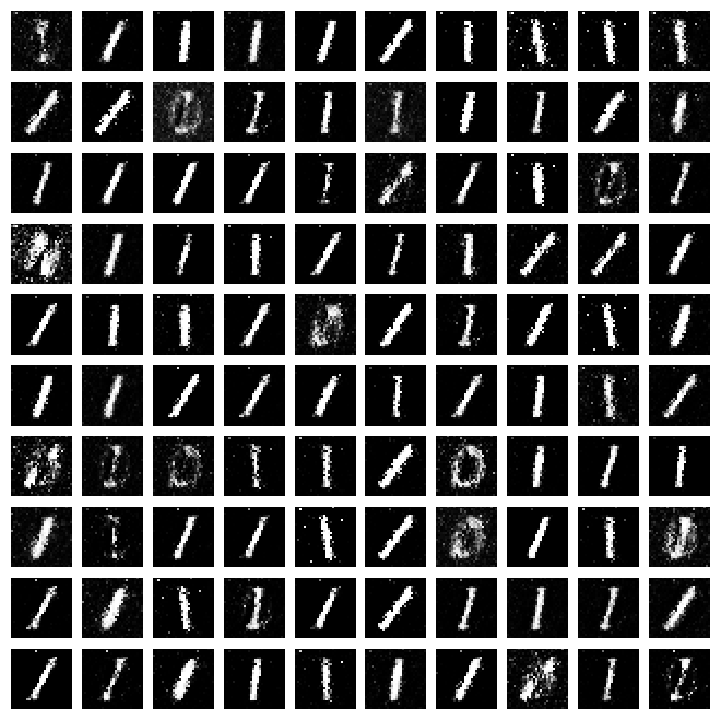}}
\fbox{\includegraphics[trim={0 13cm 13cm 0},clip, scale=0.125]{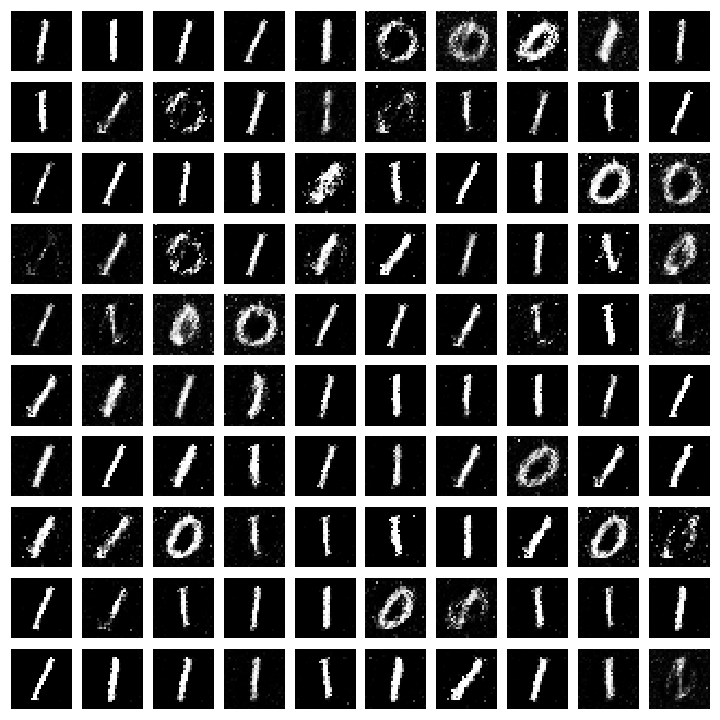}}
\fbox{\includegraphics[trim={0 13cm 13cm 0},clip, scale=0.125]{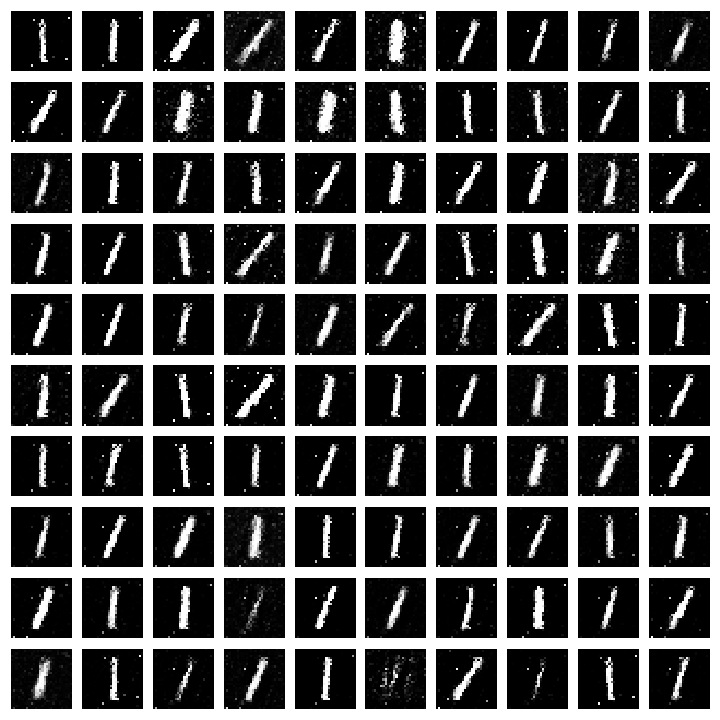}}
\fbox{\includegraphics[trim={0 13cm 13cm 0},clip, scale=0.125]{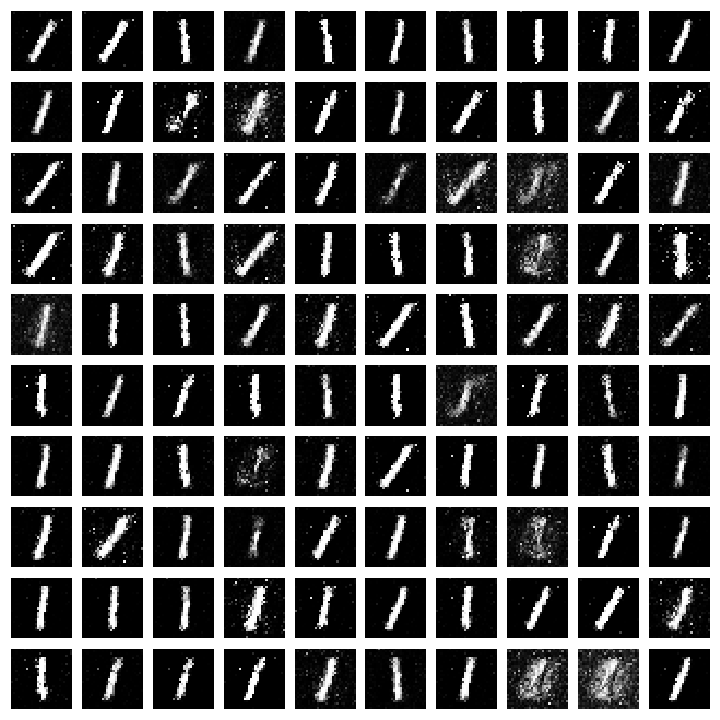}}
\fbox{\includegraphics[trim={0 13cm 13cm 0},clip, scale=0.125]{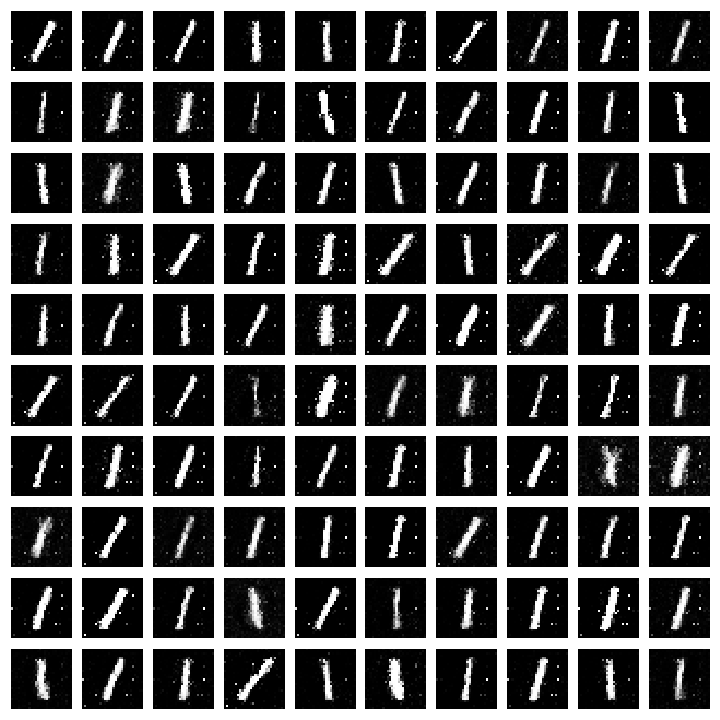}}
\fbox{\includegraphics[trim={0 13cm 13cm 0},clip, scale=0.125]{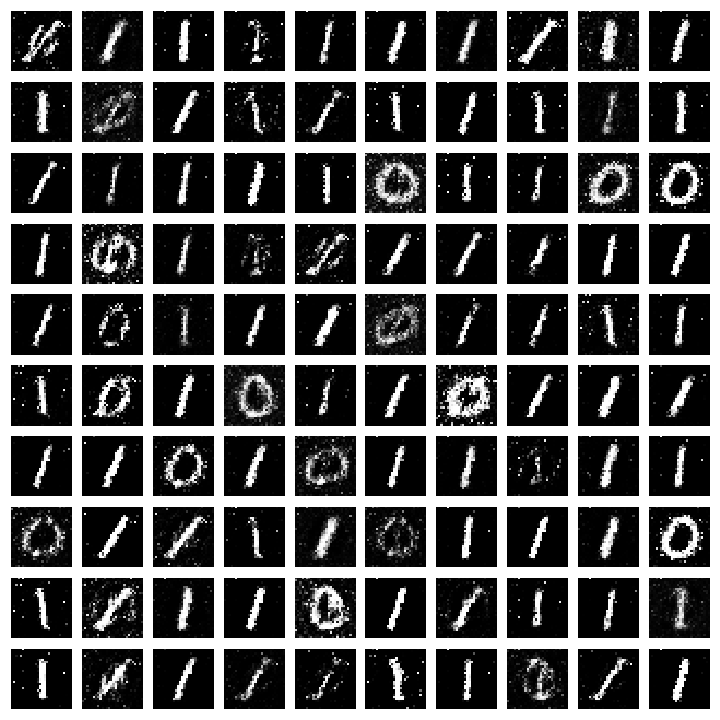}}
\fbox{\includegraphics[trim={0 13cm 13cm 0},clip, scale=0.125]{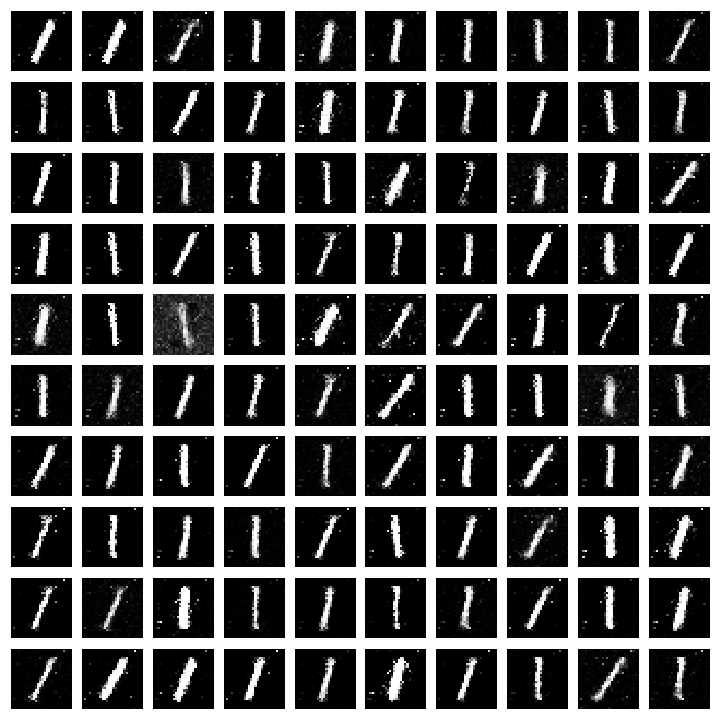}}
\noindent \fbox{\includegraphics[trim={0 13cm 13cm 0},clip, scale=0.125]{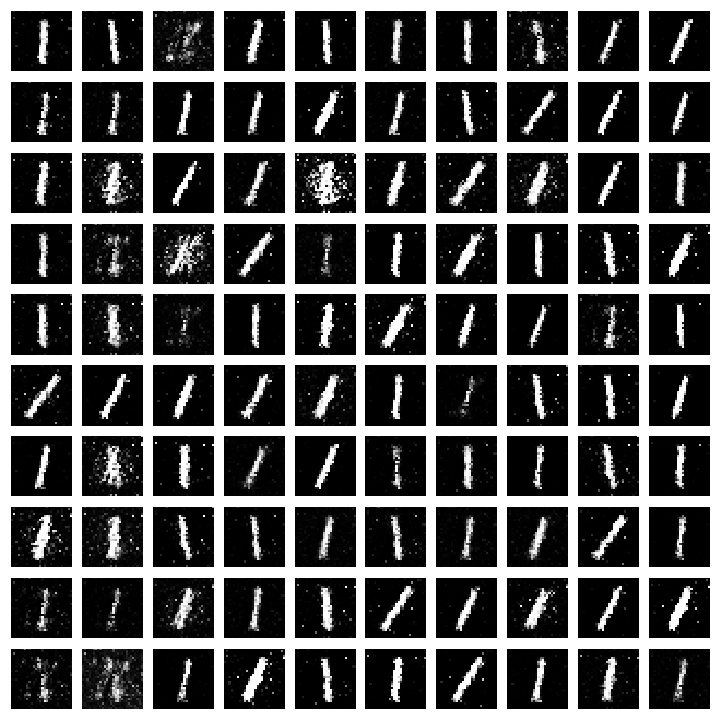}}
\fbox{\includegraphics[trim={0 13cm 13cm 0},clip, scale=0.125]{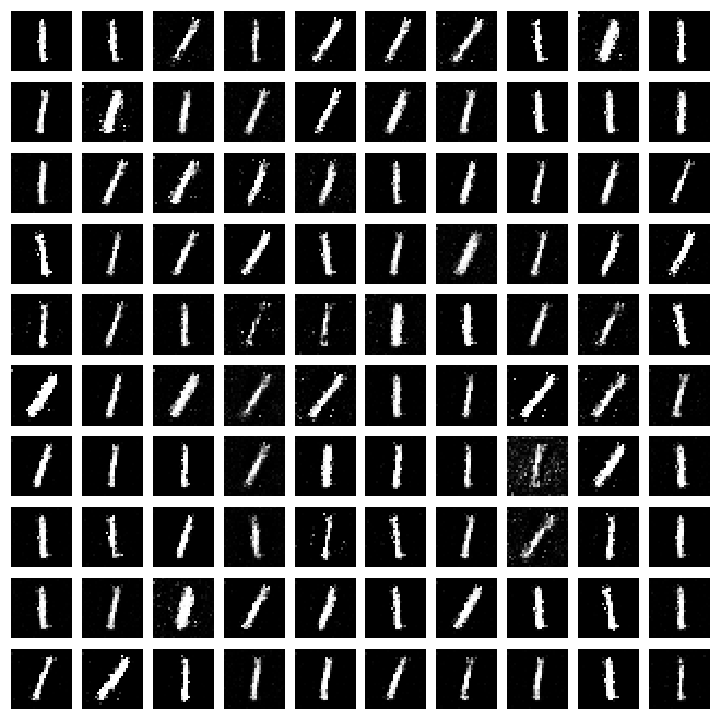}}
\fbox{\includegraphics[trim={0 13cm 13cm 0},clip, scale=0.125]{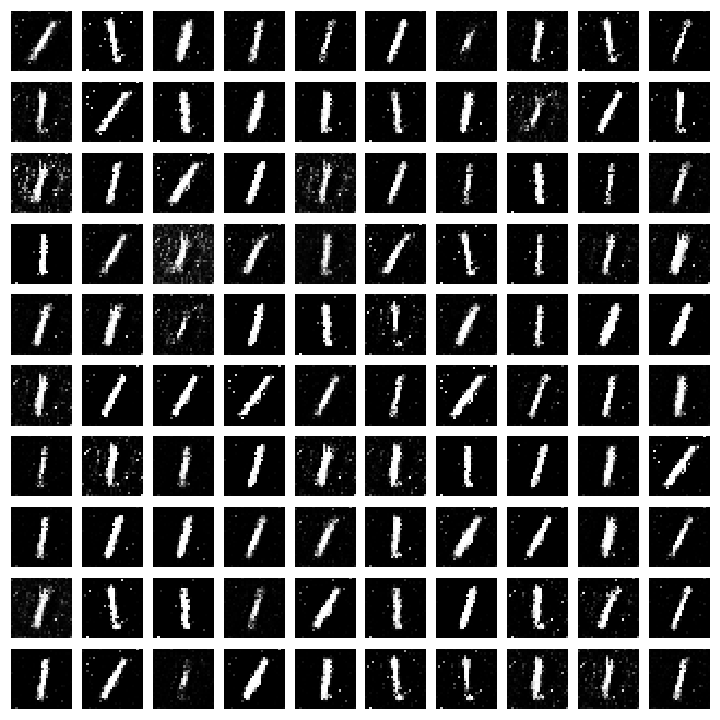}}
\fbox{\includegraphics[trim={0 13cm 13cm 0},clip, scale=0.125]{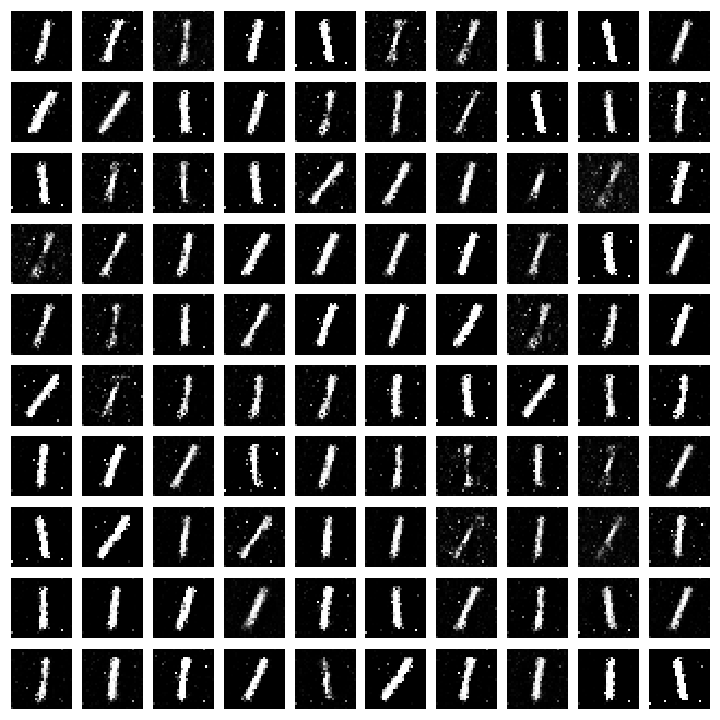}}
\fbox{\includegraphics[trim={0 13cm 13cm 0},clip, scale=0.125]{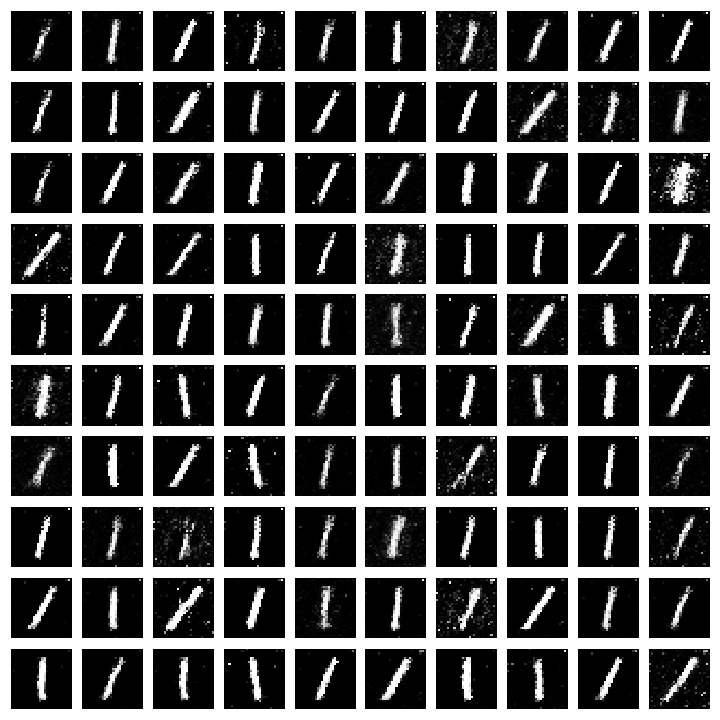}}
\fbox{\includegraphics[trim={0 13cm 13cm 0},clip, scale=0.125]{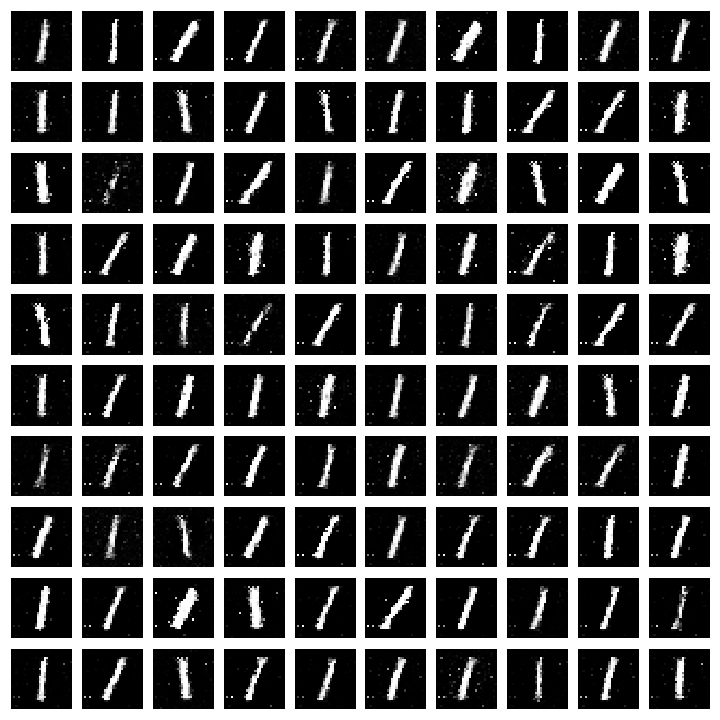}}
\caption{\small Images generated at the 1000'th iteration of the 13 runs of the GDA simulation mentioned in Figure \ref{fig:MNIST_Picky_Vs_Vanilla}.  In 77\% of the runs the generator seems to be generating only 1's at the 1000'th iteration.} \label{fig_01MNISTGDA}
\end{figure*}

\begin{figure*}
\fbox{\includegraphics[trim={0 13cm 13cm 0},clip, scale=0.125]{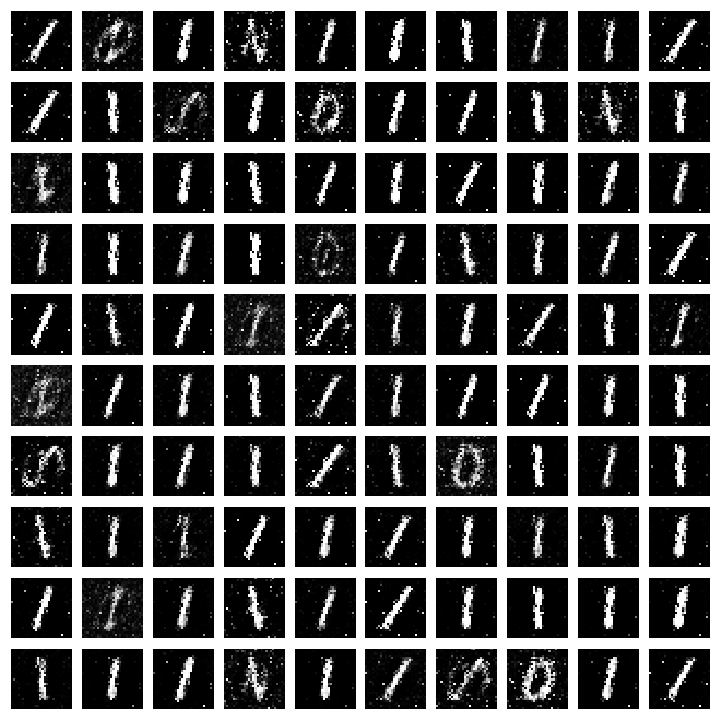}}
\fbox{\includegraphics[trim={0 13cm 13cm 0},clip, scale=0.125]{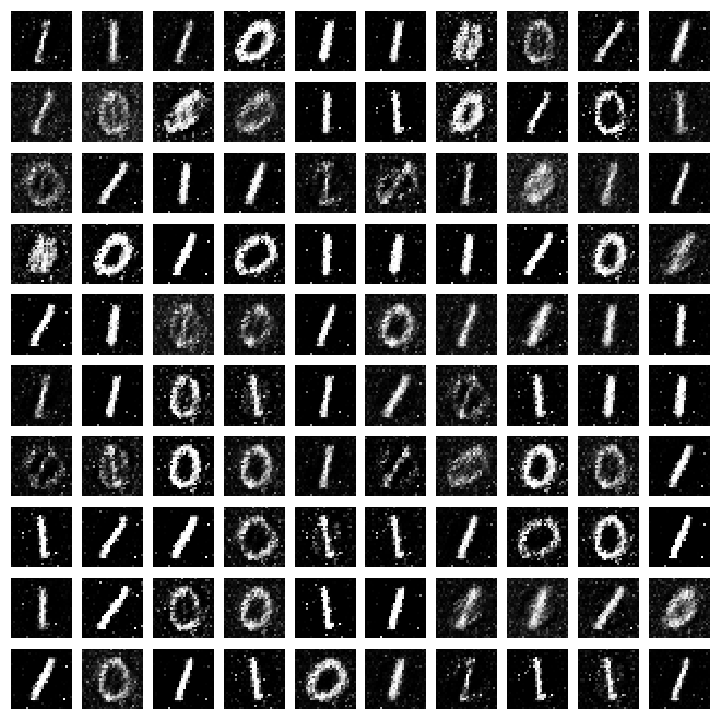}}
\fbox{\includegraphics[trim={0 13cm 13cm 0},clip, scale=0.125]{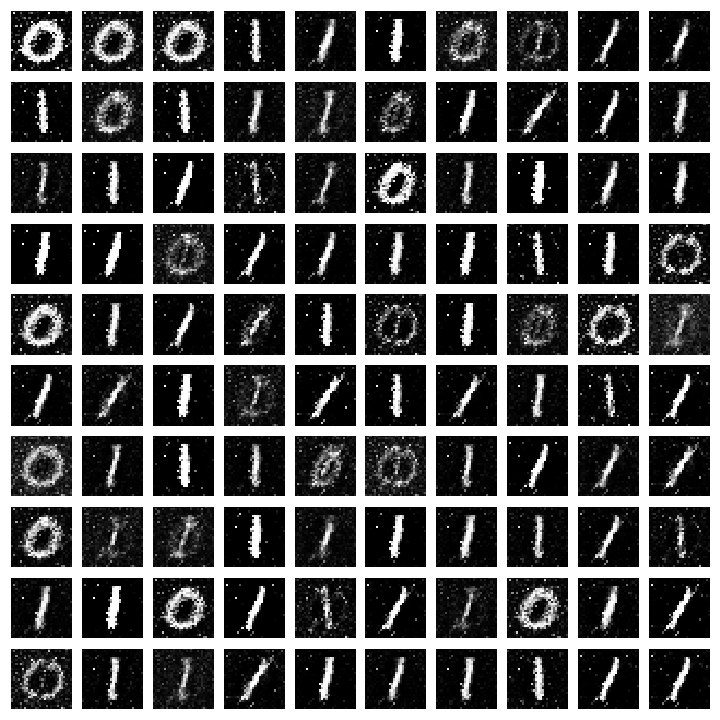}}
\fbox{\includegraphics[trim={0 13cm 13cm 0},clip, scale=0.125]{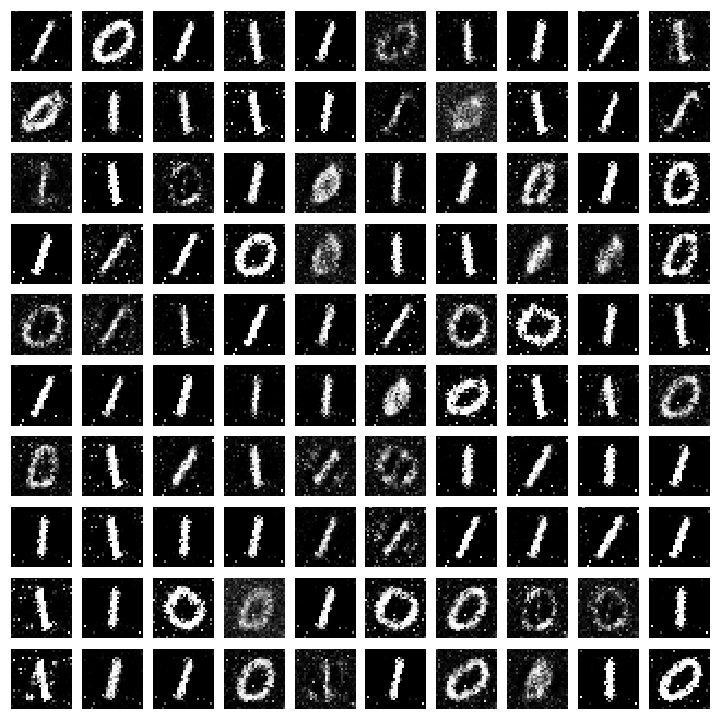}}
\fbox{\includegraphics[trim={0 13cm 13cm 0},clip, scale=0.125]{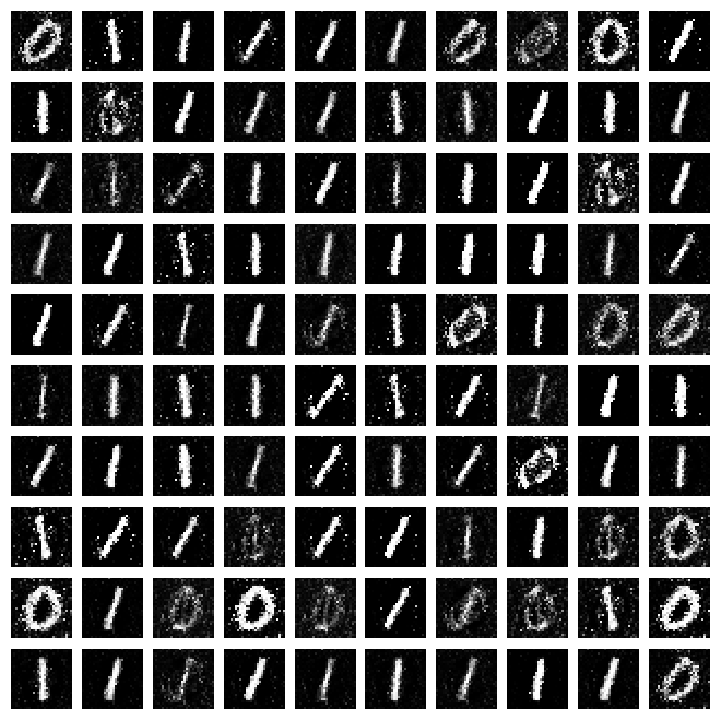}}
\fbox{\includegraphics[trim={0 13cm 13cm 0},clip, scale=0.125]{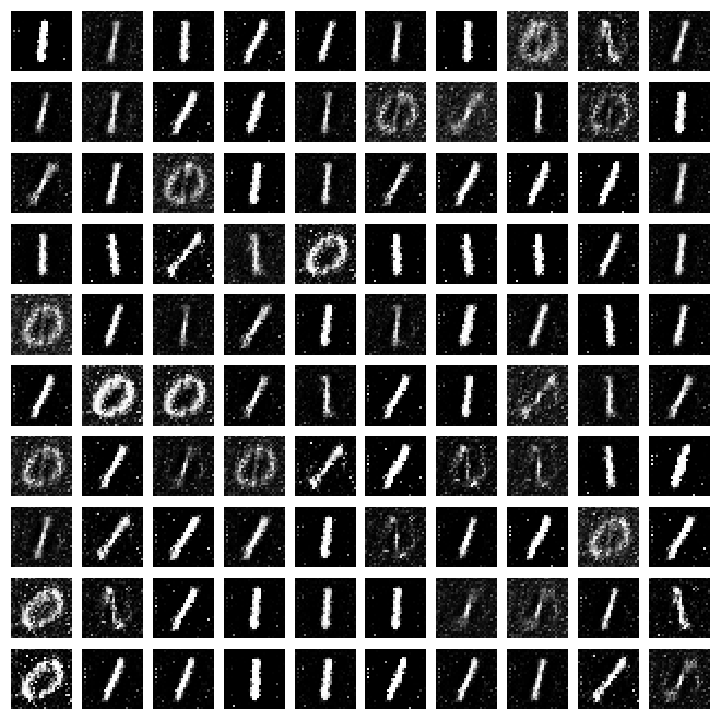}}
\fbox{\includegraphics[trim={0 13cm 13cm 0},clip, scale=0.125]{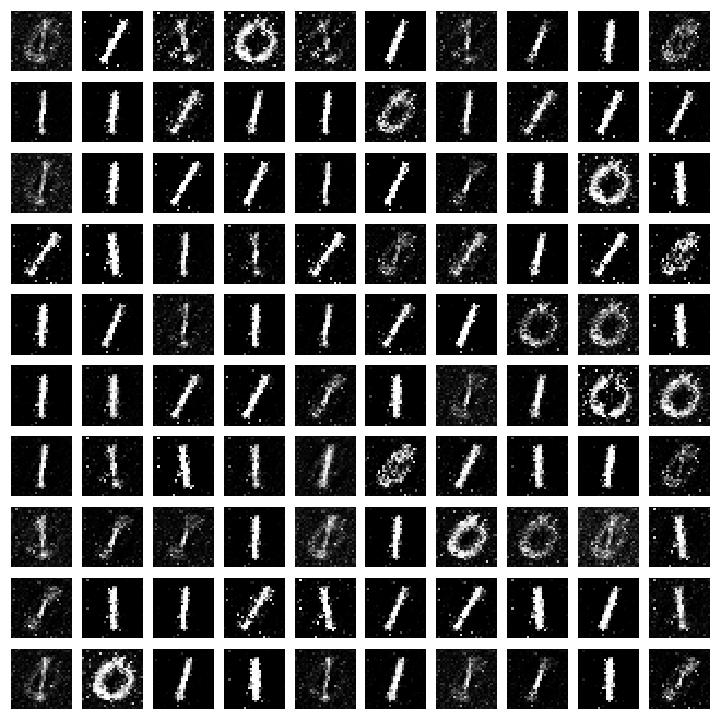}}
\fbox{\includegraphics[trim={0 13cm 13cm 0},clip, scale=0.125]{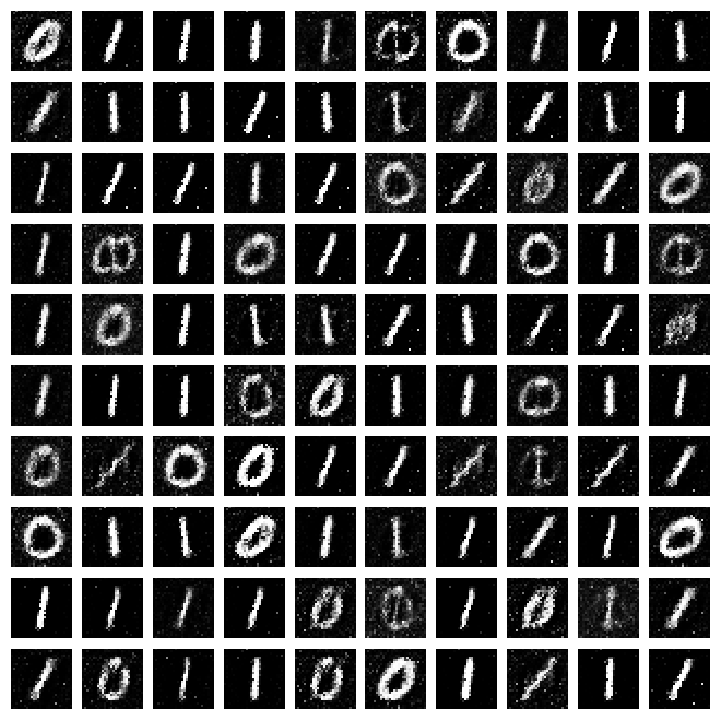}}
\fbox{\includegraphics[trim={0 13cm 13cm 0},clip, scale=0.125]{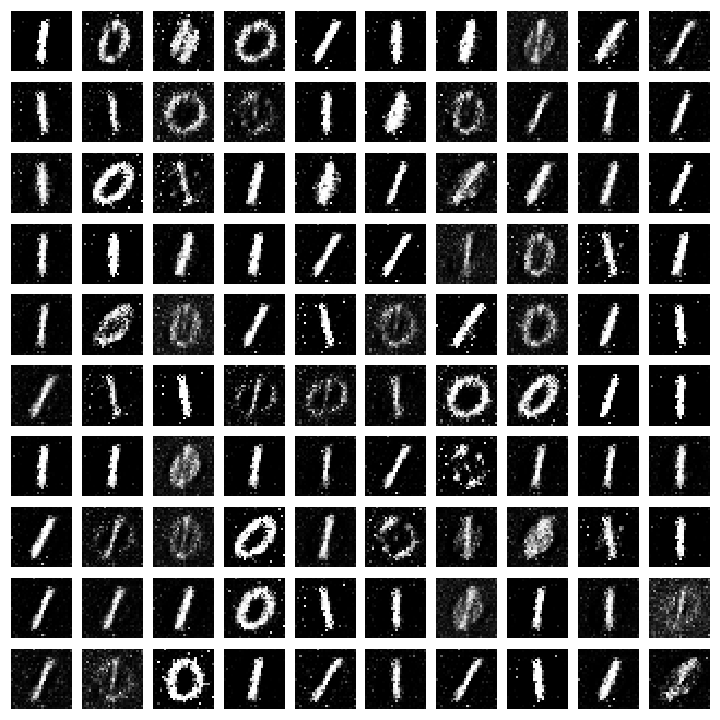}}
\fbox{\includegraphics[trim={0 13cm 13cm 0},clip, scale=0.125]{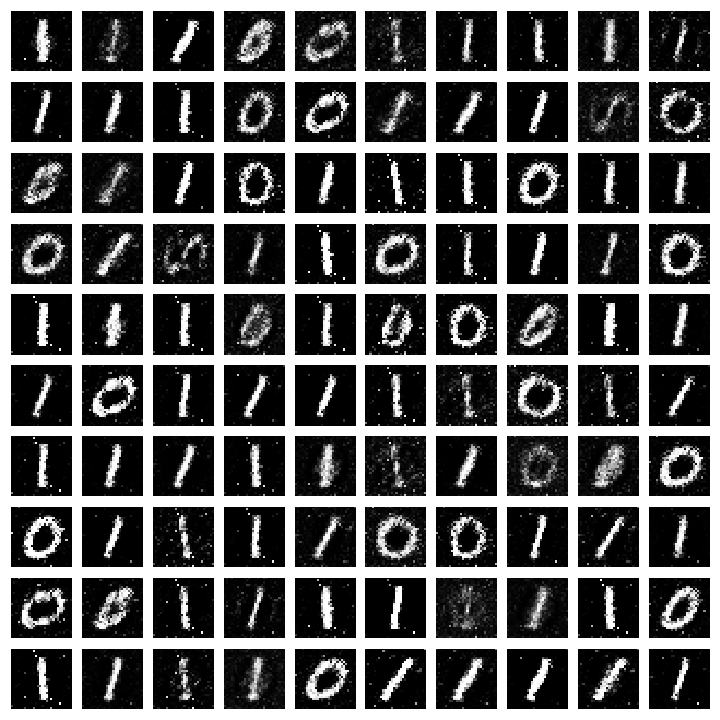}}
\fbox{\includegraphics[trim={0 13cm 13cm 0},clip, scale=0.125]{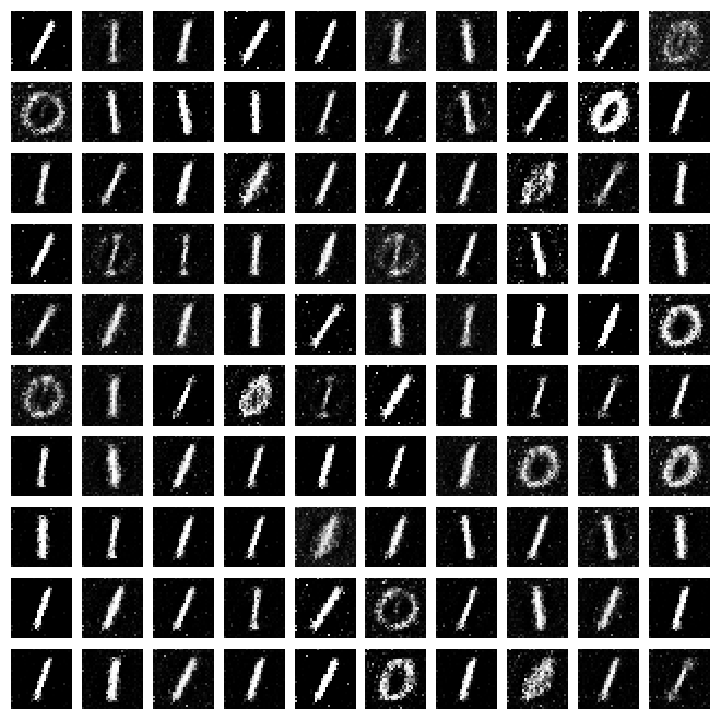}}
\fbox{\includegraphics[trim={0 13cm 13cm 0},clip, scale=0.125]{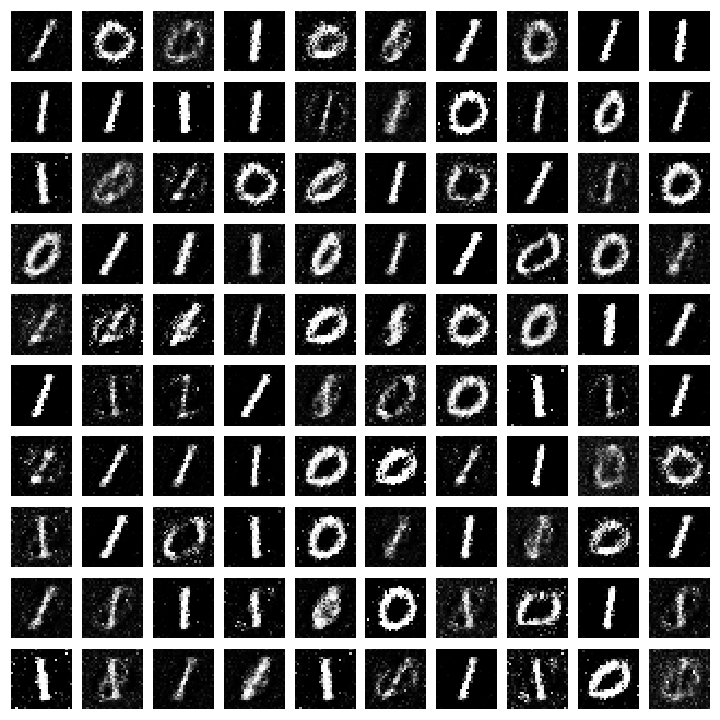}}
\fbox{\includegraphics[trim={0 13cm 13cm 0},clip, scale=0.125]{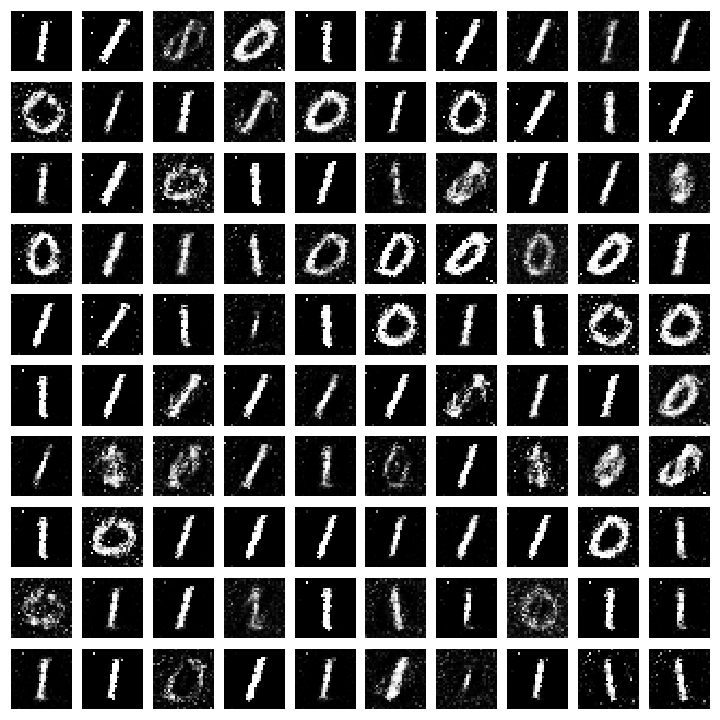}}
\fbox{\includegraphics[trim={0 13cm 13cm 0},clip, scale=0.125]{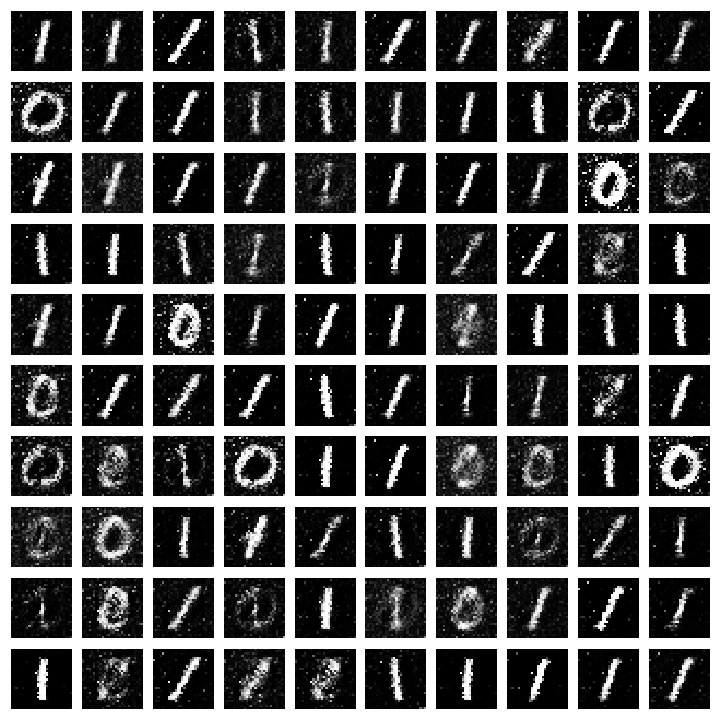}}
\fbox{\includegraphics[trim={0 13cm 13cm 0},clip, scale=0.125]{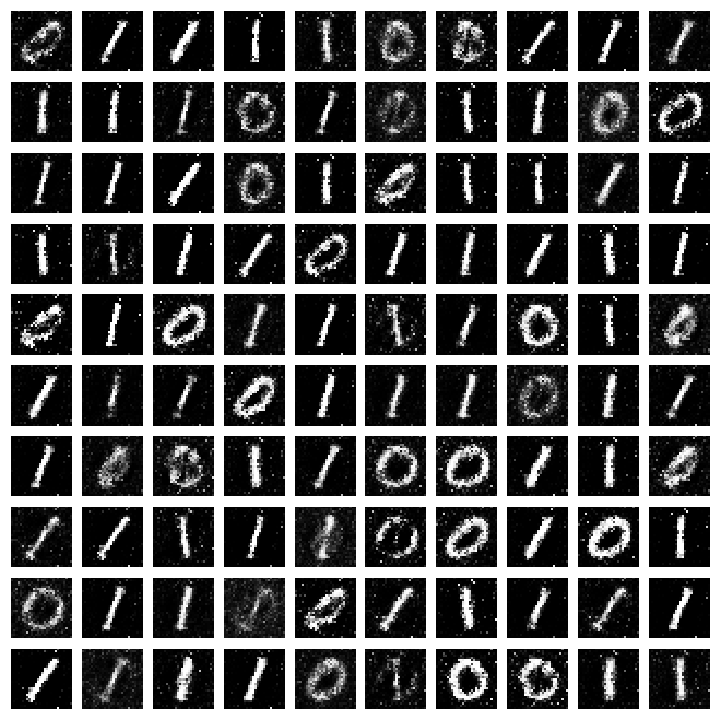}}
\fbox{\includegraphics[trim={0 13cm 13cm 0},clip, scale=0.125]{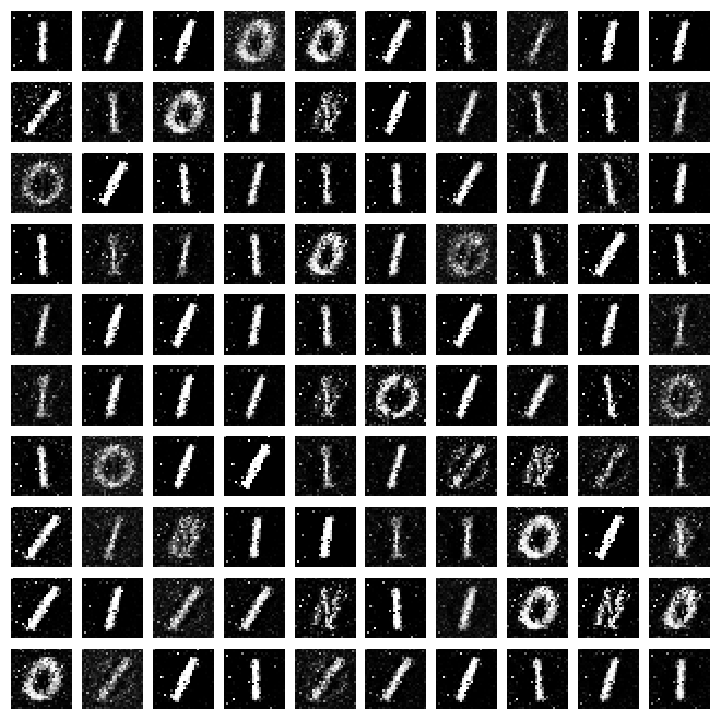}}
\fbox{\includegraphics[trim={0 13cm 13cm 0},clip, scale=0.125]{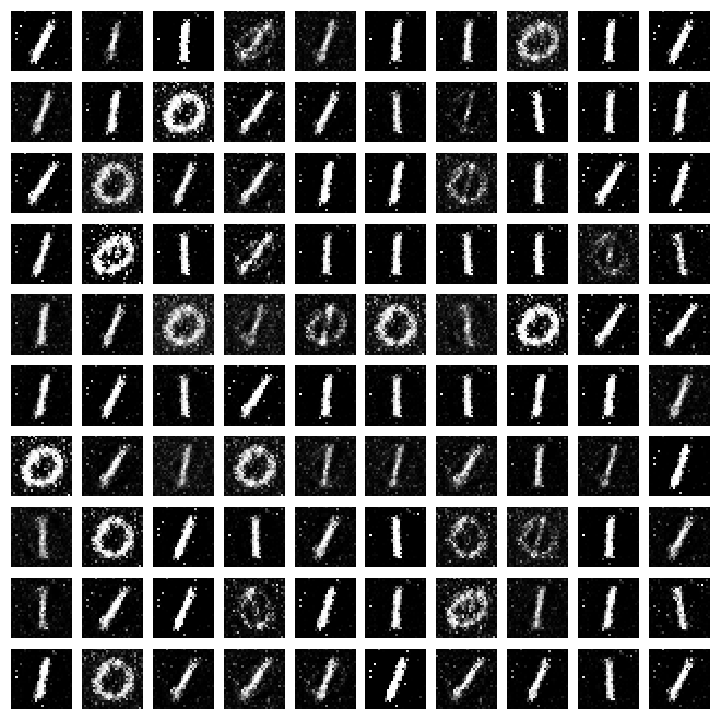}}
\fbox{\includegraphics[trim={0 13cm 13cm 0},clip, scale=0.125]{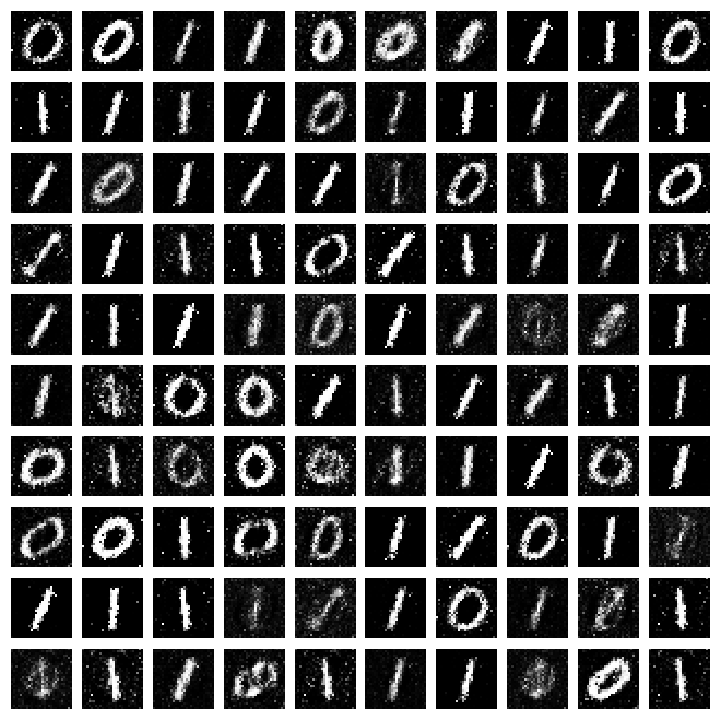}}
\fbox{\includegraphics[trim={0 13cm 13cm 0},clip, scale=0.125]{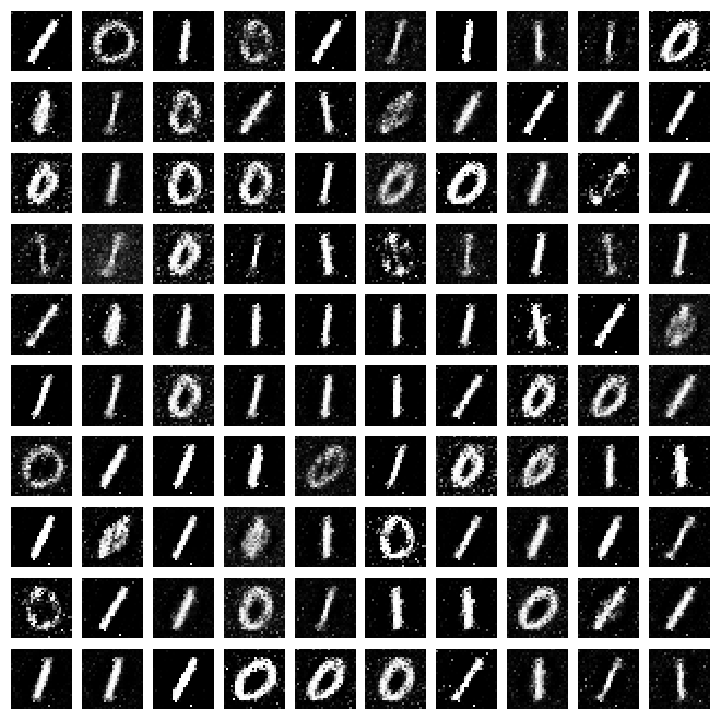}}
\fbox{\includegraphics[trim={0 13cm 13cm 0},clip, scale=0.125]{images/17_Picky_MNIST_1000.png}}
\fbox{\includegraphics[trim={0 13cm 13cm 0},clip, scale=0.125]{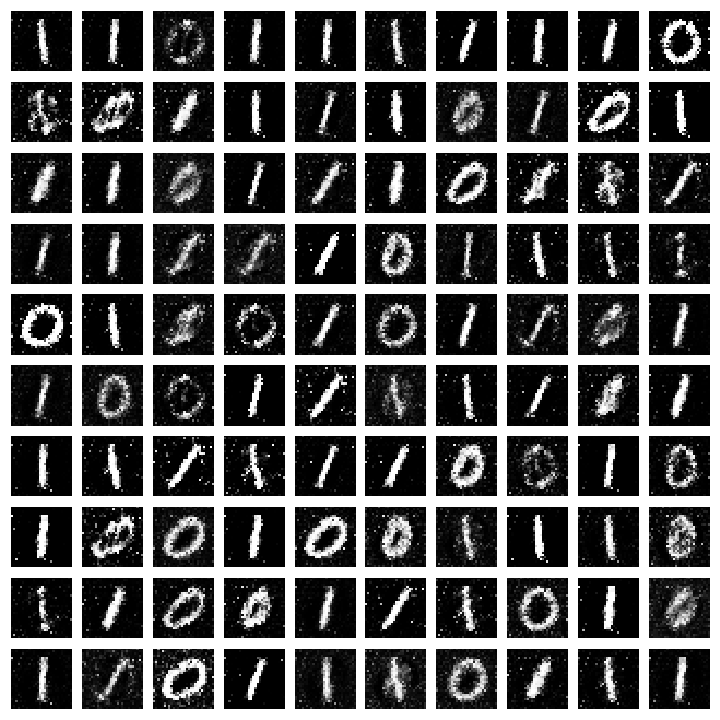}}
\fbox{\includegraphics[trim={0 13cm 13cm 0},clip, scale=0.125]{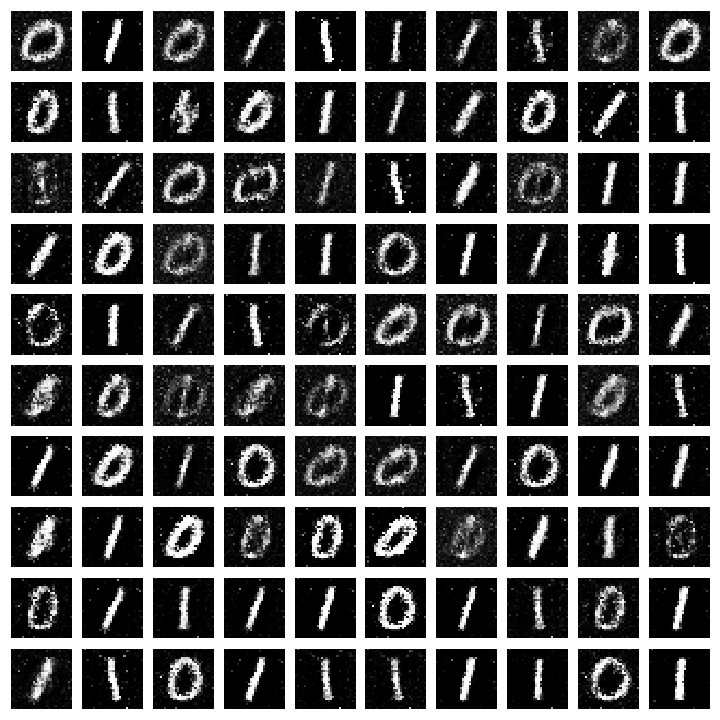}}
\caption{\small Images generated at the 1000'th iteration of each of the 22 runs of our algorithm for the simulation mentioned in Figure \ref{fig:MNIST_Picky_Vs_Vanilla}.}\label{fig_01MNISTOur}
\end{figure*}

\noindent \paragraph{Full MNIST.} Next we evaluate the utility of our algorithm on the full MNIST dataset.
We trained a GAN on the full MNIST dataset using our algorithm
for 39,000 iterations (with $k=1$ discriminator steps and acceptance
rate $e^{-\frac{1}{\tau}} = \frac{1}{5}$).  We ran this simulation five times; each time the GAN learned to generate all ten
digits (see Fig. \ref{fig:MNIST_Picky_tenDigits_AllRuns} for generated
images).

\begin{figure*}
\fbox{\includegraphics[scale=0.12]{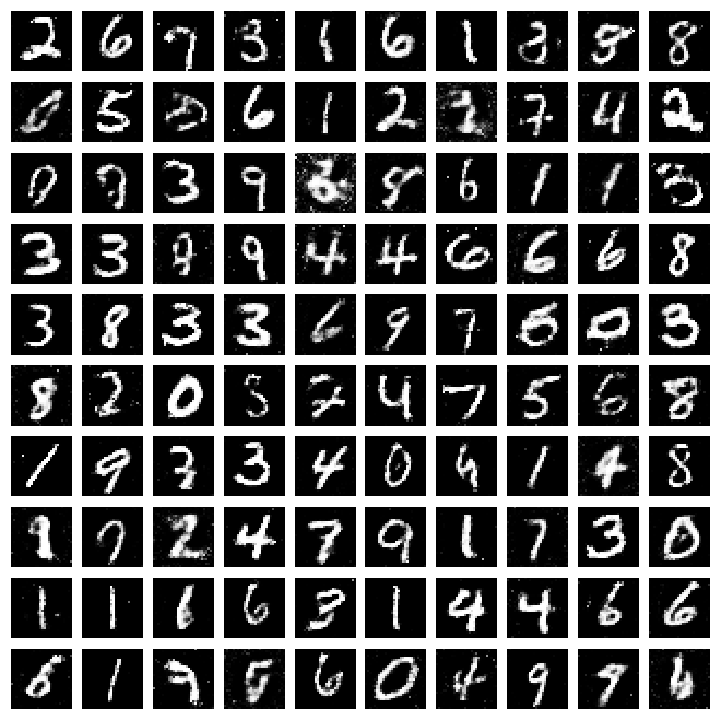}}
\fbox{\includegraphics[scale=0.12]{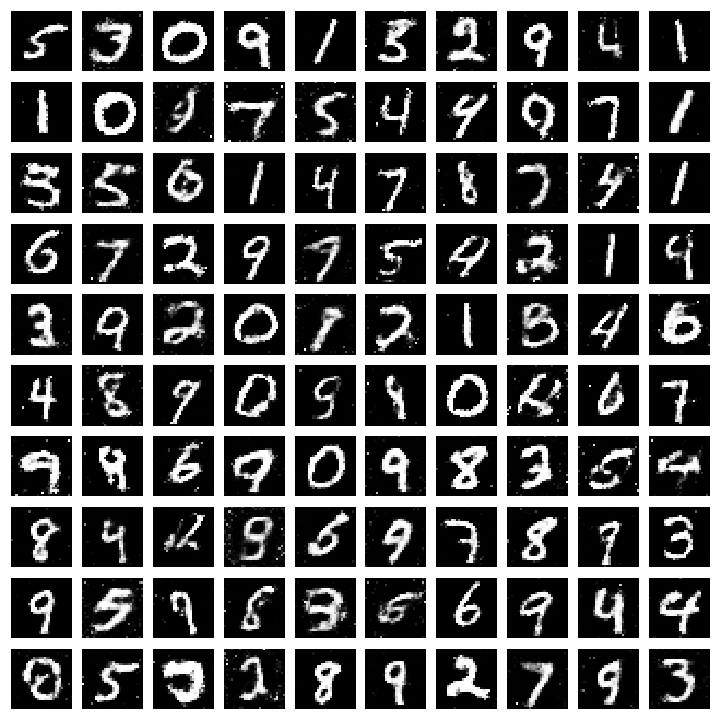}}
\fbox{\includegraphics[scale=0.12]{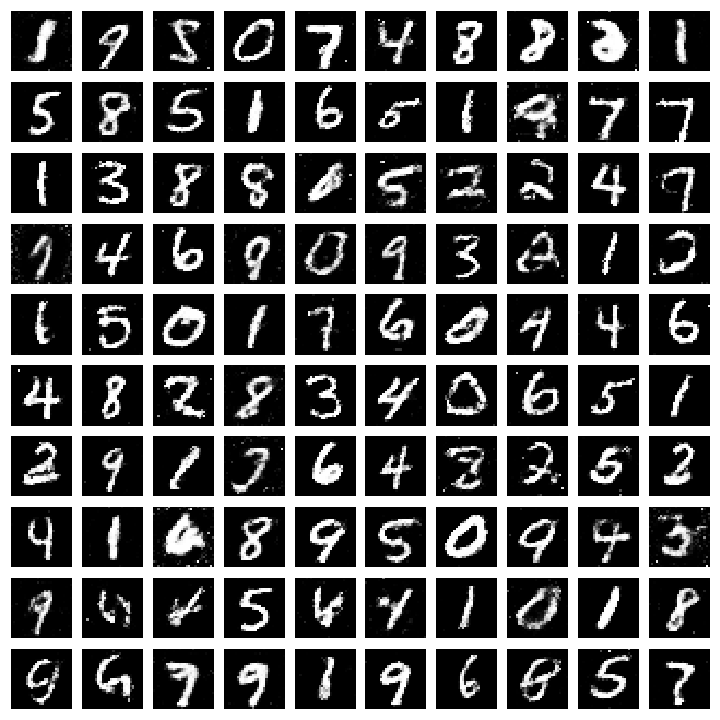}}
\fbox{\includegraphics[scale=0.12]{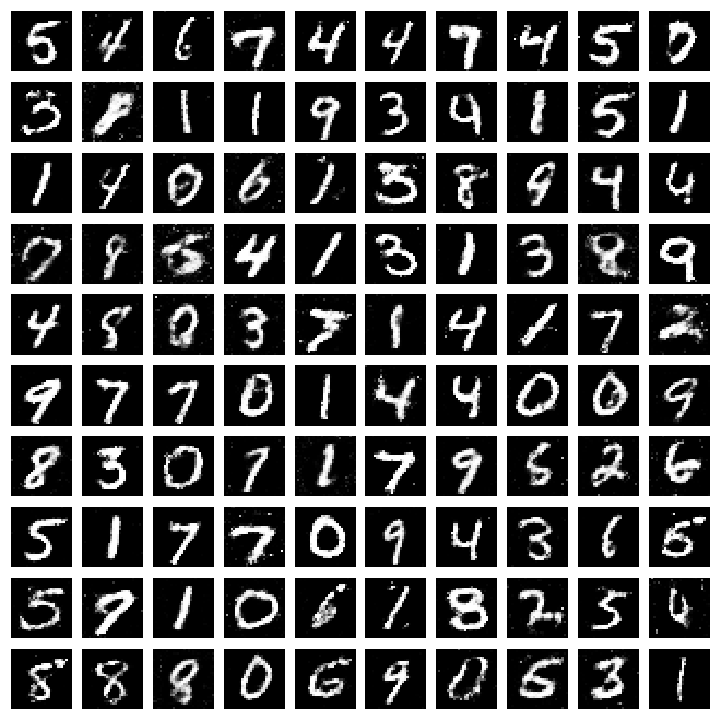}}
\fbox{\includegraphics[scale=0.12]{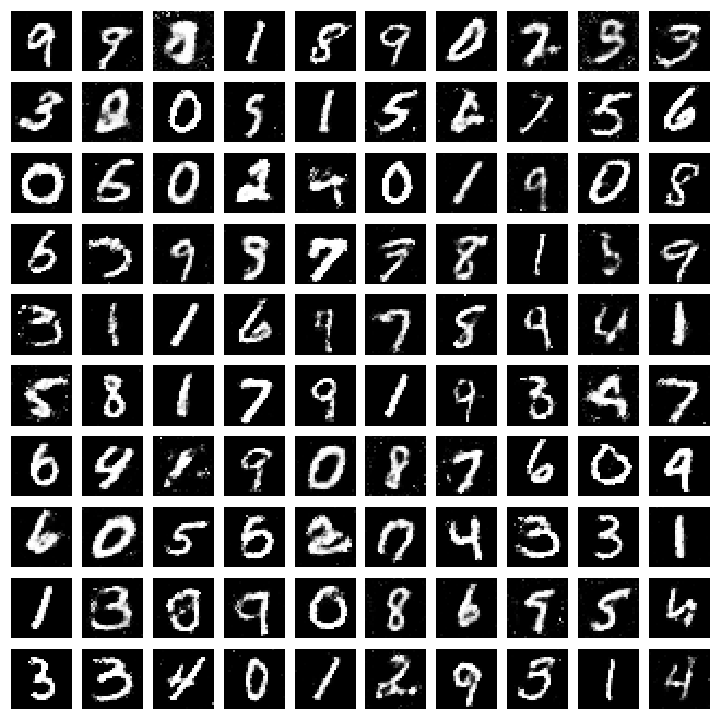}}
\caption{\small We ran our algorithm (with $k=1$ discriminator steps and acceptance rate $e^{-\frac{1}{\tau}} = \frac{1}{5}$) on the full MNIST dataset for 39,000 iterations, and then plotted images generated from the resulting generator.  We repeated this simulation five times; the generated images from each of the five runs are shown here.} \label{fig:MNIST_Picky_tenDigits_AllRuns}
\end{figure*}

\section{Randomized Acceptance Rule with Decreasing Temperature}
\label{sec:More_simulations_annealing}
In this section we give the simulations mentioned in the paragraph  towards the beginning of Section \ref{sec:experiments}, which discusses simplifications to our algorithm.  We included these simulations to verify that our algorithm also works well when it is implemented using a randomized acceptance rule with a decreasing temperature schedule (Figure \ref{fig:MNIST_annealing}).

\begin{figure*}
\centering
\noindent \includegraphics[scale=0.38]{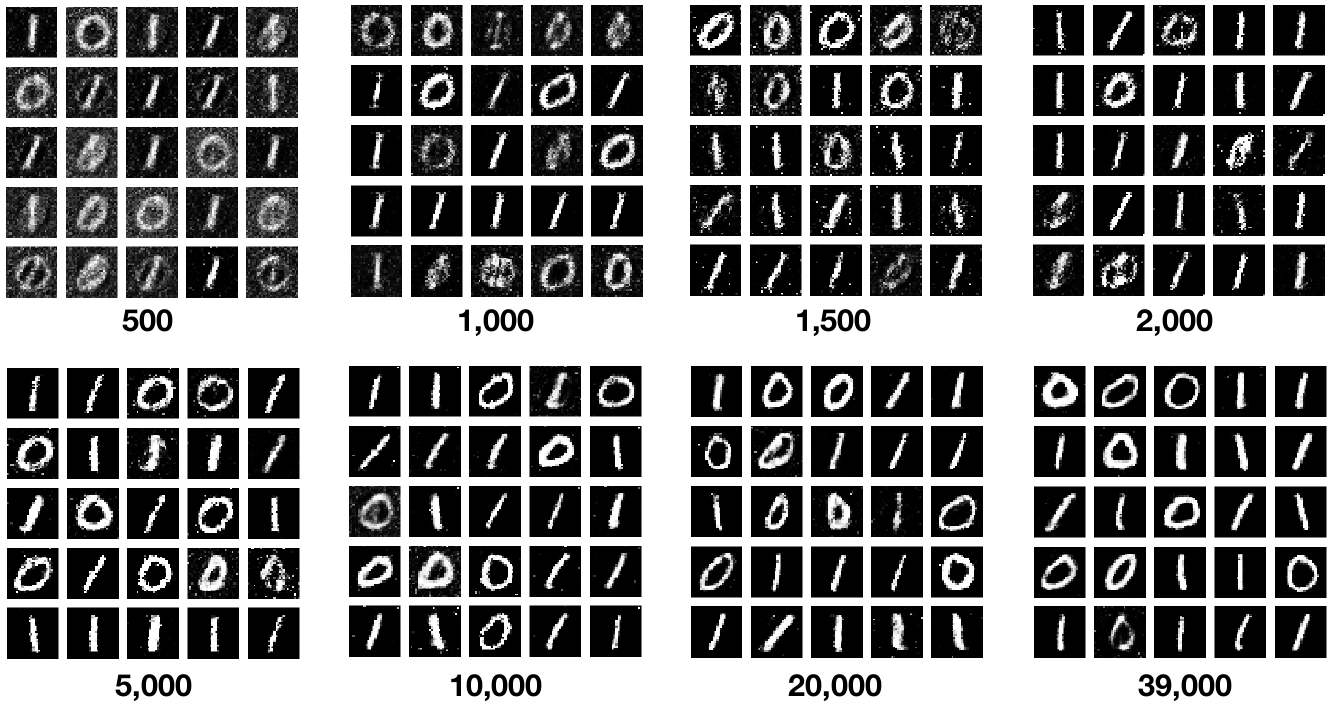}
\caption{\small 
In this simulation we used a randomized accept/reject rule, with a decreasing temperature schedule.  The algorithm was run for 39,000 iterations, with a temperature schedule of $e^{-\frac{1}{\tau_{i}}} = \frac{1}{4 + e^{(i/20000)^2}}$.  Proposed steps which decreased the computed value of the loss function were accepted with probability $1$, and proposed steps which increased the computed value of the loss function were rejected with probability $\max(0,1-e^{-\frac{i}{\tau_1}})$ at each iteration $i$.  We ran the simulation 5 times, and obtained similar results each time, with the generator learning both modes.  In this figure, we plotted the generated images from one of the runs at various iterations, with the iteration number specified at the bottom of each figure (see also Figure \ref{fig:MNIST_annealing_AllRuns} for results from the other four runs)
}\label{fig:MNIST_annealing}
\end{figure*}

\begin{figure*}
\centering
\fbox{\includegraphics[trim={0 13cm 13cm 0},clip, scale=0.17]{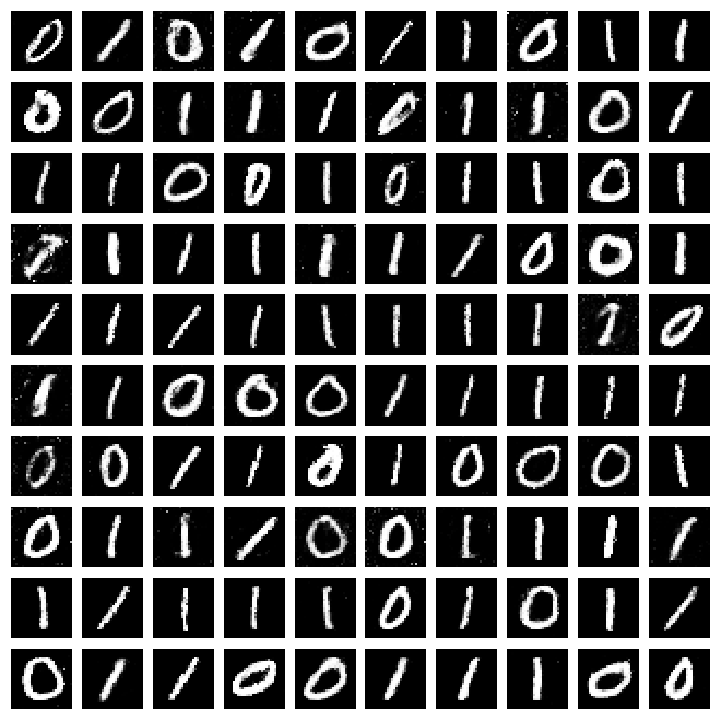}}
\fbox{\includegraphics[trim={0 13cm 13cm 0},clip, scale=0.17]{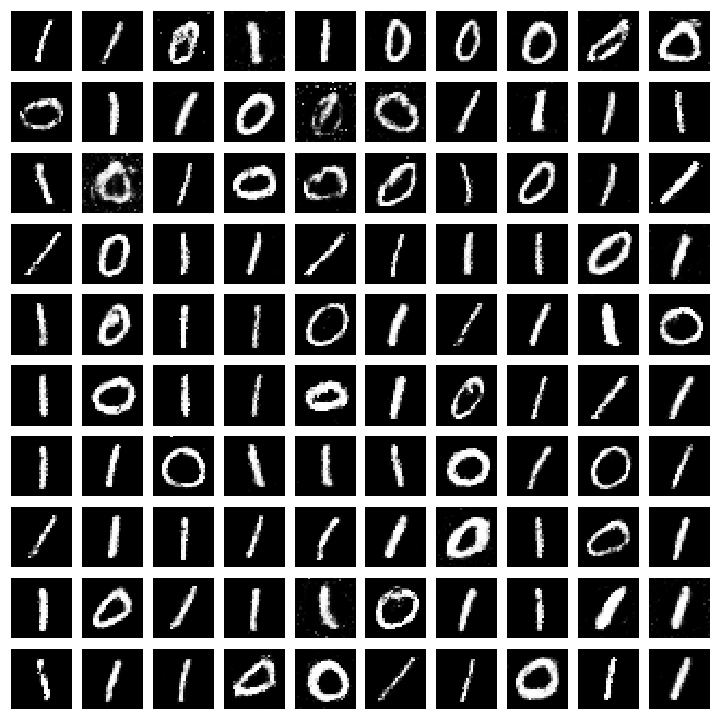}}
\fbox{\includegraphics[trim={0 13cm 13cm 0},clip, scale=0.17]{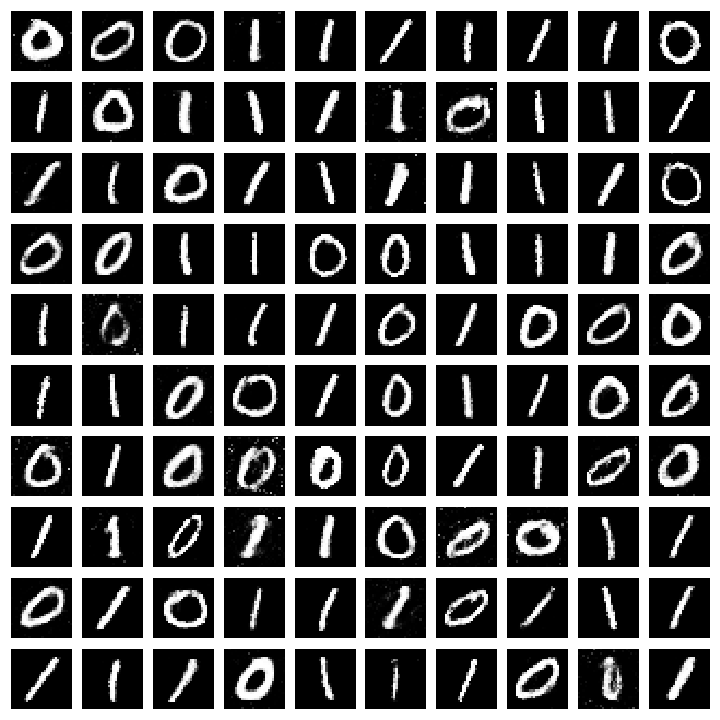}}
\fbox{\includegraphics[trim={0 13cm 13cm 0},clip, scale=0.17]{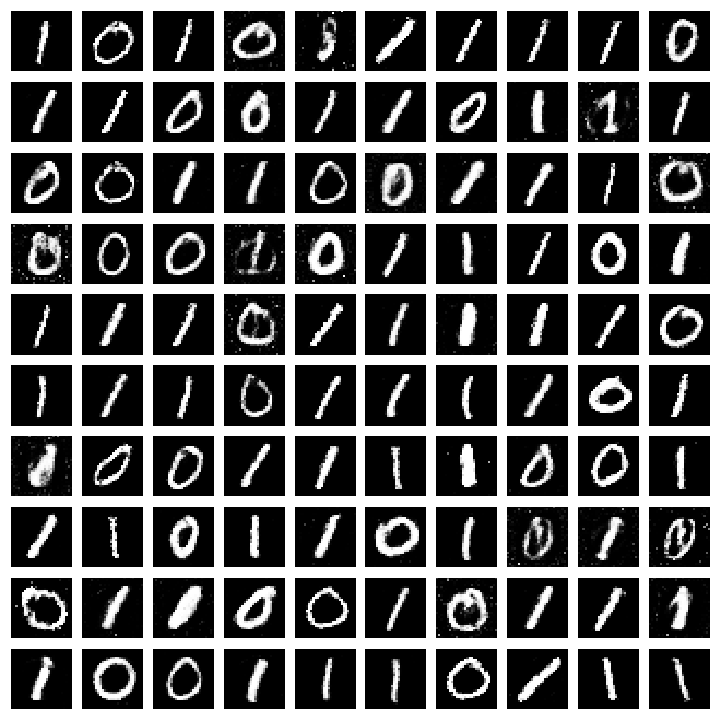}}
\fbox{\includegraphics[trim={0 13cm 13cm 0},clip, scale=0.17]{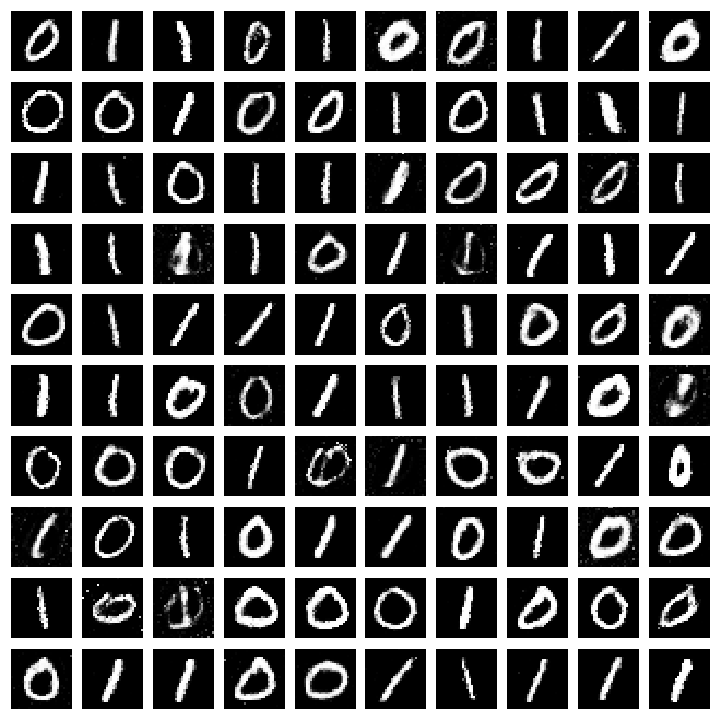}}
\caption{\small Images generated at the 39,000'th iteration of each of the 5 runs of our algorithm for the simulation mentioned in Figure \ref{fig:MNIST_annealing} with a randomized acceptance rule with a temperature schedule of $e^{-\frac{1}{\tau_{i}}} = \frac{1}{4 + e^{(i/20000)^2}}$.} \label{fig:MNIST_annealing_AllRuns}
\end{figure*}

\end{document}